	\providecommand\BibTeX{{%
			\normalfont B\kern-0.5em{\scshape i\kern-0.25em b}\kern-0.8em\TeX}}}
\newtheorem{theorem}{Theorem}
\newtheorem{lemma}{Lemma}
\newtheorem{remark}{Remark}
\newcommand{\A}{\mathbf{A}}
\renewcommand{\a}{\mathbf{a}}
\newcommand{\Q}{\mathbf{Q}}
\renewcommand{\u}{\mathbf{u}}
\renewcommand{\v}{\mathbf{v}}
\newcommand{\x}{\mathbf{x}}
\newcommand{\y}{\mathbf{y}}
\newcommand{\0}{\mathbf{0}}
\newcommand{\norm}[1]{\mbox{$\left\lVert #1 \right\rVert$}}
\newtheorem{defn}{Definition}
\newtheorem{assum}{Assumption}
\newcommand{\algname}{\textsf{SYNTHESIS }}
\newcommand{\algnamens}{\textsf{SYNTHESIS}}
\keywords{Machine learning, asynchronous distributed optimization}
\begin{document}

\title[ SYNTHESIS for Distributed Learning in Computing Clusters]{SYNTHESIS: A Semi-Asynchronous Path-Integrated Stochastic \!\!\!Gradient Method for Distributed Learning in Computing Clusters}

%
%
%
%

\author{Zhuqing Liu$^1$, Xin Zhang$^{2}$, and Jia Liu$^{1}$
}
\affiliation{
	\institution{$^1$Department of Electrical and Computer Engineering, The Ohio State University}
	\institution{$^2$Department of Statistics, Iowa State University}		
	\country{}
}

\begin{abstract}
	To increase the training speed of distributed learning, recent years have witnessed a significant amount of interest in developing both synchronous and asynchronous distributed stochastic variance-reduced optimization methods.
	However, all existing synchronous and asynchronous distributed training algorithms suffer from various limitations in either convergence speed or implementation complexity.
	This motivates us to propose an algorithm called \algname (\ul{s}emi-as\ul{yn}chronous pa\ul{th}-int\ul{e}grated \ul{s}tochastic grad\ul{i}ent \ul{s}earch), which leverages the special structure of the variance-reduction framework to overcome the limitations of both synchronous and asynchronous distributed learning algorithms, while retaining their salient features.
	We consider two implementations of \algname under distributed and shared memory architectures. 
	We show that our \algname algorithms have \(O(\sqrt{N}\epsilon^{-2}(\Delta+1)+N)\) and \(O(\sqrt{N}\epsilon^{-2}(\Delta+1) d+N)\) computational complexities for achieving an \(\epsilon\)-stationary point in non-convex learning under distributed and shared memory architectures, respectively, where \(N\) denotes the total number of training samples and \(\Delta\) represents the maximum delay of the workers. 
	Moreover, we investigate the generalization performance of \algname by establishing algorithmic stability bounds for quadratic strongly convex and non-convex optimization. 
	We further conduct extensive numerical experiments to verify our theoretical findings.

\end{abstract}

\maketitle

\section{Introduction}\label{sec:intro}

From the early days of machine learning (ML), the classical first-order stochastic gradient descent (SGD) method has been used as the workhorse training algorithm due to 
the high dimensionality of ML computing tasks and the large size of datasets.
However, it is well known that the SGD algorithm suffers from slow convergence.
To accelerate the traditional SGD approach, there have been two main approaches in the literature.
The first approach is to exploit algorithmic techniques, including momentum~\cite{polyak1964Momentum}, adaptive learning rates~\cite{george2006adaptive}, etc.
One of the most notable algorithmic acceleration approaches in recent years is the family of {\em ``variance-reduced''} (VR) methods (see, e.g., SVRG~\cite{johnson2013svrg}, SAG~\cite{schmidt2017SAG}, and SAGA~\cite{defazio2014saga} and many of their variants).
The basic idea of these VR methods is to construct accurate gradient estimators periodically by recomputing (near) full gradients 
to reduce variance.  
In the VR family, the state-of-the-art is the SPIDER method (stochastic path-integrated differential estimator) by \cite{fang2018spider} and its enhanced version called SpiderBoost~\cite{wang2018spiderboost}
(see Section~\ref{sec:related} for more in-depth discussions). 
The second major approach to accelerate SGD is to leverage the {\em parallelism} in distributed computing clusters (e.g., from chip-scale to datacenter-scale GPU farms), thanks to SGD's decomposable structure implied by the use of mini-batches. 
Notably, both {\em distributed memory} parallelism on multiple GPUs \cite{agarwal2011distributed, lian2015asynchronous, zhang2014asynchronous} and {\em shared memory} parallelism on a multi-core machine \cite{recht2011hogwild, zhao2016fast} have been exploited in SGD-based ML tasks, such as parallel SVM \cite{tavara2019svm}, parallel matrix factorization \cite{tang2015providing, yao2018scalable} and distributed deep learning \cite{wen2017terngrad}, just to name a few.

However, in the distributed and parallel computing approach for distributed ML, a key design dilemma is the architectural choices between {\em ``synchronous'' and ``asynchronous''} implementations for the distributed SGD algorithm.
A salient feature of synchronous parallel algorithms is that they have a more stable convergence performance in general.
However, synchronous implementations suffer limitations in complexity in maintaining a common clock, straggling problems, and periodic spikes in data traffic.
In comparison,
asynchronous algorithms are easier to implement and cause less network traffic congestions and delays.
However, a major limitation of asynchronous parallel algorithms is the inevitable impact of {\em stale stochastic gradient} information due to asynchronous updates.
If not treated appropriately, the stale stochastic gradient information could significantly degrade the convergence performance of the asynchronous algorithm.

The pros and cons of ``synchronous vs. asynchronous parallelisms'' in the parallel SGD implementation motivate us to pursue a new {\bf semi-asynchronous} distributed optimization method that achieves the best of both worlds while avoiding their pitfalls.
Interestingly, our {\em key idea} in addressing the problems in ``synchronous vs. asynchronous parallelisms'' (i.e., the second SGD acceleration approach) comes from the VR-based algorithmic acceleration (i.e., the first SGD acceleration approach).
%
%
%
Specifically, we show that the ``{\em double-loop}'' structure of the VR-based algorithms~\cite{johnson2013svrg,fang2018spider,wang2018spiderboost} naturally implies an elegant {\em semi-asynchronous} implementation, which entails a simple implementation as in asynchronous distributed algorithms, while providing strong convergence and generalization performance guarantees as in synchronous distributed algorithms.
%
%
%
%
This key insight enables us to develop a new distributed learning algorithm called \algname (\ul{s}emi-as\ul{yn}chronous pa\ul{th}-int\ul{e}grated \ul{s}tochastic grad\ul{i}ent \ul{s}earch).
Our main results and contributions are summarized as follows:

\begin{list}{\labelitemi}{\leftmargin=1em \itemindent=-0.09em \itemsep=.2em}
	\item We first analyze the convergence of \algname for non-convex  optimization for learning problems under the distributed memory. 
	We show that \algname achieves a computational  complexity $O(\sqrt{N}\epsilon^{-2}(\Delta+1)+N)$ in terms of stochastic first-order oracle (SFO) evaluations to find an $\epsilon$-approximate first-order stationary point (to be defined later), where $N$ is the number of training samples, and $\Delta$ denotes the maximum delay in asynchronous update across all workers. 
	Our result shows that delays in asynchrony only linearly affects the hidden constant in convergence performance and does {\em not} slow down the convergence rate order.
	Also, fewer number of training samples $N$ may speed up the running time when they reach similar training loss results (i.e., lower sample complexity). 
	
	\item Next, we analyze the convergence of \algname under the shared memory architecture.
	We show that the \algname method achieves a computational complexity $O(\sqrt{N}\epsilon^{-2}(\Delta+1) d+N)$ in terms of SFO evaluations. 
	This result reveals an interesting insight that, under shared memory, one needs to pay an additional cost that is $d$ times larger due to the restriction that only one vector coordinate can be updated at a time.
	Nonetheless, this restriction does {\em not} affect the convergence rate with respect to $\epsilon$.

	\item We further study the generalization performance of \algname under both distributed memory and shared memory architectures.
	We establish the upper bounds of the expected generalization errors of \algname for both convex and non-convex learnings.  
	Our generalization bounds show that larger step-sizes and smaller training datasets cause larger generalization errors in \algnamens. 
	These results also characterize a fundamental trade-off between convergence and generalization in distributed learning algorithms. 
\end{list}


The rest of the paper is organized as follows. 
In Section~\ref{sec:related}, we first discuss related work and preliminaries of distributed memory and shared memory architectures.
In Section~\ref{sec:model}, we present the system model, problem formulation, and basic assumptions. 
In Sections~{\large \ref{sec:distr_mem}} and~\ref{sec:shared_mem}, we analyze convergence and generalization performances of the \algname method under the distributed memory and shared memory parallelisms, respectively.
Section~\ref{sec:numerical} presents numerical results and Section~\ref{sec:conclusion} concludes this paper. 
We note that the full version of the proofs in \cite{proof}.

\section{Related Work and Preliminaries}\label{sec:related}

In this section, we first provide an overview on variance-reduced first-order methods and asynchronous distributed first-order methods.
Then, we offer some necessary background on two widely adopted parallel computing architectures, namely distributed memory and shared memory.
To define the stochastic first-order oracle (SFO) complexity, we first introduce the notion of first-order $\epsilon$-stationary point.
We say that a point $\x$ is a first-order $\epsilon$-stationary point of a function $f(\cdot)$ if the first-order gradient condition is satisfied: $\mathbb{E} [ \| \nabla f(\x) \|^2 ] \leq \epsilon^2$. 


\smallskip
\textbf{ $\bullet$ Variance-Reduced First-Order Methods:} 
The  history of the basic SGD algorithm dates back to 1951~\cite{robbins1951stochastic}. 
As mentioned in Section~\ref{sec:intro}, it is well-known that the basic SGD method achieves an $\epsilon$-approximate stationary point with an SFO  evaluation cost of $O(\epsilon^{-4})$ \cite{ghadimi2013stochastic}. 
To address this limitation, variance reduction techniques (VR) \cite{johnson2013accelerating, roux2012stochastic} have emerged in recent years to improve the convergence rate of SGD. 
Notably, SAGA~\cite{defazio2014saga}  and SVRG~\cite{johnson2013svrg} achieve an SFO complexity of $O(N^{2/3}\epsilon^{-2})$ iterations to attain a first-order $\epsilon$-stationary point for non-convex optimization. 
In the VR family, SPIDER~\cite{fang2018spider} and SpiderBoost~\cite{wang2018spiderboost} are the state-of-the-art, achieving an SFO complexity of $O(\min({\sqrt{N}\epsilon^{-2}},\epsilon^{-3}))$.

\smallskip
\textbf{ $\bullet$ Asynchronous Distributed Optimization for Learning:} 
One of the earliest asynchronous SGD-type algorithms is HOGWILD!~\cite{recht2011hogwild},  which is a lock-free asynchronous parallel implementation of SGD under the shared memory architecture with a sublinear convergence rate for strongly convex problems. 
Roughly the same time, the convergence of SGD with asynchronous gradients was studied in \cite{agarwal2011distributed}, which has an SFO complexity of $O(\epsilon^{-4})$ if the random gradient update delay is i.i.d second-moment-bounded. 
This $O(\epsilon^{-4})$ SFO complexity is the same as that of the synchronous and non-delayed case. 
Later, a coordinate-descent-based asynchronous parallel optimization algorithm termed ARock was proposed in \cite{peng2016arock}, which generalizes asynchronous SGD (Async-SGD) for solving convex problems. 
In \cite{huo2016asynchronous,reddi2015variance} an asynchronous stochastic VR-based Async-SVRG method is proposed to achieve linear convergence for bounded delays for convex problems. 
However, the computation complexity of Async-SVRG suffers from $O({n^{-2/3}\epsilon^-2})$, which is higher compared to our \algname algorithm from $O(n^{-1/2}\epsilon^{-2})$. 
Our better sample complexity result is due to the recursive structure in the asynchronous gradient estimator in our \algname algorithm.

\begin{figure}[t!]
    \begin{minipage}[t]{0.48\linewidth}
        \centering
        \includegraphics[width=1\textwidth]{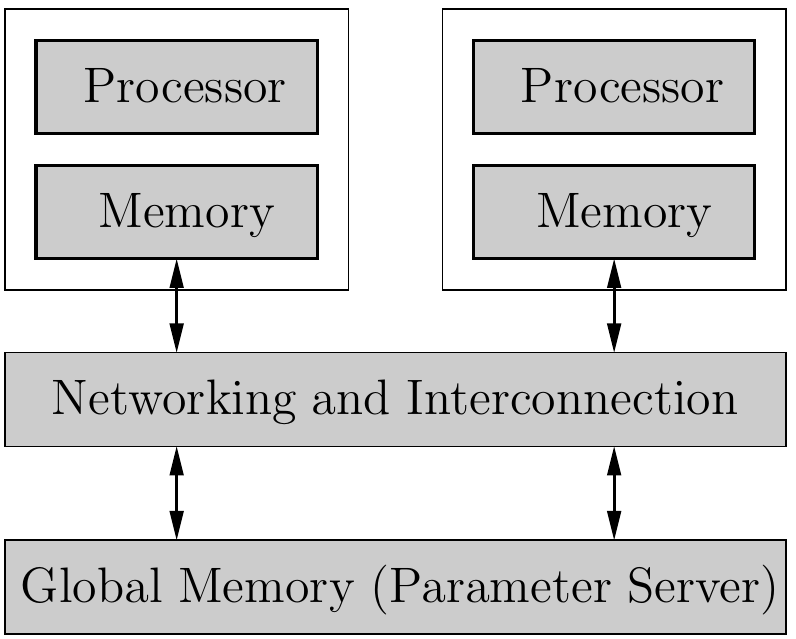}
        \caption{The distributed memory architecture.} \label{fig_distr_mem}
    \end{minipage}%
    \hspace{0.02\linewidth}
    \begin{minipage}[t]{0.48\linewidth}
        \centering
        \includegraphics[width=1\textwidth]{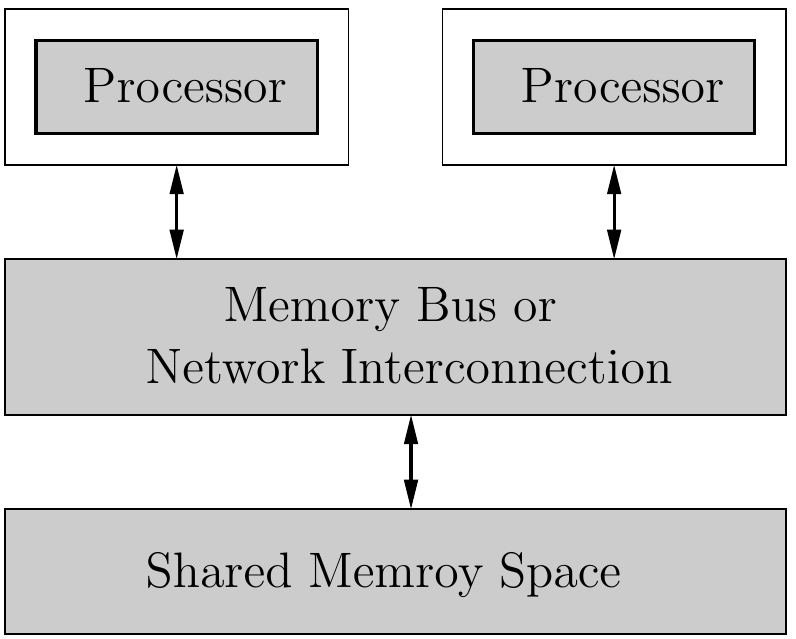}
        \caption{The shared memory architecture.}\label{fig_shared_mem}
    \end{minipage}%
    \vspace{-.2in}
\end{figure}

\smallskip
\textbf{ $\bullet$ Distributed Memory vs. Shared Memory:} 
To facilitate later discussions, we provide a brief overview on distributed memory and shared memory parallel architectures, two of the most common distributed computing architectures in practice.

\smallskip
{\em 1) Distributed Memory:}
As it name suggests, distributed memory refers to a multi-processor computing system, where each processor has its own private memory. Computational tasks can only operate on local data, and if remote data is required, the computational task must communicate with one or more remote processors. 
As illustrated in Fig.~\ref{fig_distr_mem}, a distributed memory system typically contains processors and their associated local memory, as well as some form of interconnection that allows programs on each processor to interact with each other. 
In the context of distributed ML, thanks to the independent memory, each computing unit can compute a stochastic gradient of the objective function based on local data and update {\em all coordinates} at the parameter server {\em simultaneously} without affecting other workers' operations.

\smallskip
{\em 2) Shared Memory:}
In contrast, a shared memory system offers a {\em single memory space} that can be simultaneously accessed by all processors/programs. 
Depending on context, programs may run on a single processor with multiple threads or multiple processors. 
In the context of distributed ML, the shared memory architecture is often used by a single machine with multiple cores/GPUs. 
The parameter values are stored in the shared memory and multiple threads can access them. 
Each thread reads the parameter values from the shared memory and randomly chooses a batch of samples to compute a stochastic gradient. 
Each thread then updates the current parameters with its stochastic gradient. 
However, due to the shared memory restriction, each thread can only read or write {\em a single coordinate at a time} to prevent race conditions~\cite{recht2011hogwild}.

\smallskip
With the basic notions of distributed and shared memory architectures above, we are now in a position to present our proposed \algname algorithm in Section~\ref{sec:model}.

\section{System Model, Problem Formulations and Basic Assumptions} \label{sec:model}
%

In this section, we present the system model, problem formulation and the assumptions used in this paper. 
We consider a distributed learning system with $P$ workers and a parameter server.
There are $N$ data samples in total in the global dataset, which is denoted as $\mathcal{S} = (\xi_1 ,..., \xi_N)$.
Each sample $\xi_i$ is independently and identically distributed (i.i.d.) following a latent distribution $\mathcal{D}$.
The global dataset $\mathcal{S}$ in dispersed in each worker, and each local dataset at worker $p$ is denoted as $\mathcal{S}_{p}$, with $\sum_{p=1}^{P} |\mathcal{S}_{p}| = N$.
For simplicity, we assume equal distribution and let $n \triangleq |\mathcal{S}_{p}| = N/P$.\footnote{For simplicity, we assume here that $N$ is divisible by $P$
Note that, with slightly more cumbersome notation in the analysis, all our proofs and results continue to hold in cases where $N$ is not divisible by $P$ or unequal distributions.
}
The goal of the distributed learning system is to solve an optimization problem, which is typically non-convex and in the following form: 
\begin{align*}
\min_{\x\in \mathbb{R}^d} f(\x) \triangleq  \frac{1}{N} \sum_{p=1}^{P} \sum_{i=1}^{|\mathcal{S}_{p}|} f_i(\x,\xi_i)=\frac{1}{nP} \sum_{p=1}^{P} \sum_{i=1}^{n} f_i(\x,\xi_i).
\end{align*} 
In this paper, we make the following assumptions:
\begin{assum}[Bounded Objective Function] \label{bnded_f}
	The function $f(\cdot,\cdot)$ is bounded from below, i.e., $f^* = \mathrm{inf}_{\x \in \mathbb{R}^d} f(\x,\cdot)>- \infty $.
\end{assum}
	
\begin{assum}[Continuously Differentiable Loss Function] \label{assum_diff}
	The loss function $f(\cdot,\cdot)$ is continuously differentiable.
\end{assum}	

\begin{assum}[{$M$-Lipschitz Loss Function}:] \label{assum_Lip_loss}
	$f(\cdot,\cdot)$ is $M$-Lipschitz continuous, i.e., there exists a constant $M > 0$ such that $| f(\u,\xi_i) - f(\v,\xi_i)| \leq M \|\u-\v\|$, $\forall \u,\v \in \mathbb{R}^d$, $\forall \xi_i$.
\end{assum}	

\begin{assum}[Uniformly Bounded-Size Gradient:] \label{assum_bnded_grad}
There exists a constant $M>0$ such that $\sup_{\mathcal{S}}\|\nabla f(\cdot,\xi_i)\|\leq M$. 
\end{assum}

\begin{assum}[$L$-Lipschitz Smoothness] \label{assum_Lip_grad}
The function $f(\cdot,\cdot)$ is $L$-Lipschitz smooth, i.e., there exists a constant $L>0$ such that $\norm{\nabla f(\u,\xi_i) - \nabla f(\v,\xi_i) } \leq L \norm{ \u-\v }$, $\forall \u,\v \in \mathbb{R}^d$, $\forall \xi_i$.
\end{assum}	

\begin{assum}[Bounded Delay in Asychrony] \label{assum_bnded_delay}
	The maximum of random delay $\tau$ of asynchronous stochastic gradient updates is upper bounded by a constant $\Delta >0$, i.e., $\tau \leq \Delta$.
\end{assum}

Several remarks on Assumptions~\ref{bnded_f}--\ref{assum_bnded_delay} are in order.
Assumptions~\ref{bnded_f}--\ref{assum_Lip_grad} are standard in convergence analysis in the  literature.
Assumption~\ref{assum_bnded_delay} is also a standard assumption in the asynchronous computing literature and holds in most practical computing systems.
We note that Assumptions~\ref{assum_Lip_loss} and~\ref{assum_bnded_grad} share the same constant $M$ since Assumption~\ref{assum_bnded_grad} is implied by Assumption~\ref{assum_Lip_loss} (but we state Assumption~\ref{assum_bnded_grad} explicitly for convenience in subsequent analysis).
The bounded gradient norm in Assumption~\ref{assum_bnded_grad} is often guaranteed by stability-inducing operations, e.g., regularization, projection, or gradient clipping.

\section{\algnamens: Distributed Memory} \label{sec:distr_mem}

In this section, We first propose the \algname algorithm for the distributed memory architecture to handle distributed ML problems with a large dataset that cannot fit in a single machine's storage.
We first describe our algorithm in Section~\ref{subsec:sa_spiderboost_dm} and then present its convergence analysis in Section~\ref{subsec:sa_spiderboost_dm_convergence}.
Lastly, we will analyze the generalization performance of \algname in Section~\ref{subsec:sa_spiderboost_dm_gen}.

\subsection{Algorithm Description} \label{subsec:sa_spiderboost_dm}

As mentioned in Section~\ref{sec:intro}, due to the various pros and cons in ``synchronous vs. asynchronous parallelisms,''
we pursue a new {\em semi-asynchronous} approach to achieve the best of both worlds in this paper.
Our {\em key idea} is motivated by the observation that the double-loop structure of the state-of-the-art VR-based methods~\cite{johnson2013svrg,fang2018spider,wang2018spiderboost} can be leveraged to construct a simple and elegant {\em semi-asynchronous algorithm}.
The server and worker algorithms of our \algname are presented in Algorithms~\ref{alg:sa_sb_dm_server} and~\ref{alg:sa_sb_dm_worker}.
In what follows, we take a closer look at these two algorithms.

\smallskip
{\bf 1) The Inner Loop of Algorithm~\ref{alg:sa_sb_dm_server} (Asynchronous Mode):}
From the Server Code in Algorithm~\ref{alg:sa_sb_dm_server}, we can see that the inner loop (Lines~8--9) is executed $q-1$ iterations in total.
In the inner loop of the \algname algorithm, on the worker side (Steps~3--6 of the Worker Code in Algorithm~\ref{alg:sa_sb_dm_worker}), each worker $i$ independently retrieves the freshest parameter $\x_{new}$ from the parameter server and then randomly selects a mini-batch $S$ of data samples from its local dataset and computes a stochastic gradient. 
Also, $\v_{old}$ is an unbiased gradient estimate of $\nabla f(\x_{old})$.  
Each worker immediately reports the computed stochastic gradient $\v_{new}$ to the parameter server. 
The parameter server then updates its current parameters with this stochastic gradient information {\em without waiting} for other workers, hence operating in an {\em asynchronous} mode. 
Note that, while this worker is trying to send its gradient information to the parameter server, the parameter server may have been updated by other workers with gradient information associated with a fresher $\x$-value. 
Thus, this particular worker's gradient information could turn out to be “delayed.” 

\smallskip
{\bf 2) The Outer Loop of Algorithm~\ref{alg:sa_sb_dm_server} (Synchronous Mode):}
On the other hand, at the beginning of every timeframe of length $q$, the parameter server executes the outer loop (Line~1--7 in Algorithm~\ref{alg:sa_sb_dm_server}).
Specifically, the parameter server sends out an interruption signal to all workers to cancel the unfinished computation at each worker if there is any (Step~3 in Algorithm~\ref{alg:sa_sb_dm_server}).
Upon receiving the interruption signal, every worker stops and then retrieves the freshest parameter $\x_k$ from the parameter server and uses {\em all} samples in its local dataset to compute a local full gradient and send the result to the parameter server (Step~2 in Worker Code in Algorithm~\ref{alg:sa_sb_dm_worker}).
The parameter server collects the gradient information from {\em all} workers, hence operating in a {\em synchronous} mode.
Finally, the server computes the global full gradient (Step~6 in Algorithm~\ref{alg:sa_sb_dm_server}) and sends out the most recent $\x$- and $\v$-values as $\x_{old}$ and $\v_{old}$ to all workers (Step~7 in Algorithm~\ref{alg:sa_sb_dm_server}), which are needed for the first variance-reduced operation in the first iteration of the inner loop (see Step~2 in Algorithm~\ref{alg:sa_sb_dm_worker}).
Finally, the parameter server updates the $\x$-value along the global full gradient direction (Step~10 in Algorithm~\ref{alg:sa_sb_dm_server}).

\begin{algorithm}[t!]
	\SetAlgoLined
	\SetNoFillComment
	\KwIn{$ q, K \in \mathbb{N}$}
	\For{$k=0,1,...,K-1$}{
		\If{$mod(k,q)=0$}{ 
			Send a signal to all workers to interrupt all unfinished computing jobs at each worker. \\
			Push $\x_k$ to all workers.\\
			Wait until receiving $G_k^{(p)}$, $\forall p$, from all workers.\\
			Compute full gradient $\v_k=\nabla f(\x_k)= \frac{1}{N} \sum_{p=1}^{P} G_k^{(p)}$\\
			Broadcast $\x_{old}=\x_k$ and $\v_{old} = \v_k$ to all workers.\\}
		\Else {
			Let $\v_k=\v_{new}$ be the feedback from a specific worker with delays. 		}
		Update parameter $\x_{k+1}=\x_{k}- \eta \v_k$.\\
	}
	\KwOut{$\x_{\zeta}$, where the index $\zeta$ is chosen uniformly at random from $\{1,\ldots,K\}$.}
	\caption{The Parameter Server Code of the Distributed \algname Algorithm.}
	\label{alg:sa_sb_dm_server}
\end{algorithm}

\begin{algorithm}[t!]
	\SetAlgoLined
	\SetNoFillComment
	If receiving interrupt signal at any time, go immediately to Step 2; otherwise, go to Step~3.
	
	Receive $\x_k$ form server. Compute $G_k^{(p)}=  \sum_{i\in\mathcal{S}_{p}}\nabla f(\x_{k}, \xi_i)$ and send it to the parameter server. Wait and receive $\x_{old}$ and $\v_{old}$ from the parameter server and go to Step~1.     \quad	{\em /* In the following steps, stop and go to Step~2 immediately upon receiving an interrupt signal from the server */}
	
	Pull the most fresh parameter of $\x$ as $\x_{new}$ from server. 
	
	Randomly select a subset $S$ of samples from $\mathcal{S}_p$. 
	
	Compute $\v_{new} =\frac{1}{|S|} \sum_{i\in S} (\nabla f(\x_{new}, \xi_i)-\nabla f(\x_{old}, \xi_i) + \v _{old})$ and send it to the parameter server, which takes combined $\tau_k$ time-slots in computation and communication. 
	
	Let $\x_{old}=\x_{new},  \v_{old}=\v_{new}$, go to Step~3. 

	\caption{The Worker Code of Each Worker $p$ of the Distributed \algname Algorithm.}
	\label{alg:sa_sb_dm_worker}
\end{algorithm}

\begin{figure}[t!]
	\centering
		\includegraphics[width=1\linewidth]{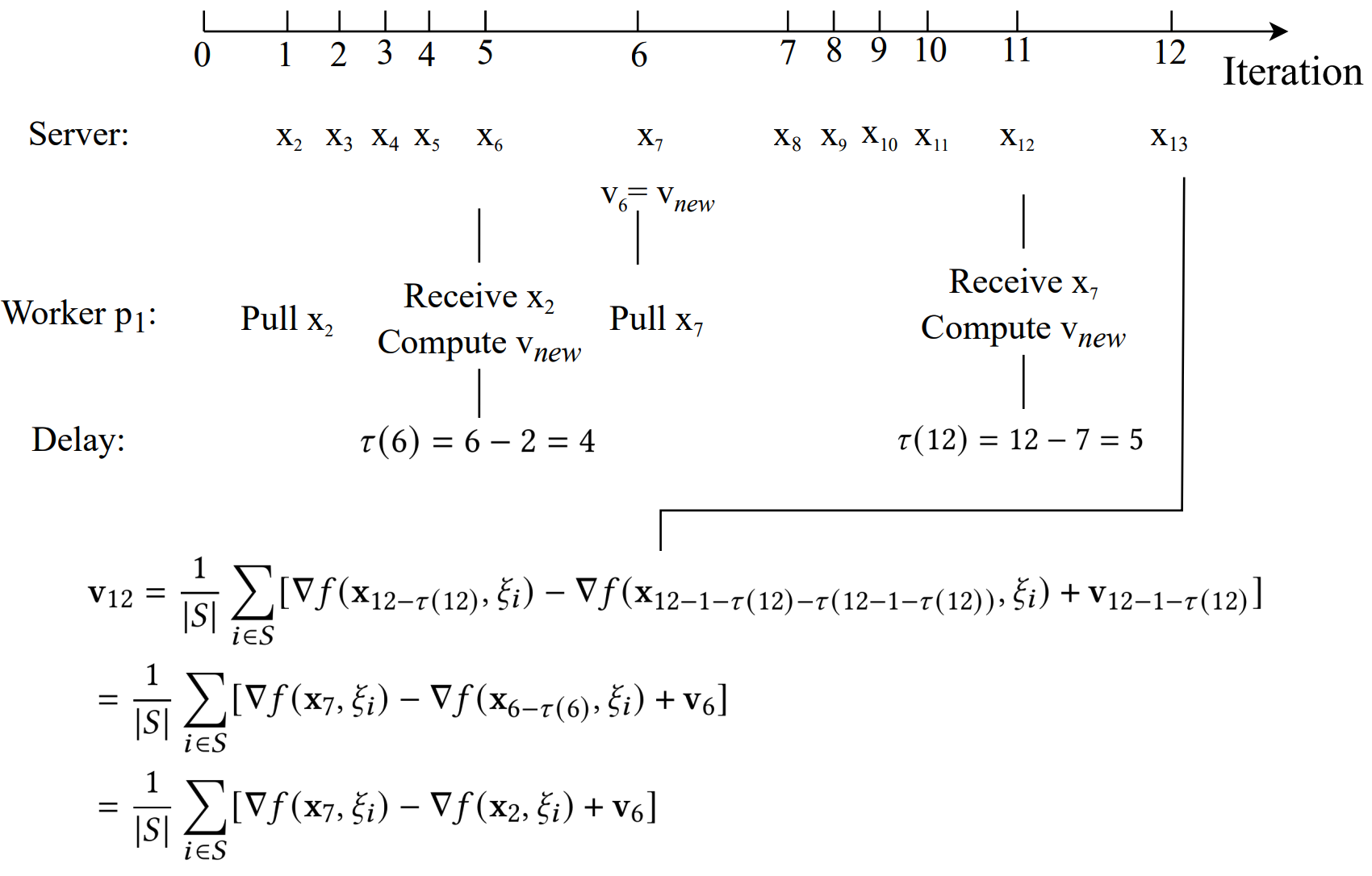} 
		\vspace{-.25in}
	\caption{An example of the stale stochastic gradients due to asynchrony in \algnamens: In the computation of direction $\v_{12}$ (for $\x_{13}$), stale stochastic gradients corresponding to $\x_{7}$ and $\x_{2}$ are used instead of $\x_{12}$ and $\x_{11}$.}
		\label{iteration} 
		\vspace{-.2in}
\end{figure}

In each iteration, parameter $\x$ is updated through the following update rule:
$
\x_{k+1}=\x_{k}- \eta \v_k
$,
where $\eta$ is a constant learning rate, $\v_k$ represents the {\em variance-reduced} update direction in iteration $k$.
Following from 
Algorithms~\ref{alg:sa_sb_dm_server} (Step~10) and \ref{alg:sa_sb_dm_worker} (Step~5), we can express the algorithmic update of \algname as follows:
\begin{align} \label{eqn_vk}
&\v_k=\v_{new} =\frac{1}{|S|} \sum_{i\in S} (\nabla f(\x_{new}, \xi_i)-\nabla f(\x_{old}, \xi_i) + \v _{old})\notag\\
&=\frac{1}{|S|} \sum_{i\in S} [\nabla f(\x_{k-\tau(k)},\xi_i)- \nabla f(\x_{k-1-\tau(k)-\tau(k-1-\tau(k))},\xi_i) \nonumber\\
&\quad \quad +\v_{k-1-\tau(k)}],
\end{align} 
where the index $i$ denotes the index of a sample, 
and $\tau(k)$ denotes the delay of stochastic gradient information used to update $\x_k$ in iteration $k$ (satisfying $\tau(k) \leq \Delta$ for any iteration $k$).
Thus, $\x_{k-\tau(k)}$  denotes the ``new'' parameter a worker uses to compute the gradient with delay $\tau(k)$ in a worker, and  
$\x_{(k-1-\tau(k))-\tau(k-1-\tau(k))}$ denotes the ``old'' parameter we used to compute the gradient with additional delay $\tau(k-1-\tau(k))$ (cf. Algorithm~\ref{alg:sa_sb_dm_worker}).  

One important remark regarding the update in \eqref{eqn_vk} is in order. 
We note that the algorithmic update in (\ref{eqn_vk}) integrates the path of the $\{\v_k\}$ trajectory (hence the name \algnamens), which shares some similarity with the class of reccurssive variance-reduced gradient estimators in the family of VR methods (e.g., SARAH~\cite{nguyen2017sarah}, SPIDER~\cite{fang2018spider}, SpiderBoost~\cite{wang2018spiderboost}, and PAGE~\cite{li2021page}).
Thus, it is insightful to compare Eq.~\eqref{eqn_vk} with these existing works that have the following recursive form (denoted as ``rec'') in the sequence $\{\v_k^{\mathrm{(rec)}}\}$:
\begin{align} \label{eqn_vk_existing}
\v_k^{\mathrm{(rec)}}=\frac{1}{|S|} \sum_{i\in S} [\nabla f(\x_{k},\xi_i)- \nabla f(\x_{k-1},\xi_i)+\v_{k-1}^{\mathrm{(rec)}}].
\end{align} 
The key difference and novelty in \algname compared to these existing recursive VR methods is the {\em presence of stale stochastic gradient information} (due to asynchrony) from the iteration $k-\tau(k)$ and its previous step in iteration $k-1-\tau(k-1)$ (see Fig.~\ref{iteration} for an example of stale stochastic gradients in \algnamens).
This asynchrony-induced staleness coupled with the recursive structure render the algorithm performance analysis significantly more complex and challenging than the aforementioned existing works~\cite{nguyen2017sarah, fang2018spider, wang2018spiderboost, li2021page}.

Upon receiving the updated gradient from a specific worker, the parameter server updates its current parameter with $\v_k$ if in the inner loop, or wait until finishing the collection of all workers' $\v$-values to compute the full gradient and update. 

\subsection{Convergence Performance Analysis} \label{subsec:sa_spiderboost_dm_convergence}

In this subsection, we first present the main convergence result of \algname under the distributed memory architecture.
Due to space limitation, we provide a proof sketch here and relegate the proof to the full version of this paper. 

\begin{theorem}[Convergence of \algname for Distributed Memory] \label{aaa}
Let $f^{*}$ denote the global optimal value.
Under distributed memory and assumptions \ref{bnded_f}-\ref{assum_bnded_delay}, for some $\epsilon >0, P\leq \sqrt{N}$, by choosing parameters $ q=|S|=\sqrt{N}$ and $\eta\leq \frac{1}{4L(\Delta+1)}$ for a given maximum delay $\Delta>0$, the \algname algorithm outputs an $\x_\zeta$ that satisfies $\mathbb{E} [ \| \nabla f(\x_\zeta) \|^2 ]\leq\epsilon^2$ if the total number of iterations $K$ satisfies 
$K = \mathcal{O} \big(  \frac{ f(\x_0)-f^* }{\epsilon^2} \big)$.
This also implies that the total SFO complexity is $\mathcal{O} (\sqrt{N}\epsilon^{-2}(\Delta+1)+N)$. 
\end{theorem}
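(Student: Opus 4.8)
The plan is to run a standard nonconvex descent argument driven by $L$-smoothness, but with the estimator-error bound carefully adapted to the stale recursive structure of \eqref{eqn_vk}. First I would invoke Assumption~\ref{assum_Lip_grad} to write the descent inequality $f(\x_{k+1}) \leq f(\x_k) - \eta\langle \nabla f(\x_k),\v_k\rangle + \frac{L\eta^2}{2}\norm{\v_k}^2$, and then split the inner product via the polarization identity $-\langle \nabla f(\x_k),\v_k\rangle = -\tfrac{1}{2}\norm{\nabla f(\x_k)}^2 - \tfrac{1}{2}\norm{\v_k}^2 + \tfrac{1}{2}\norm{\v_k - \nabla f(\x_k)}^2$. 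This isolates the estimator error $\norm{\v_k - \nabla f(\x_k)}^2$ as the only quantity that must be controlled, while the residual $-\frac{\eta}{2}\norm{\v_k}^2 + \frac{L\eta^2}{2}\norm{\v_k}^2$ will be rendered nonpositive by the step-size choice $\eta \leq \frac{1}{4L(\Delta+1)}$.

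The crux is a variance/error lemma for the path-integrated estimator under staleness. In the synchronous recursion \eqref{eqn_vk_existing} one uses conditional unbiasedness of the increment to obtain $\Eb\norm{\v_k - \nabla f(\x_k)}^2 \leq \Eb\norm{\v_{k-1} - \nabla f(\x_{k-1})}^2 + \frac{L^2}{|S|}\Eb\norm{\x_k - \x_{k-1}}^2$, which telescopes down to zero at the start of each inner loop, where the server resets $\v$ to the exact full gradient. I would establish the asynchronous analog in two parts: (i) a recursion on the error that now advances from the stale index $k-1-\tau(k)$ rather than $k-1$, contributing a displacement term of the form $\frac{L^2}{|S|}\Eb\norm{\x_{k-\tau(k)} - \x_{k-1-\tau(k)-\tau(\cdot)}}^2$; and (ii) a staleness-bias term coming from the mismatch between the point at which the gradient is evaluated, $\x_{k-\tau(k)}$, and the current iterate $\x_k$, bounded via $L$-smoothness by $L^2\norm{\x_k - \x_{k-\tau(k)}}^2$. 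Each displacement spans a window of at most $\Delta+1$ update steps, so using $\x_{j+1}-\x_j = -\eta\v_j$ together with Cauchy--Schwarz it is controlled by $\eta^2(\Delta+1)\sum_j \Eb\norm{\v_j}^2$ over that window; this is precisely where the $(\Delta+1)$ factor enters and why the step size must scale like $1/(\Delta+1)$.

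After summing the descent inequality over $k=0,\dots,K-1$ and telescoping $f$, I would collect all the $\Eb\norm{\v_k}^2$ contributions, both those from the $\frac{L\eta^2}{2}$ curvature term and those accumulated inside the error bound over each length-$q$ inner loop, and show that the choice $q=|S|=\sqrt{N}$ together with $\eta\leq \frac{1}{4L(\Delta+1)}$ makes their net coefficient nonpositive, so they may all be dropped. What remains is $\frac{\eta}{2}\sum_k \Eb\norm{\nabla f(\x_k)}^2 \leq f(\x_0)-f^*$. Dividing by $K$ and using that $\zeta$ is drawn uniformly from $\{1,\dots,K\}$ gives $\Eb\norm{\nabla f(\x_\zeta)}^2 \leq \frac{2(f(\x_0)-f^*)}{\eta K}$; setting the right-hand side to $\epsilon^2$ yields $K=\mathcal{O}\!\big((f(\x_0)-f^*)\epsilon^{-2}\big)$, with $L$ and $\Delta$ absorbed into the $1/\eta$ factor.

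Finally, the SFO count is bookkeeping: the outer loop fires $K/q$ times, each costing $N$ evaluations for the exact full gradient, i.e.\ $\frac{K}{q}N = K\sqrt{N}$; each inner iteration uses $2|S|=2\sqrt{N}$ evaluations, contributing $\mathcal{O}(K\sqrt{N})$; substituting $K=\mathcal{O}((\Delta+1)\epsilon^{-2})$ and adding the one unavoidable initial full pass of cost $N$ gives the claimed $\mathcal{O}(\sqrt{N}\epsilon^{-2}(\Delta+1)+N)$. The main obstacle I expect lies in parts (i)--(ii) of the error lemma: unlike the synchronous case, the increment $\v_k - \v_{k-1-\tau(k)}$ is only conditionally unbiased for $\nabla f(\x_{k-\tau(k)}) - \nabla f(\x_{k-1-\tau(k)-\tau(\cdot)})$ with respect to a filtration that is misaligned with the iterate index $k$, so one must choose the conditioning carefully and control the cross terms generated by overlapping delay windows before the telescoping can close.
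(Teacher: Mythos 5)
Your proposal is correct and follows the same overall strategy as the paper: a descent inequality from $L$-smoothness, a staleness-aware error recursion for the path-integrated estimator that splits the error into (i) the recursion advanced from the stale index and (ii) the bias $L^2\|\x_k-\x_{k-\tau(k)}\|^2$ from evaluating the gradient at a delayed iterate, control of each displacement over a window of at most $\Delta+1$ steps via $\eta^2(\Delta+1)\sum_j\mathbb{E}\|\v_j\|^2$, telescoping over length-$q$ epochs with exact reset of $\v$ at each outer synchronization, and the same parameter choices and SFO bookkeeping. The one place you diverge is in how the final gradient bound is assembled. You use the exact polarization identity $-\langle\nabla f(\x_k),\v_k\rangle=-\tfrac{1}{2}\|\nabla f(\x_k)\|^2-\tfrac{1}{2}\|\v_k\|^2+\tfrac{1}{2}\|\v_k-\nabla f(\x_k)\|^2$, which keeps $-\tfrac{\eta}{2}\|\nabla f(\x_k)\|^2$ inside the descent inequality so that summing and dropping the nonpositive $\|\v_k\|^2$ coefficient directly yields $\tfrac{\eta}{2}\sum_k\mathbb{E}\|\nabla f(\x_k)\|^2\le f(\x_0)-f^*$. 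The paper instead applies Young's inequality $\langle\x,\y\rangle\le\tfrac{1}{2}(\|\x\|^2+\|\y\|^2)$ to the cross term, which discards $\|\nabla f(\x_k)\|^2$ from the descent recursion; it then bounds $\tfrac{1}{K}\sum_k\mathbb{E}\|\v_k\|^2$ from the telescoped inequality, separately bounds $\mathbb{E}\|\v_\zeta-\nabla f(\x_\zeta)\|^2$ in terms of that same sum, and recombines at the end via $\|\nabla f(\x_\zeta)\|^2\le 2\|\nabla f(\x_\zeta)-\v_\zeta\|^2+2\|\v_\zeta\|^2$. Your route is slightly more direct and tighter in constants (it avoids the factor-of-two loss from the final AM--GM split), whereas the paper's route makes the two quantities $\mathbb{E}\|\v_\zeta\|^2$ and $\mathbb{E}\|\v_\zeta-\nabla f(\x_\zeta)\|^2$ explicit, which it reuses elsewhere; both yield the same $K=\mathcal{O}((f(\x_0)-f^*)(\Delta+1)\epsilon^{-2})$ and the same $\mathcal{O}(\sqrt{N}\epsilon^{-2}(\Delta+1)+N)$ SFO complexity. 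Your closing caveat about choosing the conditioning filtration carefully for the stale martingale increments is well placed; that is exactly the step the paper handles by invoking the SPIDER variance decomposition at the shifted indices before telescoping.
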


\begin{remark}{\em
Two remarks on Theorem~\ref{aaa} are in order:
i) Theorem~\ref{aaa} shows that the output of \algname achieves a first-order stationary point with total computational complexity $O(\sqrt{N}\epsilon^{-2}(\Delta+1)+N)$. 
For the special case with maximum delay $\Delta=0$ , the result of Theorem~\ref{aaa} recovers the total computational complexity $O(\sqrt{N}\epsilon^{-2}+N)$ of the state-of-the-art synchronous VR-based algorithms~\cite{fang2018spider,wang2018spiderboost}, where $N$ is the total training samples.
ii) We note that the analysis of \algname is very different from those of the synchronous VR-based algorithms. 
From a theoretical perspective, the major challenge in the proof of Theorem~ \ref{aaa} is
the asynchrony between different workers.
}
\end{remark}

\begin{proof}[Proof Sketch of Theorem~\ref{aaa}]
Here, we provide a proof sketch due to space limitation. 
To prove the stated result in Theorem~\ref{aaa}, we start with evaluating $\mathbb{E}[ \| \nabla f(\x_\zeta) \|^2 ]$ ($\zeta$ is chosen uniformly at random from $\{1,\ldots,K\}$).
Following the inequality of arithmetic and geometric means, we have:
\begin{align} \label{eqn_step1}
&\mathbb{E} [ \| \nabla f(\x_\zeta) \|^2 ]
=\mathbb{E} [ \| \nabla f(\x_\zeta)-\v_\zeta+\v_\zeta \|^2 ] \notag\\&
\leq  2\mathbb{E} [ \| \nabla f(\x_\zeta)-\v_\zeta \|^2 ]+2\mathbb{E} [ \| \v_\zeta \|^2 ].
\end{align}                                                             
Next, we bound the terms $\mathbb{E} [ \| \nabla f(\x_\zeta) - \v_\zeta \|^2 ]$ and $\mathbb{E}[ \| \v_\zeta \|^2 ]$ on the right-hand-side of (\ref{eqn_step1}) individually.

{\em Step 1) Bounding the gradient estimator bias $\mathbb{E} [ \| \v_\zeta - \nabla f(\x_\zeta) \|^2 ]$:}
Toward this end, we first bound the distance between the local update direction and the node-average gradient direction $\mathbb{E} [ \| \v_{k}-\frac{1}{N}\sum_{i=1}^{N}\nabla f(\x_{k-\tau(k)},\xi_i) \|^2 ]$ of the inner loop. 
Let $n_k=\lceil k/q \rceil$ denote the epoch index that iteration $k$ belongs to (i.e., $(n_k-1)q\leq k \leq n_kq -1$).
Then, from the inner loop operations, we can show the following relationship:
\begin{align*}
&\mathbb{E} \bigg[ \bigg\| \v_{k}-\frac{1}{N}\sum_{i=1}^{N}\nabla f(\x_{k-\tau(k)},\xi_i) \bigg\|^2 \bigg]
\leq  \frac{L^2\eta^2(\Delta+1) }{|S|} \cdot\\&
\sum_{j=(n_k-1)q}^{k-1-\tau(k)} \mathbb{E} [ \| \v_{j} \|^2 ] + \mathbb{E} \bigg[ \bigg\|\v_{(n_k-1)q}-\frac{1}{N}\sum_{i=1}^{N}\nabla f(\x_{(n_k-1)q },\xi_i) \bigg\|^2 \bigg].
\end{align*}
By using $\beta_{1}$ defined later in Step~2) and the bound on $\sum_{i=1}^{K-1} \mathbb{E} [ \|\v_{i}\|^{2} ]$, we can further bound $\mathbb{E} \big[ \big \| \v_\zeta-\nabla f(\x_\zeta) \big\|^2 \big]$ as:
\begin{multline}
\mathbb{E} \left[ \left \| \v_\zeta-\nabla f(\x_\zeta) \right\|^2 \right] \leq\left[\frac{{2L^2}(\Delta^2+\frac{q(\Delta+1)}{|S|})\eta^3}{\beta_1}+2\right]\epsilon_1^2+ \\
 {2L^2}(\Delta^2+\frac{q(\Delta+1)}{|S|})\eta^2\left[\frac{f(\x_0)-f^*}{K\beta_1}\right].
\end{multline}                                                                               

\smallskip
{\em Step 2) Bounding  the second moment of the moving direction $\mathbb{E} [ \| \v_\zeta \|^2]$:}
Consider the term $\mathbb{E} [ \| \v_\zeta \|^2]$ in (\ref{eqn_step1}). To evaluate $\mathbb{E} [ \| \v_\zeta \|^2 ]$, we start from the iteration relationship of our \algname algorithm.
From the $L$-smooth assumption, it can be shown that:
\begin{align} \label{eqn_step2}
\!\!	f(\x_{k+1}) \!\leq\! f(\x_k) \!+\! \frac{\eta}{2} \big \| \v_k \!-\! \nabla f(\x_k) \big\|^2 \!-\!(\frac{\eta}{2} \!-\! \frac{L\eta^2}{2})\big \| \v_k \big\|^2. \!\!
\end{align}
It follows from (\ref{eqn_step2}) and inductively bounding $\mathbb{E}[ f(\x_{k+1})]$ in the inner loop  $(n_k-1)q\leq k \leq n_kq -1$ that:
\begin{multline} \label{eqn_step3}
	\mathbb{E}[f(\x_{k+1})] \leq \mathbb{E}[f(\x_{(n_k-1)q})] + \!\!\!\! \sum_{i=(n_k-1)q}^k \eta\epsilon_1^2- 
	\big[ \frac{\eta}{2}-\frac{L\eta^2}{2} \\- L^2\eta^3\big(\frac{q(\Delta+1) }{|S|}+\Delta^2 \big) \big] \sum_{i=(n_k-1)q}^k\mathbb{E} \big[ \big \| \v_{i} \big\|^2 \big].
\end{multline}                                                             
Next, letting $\beta_{1} \triangleq \big[ \frac{\eta}{2}-\frac{L\eta^2}{2}- L^2\eta^3\big(\frac{q(\Delta+1) }{|S|}+\Delta^2 \big) \big]$ and inductively using (\ref{eqn_step3}), we can further drive the final upper bound of $\mathbb{E}[f(\x_{k+1})]$ as:
$
	\mathbb{E}[f(\x_K)]-\mathbb{E}[f(\x_0)] \leq 
	 -\sum_{i=0}^{K-1}(\beta_1\mathbb{E} [ \left \| \v_i \right\|^2 ])+K\eta\epsilon_1^2,
$                                                         
which further implies that:
\begin{align} \label{eqn_step9}
	&\mathbb{E} [f(\x^*)] - \mathbb{E}[f(\x_0)] \leq  -\sum_{i=0}^{K-1}(\beta_1\mathbb{E} [\left \| \v_i \right\|^2]) + K \eta \epsilon_1^2.
\end{align}
Rearranging terms in (\ref{eqn_step9}) yields:
\begin{align}
	\mathbb{E} [\left \|\v_\zeta \right\|^2] = \frac{1}{K}\sum_{i=0}^{K-1}\mathbb{E} [\left \|\v_i\right\|^2] \leq \frac{f(\x_0)-f^*}{K\beta_1}+\frac{\eta}{\beta_1}\epsilon_1^2.
\end{align}

\smallskip
{\em Step 3):}  By combining results in Steps~1) and 2) and plugging them into (\ref{eqn_step1}), we arrive at:
\begin{multline} \label{eqn_step6}
\mathbb{E} \big[ \left \| \nabla f(\x_\zeta) \right\|^2 \big] \leq \left[\frac{2}{\beta_1} \left(\eta+{2L^2} \left(\Delta^2+\frac{q(\Delta+1)}{|S|} \right)\eta^3 \right) + 4\right]\epsilon_1^2\\+ 
\left[ \frac{{4L^2}\left(\Delta^2+\frac{q(\Delta+1)}{|S|}\right)\eta^2}{K\beta_1}+\frac{2}{K\beta_1} \right]  ({f(\x_0)-f^*}).
\end{multline}                                                             
Lastly, we choose the following parameter: 
$
q=\sqrt{N}, \quad S=\sqrt{N}, \quad \eta\leq\frac{1}{4L(\Delta+1)}. 
$                                                          
Plugging the above parameters in the definitions of $\beta_{1}$, we obtain that:
\begin{align}
\beta_1&= \frac{\eta}{2}-\frac{L\eta^2}{2}- L^2\eta^3\left(\frac{q(\Delta+1)}{|S|}+\Delta^2 \right) >0.
\end{align}
For $mod(k,q)=0$, we have $\mathbb{E} \big[ \left \| \v_k-\nabla f(\x_k) \right\|^2 \big] =0$.
Then, after $K$ iterations, we have:
$$
\mathbb{E} \big[ \left \| \nabla f(\x_\zeta) \right\|^2 \big] \leq 16L(\Delta+1)\frac{(9\Delta^2+17\Delta+9)   (f(\x_0)-f^*) }{K\cdot(7\Delta^2 +13\Delta+5)} .
$$
Solving for $K$ to to ensure that $\mathbb{E} \big[ \left \| \nabla f(\x_\zeta) \right\|^2  \big]\leq \epsilon^2$, we have
\begin{align*}
K = \mathcal{O} \left(  \frac{ (f(\x_0)-f^*)(\Delta+1) }{\epsilon^2} \right).
\end{align*}
This completes the first part of the theorem.

Lastly, to show the SFO complexity, note that the number of SFO calls in the outer loops can be calculated as $\lceil \frac{K}{q} \rceil N$.
Also, the number of SFO calls in the inner loop can be calculated as $KS$.
Hence, the total SFO complexity can be calculated as:
$\lceil \frac{K}{q} \rceil N + K\cdot S \leq  \frac{K+q}{q}N + K\sqrt{N}= K\sqrt{N}+N+K\sqrt{N}=O(\sqrt{N}\epsilon^{-2}(\Delta+1) +N)$.
This completes the proof.
\end{proof}

\subsection{Generalization Performance Analysis} \label{subsec:sa_spiderboost_dm_gen}

After studying the convergence performance of \algname under the distributed memory architecture, in this subsection, we turn our attention to the generalization performance of \algname under distributed memory, i.e., how accurate a model trained by \algname is when it is fed by new data outside of the training dataset.
Formally, generalization error can be defined as follows.
Let $F_{N}(\x) \triangleq \min_{\x \in \mathbb{R}^{d}} \frac{1}{N}\sum_{i=1}^{N} f (\x;\xi_i)$ represent the finite-sample empirical risk minimization (ERM). 
The minimum empirical risk above is a sample-average proxy for the minimum population risk, i.e., $F(\x) \triangleq \min_{\x \in \mathbb{R}^{d}} \mathbb{E}_{\xi\in \mathcal{D}} f (\x;\xi)$, where the sample $\xi$ is drawn from an underlying latent distribution $\mathcal{D} $. 
Let $A$ represent an algorithm ($A$ could potentially be randomized) and let $S$ represent a training dataset used by $A$.
Then, the generalization error of the algorithm $A$ can be defined as the gap between the ERM problem and the population risk minimization problem: $ |\mathbb{E}_{S,A} [F_{N}[A(S)]-F[A(S)]|$.
Our generalization analysis is based on the notion of algorithmic stability~\cite{hardt2016train}, which is restated as follows:
\begin{defn} [Algorithmic Stability~\cite{hardt2016train}] \label{defn_alg_stab}
Let $S$ and $S'$ be two datasets that differ by at most one element. 
For $\epsilon' > 0$, an algorithm $A$ is said to be $\epsilon'$-uniformly stable if $\sup_{\xi\in\mathcal{D}} \mathbb{E}_A [f(A(S);\xi)-f(A(S');\xi)]\leq \epsilon' $, where $\mathcal{D}$ is the distribution of data sample $\xi$.
\end{defn} 
It is shown~\cite{bousquet2002stability} that if an algorithm $A$ is $\epsilon'$-uniformly stable, then the average generalization error of $A$ is bounded as $|\mathbb{E}_{S,A} [F_{N}[A(S)]-F[A(S)]] |\leq \epsilon' $.
Here, our goal is to study the stability of \algname under the distributed memory architecture.  
Recall that the update rule for \algname  is given by 
\begin{align*}
\x_{k+1}&=\x_{k}-\eta \v_k \\
&=\x_{k}-\eta \frac{1}{|S|} \sum_{i\in S} (\nabla f(\x_{k-\tau(k)},\xi_i)\\
&\quad -\nabla f(\x_{k-\tau(k)-1-\tau(k-\tau(k)-1)},\xi_i)+\v_{k-1-\tau(k)}).
\end{align*}
Consider two datasets $S=\{\xi_1,\xi_2,...,\xi_N\}$ and $S'=(\xi_1',\xi_2',...,\xi_N')$ that differ by at most one sample. 
The next two theorems state our main results on algorithmic stability (or equivalently, generalization error) of \algname for convex and non-convex loss problems, respectively.
Due to space limitation, the proof details are provided in the online technical report of this paper \cite{proof}.

\begin{theorem}[Algorithm Stability]   \label{generalization_distributed} 
Under assumptions \ref{bnded_f}-\ref{assum_bnded_delay}, 
the distributed-memory-based \algname is $(2 \eta M^2 K^2+ (2M^2K^2\eta)/N)$-uniformly stable.
In addition to the above conditions, if $f(x ;\xi_i)$ is quadratic and $\mu$-strongly convex, the distributed-memory-based \algname is $(2\eta M^2 K)/N$-uniformly stable.
\end{theorem}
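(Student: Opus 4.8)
The plan is to follow the algorithmic-stability framework of Hardt et al.~\cite{hardt2016train}. I would couple two executions of \algname on the neighboring datasets $S$ and $S'$ (which differ in a single sample $\xi_j$) under identical randomness --- the same mini-batch index sets, the same delay realizations $\tau(\cdot)$, and the same epoch boundaries --- generating iterate sequences $\{\x_k\}$ and $\{\x_k'\}$. By the $M$-Lipschitz property of the loss (Assumption~\ref{assum_Lip_loss}), the uniform stability is controlled by the terminal-iterate gap: $\sup_{\xi} \Eb_A[f(\x_K;\xi)-f(\x_K';\xi)] \le M\,\Eb[\norm{\x_K-\x_K'}]$. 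Thus the whole problem reduces to tracking the growth of $\delta_k \triangleq \norm{\x_k-\x_k'}$.

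Next I would write the one-step difference $\x_{k+1}-\x_{k+1}' = (\x_k-\x_k') - \eta(\v_k-\v_k')$ and bound $\Eb\norm{\v_k-\v_k'}$ by unrolling the path-integrated definition \eqref{eqn_vk} back to the outer-loop full-gradient anchor $\v_{(n_k-1)q}=\nabla f(\x_{(n_k-1)q})$ of the current epoch. The discrepancy $\v_k-\v_k'$ splits into two sources: (i) a \emph{parameter} mismatch where $\x$ and $\x'$ differ, controlled by $L$-smoothness (Assumption~\ref{assum_Lip_grad}) to give terms of order $L\delta_j$; and (ii) a \emph{dataset} mismatch on the single sample $\xi_j$, controlled by the bounded-gradient assumption (Assumption~\ref{assum_bnded_grad}). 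The differing sample enters the full-gradient anchor with weight $1/N$ and enters an inner-loop mini-batch gradient with the sampling probability, which is where the $1/N$ factors in the stated bounds originate.

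For the general non-convex case there is no contraction to exploit, so I would bound $\delta_k$ crudely by summing the per-step increments. Because $\v_k$ is recursive, $\Eb\norm{\v_k-\v_k'}$ carries the accumulated history of parameter gaps within the epoch rather than a single-step term, and summing these accumulated contributions over the $K$ outer iterations produces the quadratic $K^2$ scaling, while the bounded-gradient assumption caps each elementary gradient difference by $M$; this yields the $\big(2\eta M^2K^2+2\eta M^2K^2/N\big)$ bound. For the quadratic $\mu$-strongly convex case, the crucial simplification is that each Hessian $\nabla^2 f(\cdot,\xi_i)$ is constant, so every gradient difference $\nabla f(\x_{\mathrm{new}},\xi_i)-\nabla f(\x_{\mathrm{old}},\xi_i)$ is \emph{exactly linear} in $\x_{\mathrm{new}}-\x_{\mathrm{old}}$. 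This linearity makes the path-integrated estimator telescope and collapses the stale cross-terms responsible for the compounding above; combined with the non-expansiveness of the gradient-update map under strong convexity, the accumulated perturbation becomes linear in $K$ and retains only the single $1/N$ factor from the one differing sample, giving $(2\eta M^2K)/N$.

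The main obstacle I anticipate is the interaction between the recursive variance-reduced estimator and the asynchronous staleness inside the stability recursion. Unlike vanilla SGD, where expansiveness is a clean per-step map, here $\v_k$ depends on the stale iterate $\x_{k-\tau(k)}$ and on the stale direction $\v_{k-1-\tau(k)}$, so the evolution of $\delta_k$ is not a simple one-step contraction/expansion but couples across delayed indices. Establishing the right recursion for $\Eb\norm{\v_k-\v_k'}$ through these delayed dependencies --- and, in the quadratic case, verifying that linearity forces the delayed cross-terms to telescope so that the $K^2$ collapses to $K$ and the $N$-free term vanishes --- is the technically delicate part of the argument.
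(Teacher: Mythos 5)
Your overall framework---couple the two runs on $S$ and $S'$, track $\delta_k=\x_k-\x_k'$, and convert to uniform stability via the $M$-Lipschitz loss---matches the paper. But both of your key mechanisms diverge from what the proof actually does, and in each case the step you would need is either absent or does not work as described. For the non-convex bound, the paper does \emph{not} unroll the path-integrated estimator back to the epoch anchor and does not use $L$-smoothness at all: it invokes the unbiasedness of $\v_k$ as an estimator of $\nabla f(\x_{k-\tau(k)})$ to replace $\Eb[\v_k-\v_k']$ by $\Eb[\nabla f(\x_{k-\tau(k)})-\nabla f(\x_{k-\tau(k)}')]$, and then bounds the whole gradient difference by $2M$ via Assumption~\ref{assum_bnded_grad} (plus an extra $2M/N$ for the single differing sample). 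This yields the per-step increment $\Eb[\norm{\delta_{k+1}}]\le \Eb[\norm{\delta_k}]+2\eta M+2\eta M/N$, which telescopes to a bound \emph{linear} in $K$, namely $2\eta M^2K+2\eta M^2K/N$; the $K^2$ in the theorem statement is inconsistent with the paper's own proof and appears to be a typo. Your attempt to manufacture the $K^2$ by accumulating the recursive history of $\v_k$ does not hold up on its own terms either: the estimator is re-anchored at every outer loop, so the within-epoch accumulation is capped at $q=\sqrt N$ terms, which would give an $\eta M qK$-type quantity, not $\eta MK^2$.

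For the quadratic strongly convex case you correctly identify linearity of the gradient as the key, but the decisive step is missing. The recursion one actually obtains is the \emph{delayed} linear system $\Eb[\delta_{k+1}]\le\Eb[\delta_k-\eta\A\,\delta_{k-\tau(k)}]+\epsilon''$ with $\norm{\epsilon''}\le 2\eta M/N$, and ``non-expansiveness of the gradient-update map'' does not apply because $\A$ acts on the stale iterate $\delta_{k-\tau(k)}$ rather than on $\delta_k$; nothing telescopes automatically. The paper resolves exactly this obstacle by lifting the state to $(\delta_{k+1},\delta_k,\dots,\delta_{k-\tau(k)})$, forming the companion matrix $\Q$, and showing from the characteristic relation $\lambda_{\Q}^{\tau(k)+1}(1-\lambda_{\Q})=\lambda_{\Q}\,\eta\lambda_i$ that $\max|\lambda_{\Q}|=1$ whenever $0<\eta\lambda_i\le 1$; only with this spectral-radius-one property does the forcing term $\epsilon''$ accumulate merely additively, giving $\Eb[\norm{\delta_{K+1}}]\le 2\eta MK/N$ and hence the $(2\eta M^2K)/N$ bound. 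You explicitly flag this delayed coupling as ``the technically delicate part'' but do not supply the argument---and that eigenvalue computation is precisely the content of the paper's proof, so as written your proposal has a genuine gap at the crux of the second claim.
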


\begin{remark}{\em
Two remarks on Theorems~\ref{generalization_distributed} are in order:
i) Although seemingly ideal, the quadratic strongly convex and smooth setting remains of practical interest.
For example, the square loss and linear learning model naturally fits the quadratic strongly convex and smooth setting.
On the other hand, many other learning problems nowadays (e.g., deep neural networks) adopt highly non-convex models, which can be covered by the result in Theorem~\ref{generalization_distributed}.
ii) The distributed-memory-based \algname under both convexity settings generalizes better as the number of training data samples $N$ increases or as the learning rate $\eta$ decreases.
Since a small learning rate implies slower convergence, there is a fundamental {\em trade-off between training convergence speed and generalization performance}.  
Furthermore, the stronger convexity condition leads to tighter stability bound (i.e., generalizes better). 
}\end{remark}
We note that our generalization analysis offers the first theoretical understanding of generalization performance for semi-asynchronous variance-reduced learning algorithms.
Our proof technique is different from existing works. 
The conventional idea used in existing works is to analyze the difference between $ \|\x_{k+1}-\x_{k+1}' \|$ and $ \|\x_{k}-\x_{k}' \|$, where $k$ and $k'$ denotes the outputs of the algorithm on datasets $S$ and $S' $, respectively. 
In contrast, our proof technique in Theorem~\ref{generalization_distributed} is inspired by the linear control system analysis, where we analyze the difference between $ \delta_{k+1}\triangleq\x_{k+1}-\x_{k+1}' $ and $ \delta_k\triangleq\x_{k}-\x_{k}' $ directly.
This helps us obtain a tighter stability error bound in Theorem~\ref{generalization_distributed}. 
Also, the variance reduction component, the asynchrony in the algorithm, and the non-convexity also create challenges in analyzing the stability performance of \algname. 

\begin{proof}[Proof Sketch of Theorem~\ref{generalization_distributed}]
Due to space limitation, we provide a proof sketch here and refer readers to our technical report~\cite{proof} for details.
Let $S=(\xi_1,\xi_2,...,\xi_N)$ and $S'=(\xi_1',\xi_2',...,\xi_N')$ be two adjacent datasets that differ by at most one element.
Define $\delta_k\triangleq \x_k-\x_k' $. 
Next, we structure our proof in several major steps.
We let $\x_0=\x_0'$ and start with evaluating $\mathbb{E} [\delta_{k+1}]$.

\smallskip
{\em Step 1)  Simplifying the expression of $\delta_{k+1} $:} Since $\v_k$ is an unbiased estimate of $\nabla f(\x_{k-\tau(k)})$, we have
\begin{align*}
\mathbb{E} [\delta_{k+1}] &=\mathbb{E} [\x_{k+1}] -\mathbb{E} [\x_{k+1}'] \\
& =\mathbb{E} [\delta_k] -\eta [\mathbb{E} [\v_k] -\mathbb{E} [\v_k'] ]\\
& =\mathbb{E} [\delta_k] -\eta [\mathbb{E} [\nabla f(\x_{k-\tau(k)})] -\mathbb{E} [\nabla f(\x_{k-\tau(k)}')] ].
\end{align*}
At Step $k$, with probability $1-\frac{1}{N}$, the sample is the same in $S$ and $S'$. 
Also, with probability $\frac{1}{N}$, the sample is different in $S$ and $S'$.  
Based on the update rule of $\v_k$, we have:
\begin{align} \label{eqn_nonconvex_step2}
&\mathbb{E} [\| (\delta_{k+1})\|] \leq \mathbb{E} \bigg[ \bigg \| (\delta_{k}) -\eta  \frac{1}{N}\sum_{i=1}^N\nabla f(\x_{k-\tau(k)},\xi_i) \notag\\
&+\eta \frac{1}{N}\sum_{i=1}^N \nabla f(\x_{k-\tau(k)}',\xi_i) \bigg\| \bigg] +\mathbb{E} \bigg[ \bigg \|\frac{1}{N}\eta\nabla f(\x_{k-\tau(k)}',\xi_i)  \notag\\
& -\frac{1}{N}\eta\nabla f(\x_{k-\tau(k)}',\xi_i')  \bigg\| \bigg]
\overset{(a)}{\leq}\mathbb{E} [\| (\delta_{k}) \| ]+2 \eta M+\frac{2\eta M}{N },
\end{align}
where $(a)$ follows from the bounded gradient assumption.

\smallskip
{\em Step 2) Bounding the error $\mathbb{E} [\|\delta_{k+1} \|]$: } 
For those $k$-values that satisfy $mod(k,q)=0$, we have $\|\delta_{k+1} \|=\| \x_{k+1} -\x_{k+1}' \| \leq \| \delta_{k} \|+\frac{2\eta M}{N}$. 
Also, from (\ref{eqn_nonconvex_step2}), we always have $\| \delta_{k+1} \|\leq \| \delta_{k} \|+2 \eta M+\frac{2\eta M}{N}$ for $mod(k,q) \ne 0$. 
By applying this bound inductively, we can bound $\delta_{k+1}$ using the total number $K$  iterations as:
\begin{align}
\mathbb{E} [\| \delta_{k+1} \| ] \leq  2 \eta M K+\frac{2\eta MK}{N }.
\end{align}
{\em Step 3)  Bounding the $\epsilon'$-stability of \algname in the non-convex case:}	
Lastly, it follows from the definition of algorithmic stability and the $M$-Lipschitz assumption of the loss function that our \algname algorithm has the following stability error bound:
\begin{align*}
\epsilon'\leq M\cdot\mathbb{E}\parallel \delta_{K+1}\parallel \leq  2 \eta M^2 K+\frac{2\eta M^2K}{N }.
\end{align*}
This completes the proof of stability bound with the non-convex loss function of Theorem~\ref{generalization_distributed}.
Next, we will provide the proof of the condition with quadratic strongly convex loss function.

\smallskip
{\em Step 4)  Simplifying the expression of $\delta_{k+1} $:}	
Since $\v_k$ is an unbiased estimate of $\nabla f(\x_{k-\tau(k)})$, we have:
\begin{align*}
\mathbb{E} [\delta_{k+1}] &=\mathbb{E}  [\x_{k+1}] -\mathbb{E} [\x_{k+1}'] \\
&=\mathbb{E} [\delta_k] -\eta [\mathbb{E} [\v_k] -\mathbb{E} [\v_k'] ] \nonumber\\
&=\mathbb{E} [\delta_k] -\eta [\mathbb{E} [\nabla f(\x_{k-\tau(k)})] -\mathbb{E} [\nabla f(\x_{k-\tau(k)}')] ].
\end{align*}
At Step $k$, with probability $1-1/N$, the training sample is the same in $S$ and $S'$. On the other hand, with probability $\frac{1}{N}$, the training sample is different between $S$ and $S'$. 
Next, we define 
\begin{align*}
\epsilon'' \triangleq \mathbb{E} \big[\frac{1}{N}\eta\nabla f((\x_{k-\tau(k)}',\xi_i))   -\frac{1}{N}\eta\nabla f((\x_{k-\tau(k)}',\xi_i'))  \big]. 
\end{align*}
Based on the update rule of $\v_k$ 
and  the quadratic property, we can show that:
\begin{align} \label{eqn_convex_step2}
	&	\mathbb{E}  (\delta_{k+1}) 
		\leq \mathbb{E} \big[(\delta_{k}) -\eta  \frac{1}{N}\sum_{i=1}^N \nabla f((\x_{k-\tau(k)},\xi_i)) +\eta \frac{1}{N}\sum_{i=1}^N \nonumber\\
		& \nabla f((\x_{k-\tau(k)}',\xi_i)) \big] +\mathbb{E} \big[\frac{1}{N}\eta\nabla f((\x_{k-\tau(k)}',\xi_i))   -\frac{1}{N}\eta \nonumber\\
		& \nabla f((\x_{k-\tau(k)}',\xi_i'))  \big]
		{\leq} \mathbb{E}\big[(\delta_{k}) -\eta  A (\delta_{k-\tau(k)})\big]+\epsilon''.
\end{align}

\smallskip
{\em Step 5) Bounding $\mathbb{E}[\|\delta_{k+1} \|]$: } A key novelty in theoretical analysis of this paper is that, based on the relation between $\mathbb{E} [\delta_{k}]$, $\mathbb{E} [\delta_{k-\tau(k)}]$, and $\mathbb{E} [\delta_{k+1}]$ proved in Step 1), we transform the problem to a {\em linear control system}, which lifts the state space into a higher dimensional space to  bound $\mathbb{E} [\|\delta_{k+1} \|]$ as follows:
\begin{align*}
	\left[
	\begin{matrix}
		\delta_{k+1}\\
		\delta_{k} \\
		...\\
		\delta_{k-\tau(k)+1} \\
		\delta_{k-\tau(k)} \\
	\end{matrix}
	\right]=\underbrace{ 
	\left[
	\begin{matrix}
		1 & 0 & \cdots &- \eta A & 0\\
		1&0& \cdots &0&0\\
		\vdots & \vdots & \ddots & \vdots & \vdots \\ 
		\vdots & \vdots & & \ddots & \vdots \\
		0&0& \cdots &1&0\\
	\end{matrix} \right]}_{\Q}
	\left[
	\begin{matrix}
		\delta_{k}  \\
		\delta_{k-1} \\
		...\\
		\delta_{k-\tau(k)}  \\
		\delta_{k-\tau(k)-1} \\
	\end{matrix}
	\right]+
	\left[
	\begin{matrix}
		\epsilon''\\
		0 \\...\\0\\0
	\end{matrix}
	\right].
\end{align*}

For convenience, we define the coefficient matrix as $\Q$.
We then show that the maximum eigenvalue of $\Q$ satisfies $\max(\lambda_{\Q}|)=1$ after some algebraic manipulations.
Since $\|\epsilon''\|\leq \frac{2 \eta M}{N}$, for $mod(k,q) \ne 0$ we have:
$
		\mathbb{E} [\| \delta_{k+1}\|] \leq  \mathbb{E}[ \| \delta_{k}\|] + \frac{2\eta M}{N}.
$
For those $k$-values that satisfy $mod(k,q)=0$, we can also show $\|\delta_{k+1} \| = \|\x_{k+1} - \x_{k+1}' \| \leq \| \delta_{k} \| + \frac{2\eta M}{N}$. 
By applying this bound inductively, we can bound $\delta_{k+1}$ using the total number $K$  iterations as:
\begin{align}
\mathbb{E} [\| \delta_{k+1}\| ] \leq \frac{2\eta M K}{N}. 
\end{align}

\smallskip
{\em Step 6)  Bounding the $\epsilon'$-stability of \algname in the quadratic strongly-convex case: 
}
Lastly, from the definition of algorithmic stability and the $M$-Lipschitz assumption of the loss function, our \algname algorithm has the following stability bound:
\begin{align}
\epsilon'\leq M\cdot\mathbb{E}\parallel \delta_{K+1}\parallel \leq \frac{2\eta M^2K}{N}.
\end{align}
This completes the proof of Theorem~\ref{generalization_distributed}.
\end{proof}

\section{\algnamens: Shared Memory} \label{sec:shared_mem}

In this section, we turn our attention to the \algname algorithm for the shared memory architecture.
We will first describe our algorithm in Section~\ref{subsec:sa_spiderboost_sm} and then present its convergence results in Section~\ref{subsec:sa_spiderboost_sm_convergence}.
Lastly, we will analyze the generalization performance of \algname for the shared memory architecture in Section~\ref{subsec:sa_StoRe_sm_gen}.

\subsection{Algorithm Description} \label{subsec:sa_spiderboost_sm}

Recall that the shared memory architecture usually models cases where a single machine with multiple cores/GPUs sharing the same memory. 
In the shared memory architecture, we have the parameter $\x$ stored in the shared memory space, and there are $P$ threads that can access it. 
Each thread reads the freshest value of $\x$, denoted as $\x_{new}$, from the shared memory. 
Then, each thread randomly chooses any mini-batch $S$ of samples and locally computes a stochastic gradient 
$$\frac{1}{|S|} \sum_{i\in S} (\nabla f(\x_{new},\xi_i)-\nabla f(\x_{old},\xi_i)+\v_{old}).$$
All threads are allowed equal access to the shared memory and can update each individual component at will. 
Each thread updates its current parameters based on the asynchronous stochastic gradient information. 
Note that, while a single thread updates the parameters in the shared memory, the parameters may have been updated by other threads with their stochastic gradient information based on more recent $\x$-values. 
Hence, this update could turn out to be “delayed.” 
To avoid race conditions in the shared memory, only a single coordinate of $\x$ can be updated at a time~\cite{recht2011hogwild}.
Let $[\a]_i$ represent the $i$-th index of vector $\a$.
Hence, in each iteration, the update of parameter $\x$ can be written as follows:
\begin{align*}
[\x_{k+1}]_{m_k}=[\x_{k}]_{m_k}- \eta [\v_{k}]_{m_k},
\end{align*} 
where $m_k\in \{1,2,...,d\}$ represents the updated coordinate in $\x$ in iteration $k$, $\eta$ is a constant learning rate, $\v_k$ represents the variance-reduced gradient, which can also be written as following similar arguments as in Eq.~\eqref{eqn_vk}:

\begin{align*}
\v_k=\frac{1}{|S|} \sum_{i\in S} (\nabla f(\x_{k-\tau(k)},\xi_i)&-\nabla f(\x_{k-\tau(k)-1-\tau(k-\tau(k)-1)},\xi_i)\notag\\&+\v_{k-1-\tau(k)}). 
\end{align*}

We illustrate the \algname algorithm for the shared memory architecture in Algorithm~\ref{alg:sa_spiderboost_sm}.
Note that, compared to the shared-memory \algname algorithm, the algorithmic structure of Algorithm~\ref{alg:sa_spiderboost_sm} is similar.
The major differences  are:
i) there is no separation of server and worker codes due to the fact that only a single machine executes the code;
and ii) only one coordinate can be updated at a time to avoid race conditions (cf. Step~13).

\begin{algorithm}[t!] 
	\SetAlgoLined
	\SetNoFillComment
	\KwIn{$ q, K \in \mathbb{N}$}
	\For{$k=0,1,...,K-1$}{
		\If{$mod(k,q)=0$}{
			Compute full gradient $\v_k=\nabla f(\x_k)=\frac{1}{N} \sum_{i=1}^{N}\nabla f(\x_{k} ,\xi_i)$\\
			Set $\x_{old}=\x_k$ and $\v_{old} = \v_k$.\\			
			Send a signal to all threads to interrupt all unfinished  jobs at each thread. \\
		}
		\Else {
			{\em /* Parallel Computation on muliple Threads: */}\\
			If receiving interrupt signal at any time, go immediately to Step~9.
			
	    		Read the parameter $\x_{old}$ and $\v_{old}$ from the shared memory. Set the most fresh parameter of $x$ as $\x_{new}$.
			 
	    		Select a subset $S$ of samples from $N$ samples uniformly at random. 
			
			Compute $ \v_{new} =\frac{1}{|S|} \sum_{i\in S} (\nabla f(\x_{new},\xi_i)-\nabla f(\x_{old},\xi_i)+\v _{old})$,  which takes combined $\tau_k$ time-slots in computation and communication.
			
			Let $\x_{old}=\x_{new}$ and  $\v_{old}=\v_{new}$.
		}
	Select $m_k$ from $\{1,...,d\}$ uniformly at random; Update $(\x_{k+1})_{m_k}=(\x_{k})_{m_k}- \eta (\v_{old})_{m_k}$.
	}
	\KwOut{$\x_{\zeta}$, where the index $\zeta$ is chosen uniformly at random from $\{1,\ldots,K\}$.}
	\caption{The \algname Algorithm for the Shared Memory Architecture.}
	\label{alg:sa_spiderboost_sm}
\end{algorithm}

\subsection{Convergence Performance Analysis} \label{subsec:sa_spiderboost_sm_convergence}

In this subsection, we first present the main convergence result of \algname under the shared memory architecture in Theorem~\ref{bbb}.
Due to the similarity to the proof of Theorem~\ref{aaa} and space limitation, we omit the proof here and relegate the proof of Theorem~\ref{bbb} to our online technical report~\cite{proof}.

\begin{theorem}[Convergence of \algname for Shared Memory] \label{bbb}
Let $f^{*}$ denote the global optimal value.
Under shared memory and assumptions \ref{bnded_f}-\ref{assum_bnded_delay}, for some $\epsilon >0$, by choosing parameters $ q=|S|=\sqrt{N}$ and $\eta\leq\frac{1}{2L(\Delta+1)}$ for a given maximum delay $\Delta>0$, the \algname algorithm outputs an 
$\x_\zeta$ that satisfies $\mathbb{E}[ \| \nabla f(\x_\zeta) \|^2 ]\leq\epsilon^2$ if 
$K = \mathcal{O} \big(  \frac{ f(\x_0)-f^* }{\epsilon^2} \big)$, which also implies an $\mathcal{O} (\sqrt{N}\epsilon^{-2}(\Delta+1)d+N)$ total SFO complexity. 
\end{theorem}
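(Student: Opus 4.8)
The plan is to mirror the three-step argument of Theorem~\ref{aaa}, since the outer-loop (synchronous full-gradient) and inner-loop (asynchronous recursive estimator) structure is unchanged; the only genuinely new ingredient is the single-coordinate write $(\x_{k+1})_{m_k}=(\x_k)_{m_k}-\eta(\v_{old})_{m_k}$ with $m_k$ drawn uniformly from $\{1,\dots,d\}$ (Step~13 of Algorithm~\ref{alg:sa_spiderboost_sm}). First I would record that, conditioned on the history through $\v_k$, the coordinate average of the update satisfies $\mathbb{E}_{m_k}[\x_{k+1}-\x_k]=-\tfrac{\eta}{d}\v_k$ and $\mathbb{E}_{m_k}[\|\x_{k+1}-\x_k\|^2]=\tfrac{\eta^2}{d}\|\v_k\|^2$. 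Feeding these two identities into the $L$-smoothness descent bound (the shared-memory analogue of \eqref{eqn_step2}) yields
\begin{align*}
\mathbb{E}_{m_k}[f(\x_{k+1})]\le f(\x_k)+\tfrac{\eta}{2d}\|\v_k-\nabla f(\x_k)\|^2-\Big(\tfrac{\eta}{2d}-\tfrac{L\eta^2}{2d}\Big)\|\v_k\|^2,
\end{align*}
i.e. the per-step progress is exactly that of the distributed case under the replacement $\eta\mapsto\eta/d$.

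Next I would redo Step~1) of the distributed proof (the gradient-estimator bias). Unrolling the recursion \eqref{eqn_vk} back to the epoch's synchronous full gradient, the bias decomposes into mini-batch martingale differences whose second moments are controlled by $L^2\|\x_j-\x_{j-1}\|^2$ for consecutive path points. Because consecutive iterates now differ in a single randomly chosen coordinate, taking the coordinate expectation again produces the factor $\tfrac1d$, so the bound of Step~1) becomes
\begin{align*}
\mathbb{E}\Big[\Big\|\v_k-\tfrac{1}{N}\textstyle\sum_{i=1}^N\nabla f(\x_{k-\tau(k)},\xi_i)\Big\|^2\Big]\le \tfrac{L^2\eta^2(\Delta+1)}{|S|\,d}\sum_{j}\mathbb{E}[\|\v_j\|^2]+(\text{epoch-initial term}),
\end{align*}
where the initial term still vanishes whenever $mod(k,q)=0$ since a full gradient is computed there (Step~3 of Algorithm~\ref{alg:sa_spiderboost_sm}). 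Combining this with the descent inequality exactly as in Steps~2)--3) and defining the shared-memory counterpart of $\beta_1$ by inserting the $1/d$ factors, I would obtain the analogue of \eqref{eqn_step6}. Choosing $q=|S|=\sqrt{N}$ and $\eta\le\tfrac{1}{2L(\Delta+1)}$ keeps this counterpart positive and collapses the bound to $\mathbb{E}[\|\nabla f(\x_\zeta)\|^2]=\mathcal{O}\big(\tfrac{L(\Delta+1)d\,(f(\x_0)-f^*)}{K}\big)$.

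Solving $\mathbb{E}[\|\nabla f(\x_\zeta)\|^2]\le\epsilon^2$ then gives $K=\mathcal{O}\big((f(\x_0)-f^*)(\Delta+1)d/\epsilon^2\big)$, whose $(\Delta+1)d$ is absorbed into the big-$\mathcal{O}$ constant in the statement. The SFO accounting is identical to Theorem~\ref{aaa}: the $\lceil K/q\rceil$ synchronous outer loops cost $N$ each and the $K$ inner iterations cost $|S|=\sqrt{N}$ each, so the total is $\lceil K/q\rceil N+K\sqrt{N}=\mathcal{O}(K\sqrt{N}+N)=\mathcal{O}(\sqrt{N}\epsilon^{-2}(\Delta+1)d+N)$, matching the claim and isolating the extra $d$ as the price of one-coordinate-at-a-time writes.

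I expect the main obstacle to be making the coordinate randomness interact cleanly with the asynchrony in the recursive bias bound: $\v_k$ depends on the stale iterate $\x_{k-\tau(k)}$ and its deeper-stale predecessor in \eqref{eqn_vk}, each produced by a chain of single-coordinate writes, so one must fix a conditioning order (fix the history and the realized delays, then average over the mini-batch $S$, then over the coordinates $m_j$ along the path) under which each step contributes an independent $1/d$ factor without cross-step coordinate correlations inflating the martingale variance. Verifying that the drift over a delay window of length $\le\Delta+1$ still accumulates only $\tfrac{\eta^2}{d}\sum_j\|\v_j\|^2$, rather than forfeiting the $1/d$ gain, is the delicate point, and is also what pins down the admissible step size $\eta\le\tfrac{1}{2L(\Delta+1)}$.
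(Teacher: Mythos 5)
Your proposal follows essentially the same route as the paper's own proof: it reuses the distributed-memory argument verbatim, inserting the $1/d$ factor from the single-coordinate update into both the $L$-smoothness descent inequality (giving the $\frac{\eta}{2d}-\frac{L\eta^2}{2d}$ coefficients) and the recursive bias bound (giving $\frac{L^2\eta^2(\Delta+1)}{|S|\,d}$), then redefining $\beta_1$ accordingly and carrying out the identical telescoping and SFO accounting. The paper's Lemma~\ref{lemma2} and the subsequent parameter substitution match your outline step for step, so no further comparison is needed.
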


\begin{remark}{\em
The computational complexity results in Theorem~\ref{bbb} has an extra $d$-factor compared to that in Theorem~\ref{aaa}. 
This is because, under the shared memory architecture, one is allowed to read and write only a single coordinate of $\x$ at a time to avoid race conditions, i.e., 
the update rule of parameter $\x_k$ is $(\x_{k+1})_{m_k}=(\x_{k})_{m_k}- \eta (\v_{old})_{m_k}$, where $m_k\in \{1,2,...,d\}$ is a randomly selected updated coordinate of $\x$ in iteration $k$. 
Thus, we have $\mathbb{E} [ \| {[\x_{k+1}]}_{m_k}-{[\x_k}]_{m_k}\|^2 ] \leq \frac{1}{d} \mathbb{E}[ \|\eta {[\v_{old}]}_{m_k}\|^2]$ rather than  $\mathbb{E} [\| \x_{k+1}-\x_k\|^2 ] = \mathbb{E} [\|\eta \v_{old}\|^2]$. 
This is the intuition behind the existence of an extra $d$-factor in the stated results in Theorem~\ref{bbb}.
}\end{remark}

\subsection{Generalization Performance Analysis} \label{subsec:sa_StoRe_sm_gen}

In this section, our goal is to study the algorithmic stability of \algname under the shared memory architecture.  
Recall that the update rule of \algname for shared memory is given by 
\begin{align*}
& \vspace{-.2in} [\x_{k+1}]_{m_k} \!\!=\!\! [\x_{k}]_{m_k} \!\!-\! \eta [\v_k]_{m_k} \!\!=\! [\x_{k}]_{m_k} \!\!-\! \eta \frac{1}{|S|} \! \sum_{i\in S} [\nabla f(\x_{k-\tau(k)},\xi_i]_{m_k}\\
&\vspace{-.2in} -\nabla f(\x_{k-\tau(k)-1-\tau(k-\tau(k)-1)},\xi_i)_{m_k}+(v_{k-1-\tau(k)})_{m_k}),
\end{align*}
where $m_k \in \{1,2,...,d\}$ is the updated coordinate in $\x$ in iteration $k$, and we run the update iteratively for a maximum number of $K$ iterations. 
We consider two adjacent datasets $S=\{\xi_1,\xi_2,...,\xi_N\}$ and $S'=(\xi_1',\xi_2',...,\xi_N')$ from the same space, which differ in at most one element. 
For the shared memory architecture, we obtain the following main result on the algorithmic stability (or equivalently, generalization error) for \algname:

%
%
%
%
%
%
%

\begin{theorem}[Algorithm Stability] \label{generalization_shared_Theorem} 
Under assumptions \ref{bnded_f}-\ref{assum_bnded_delay}, the shared-memory version of the \algname algorithm is $((2 \eta M^2 K)/\sqrt{d}+(2M^2K\eta)/N\sqrt{d})$-uniformly stable.
In addition to the above conditions, suppose that the loss function $f(x ;\xi_i)$ is quadratic and $\mu$ -strongly convex.
Then, the shared-memory version of the \algname algorithm is $( (2\eta M^2 K)/N\sqrt{d}))$-uniformly stable.
\end{theorem}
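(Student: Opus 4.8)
The plan is to adapt the coupling-based stability argument of Theorem~\ref{generalization_distributed} to the single-coordinate update of Algorithm~\ref{alg:sa_spiderboost_sm}, the only structural change being that each iteration modifies one uniformly random coordinate $m_k \in \{1,\dots,d\}$ rather than the whole vector. As before, I would fix two adjacent datasets $S, S'$ differing in a single sample, initialize $\x_0 = \x_0'$, set $\delta_k \triangleq \x_k - \x_k'$, and couple the two runs so that at every step they draw the same coordinate $m_k$ and the same mini-batch. The target is a bound on $\mathbb{E}[\|\delta_{K+1}\|]$; the $M$-Lipschitz loss assumption (Assumption~\ref{assum_Lip_loss}) then converts it into the uniform-stability constant via $\epsilon' \leq M\,\mathbb{E}[\|\delta_{K+1}\|]$, exactly as in the concluding steps of the distributed proof.

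For the non-convex bound, I would write the single-coordinate update as $\delta_{k+1} = \delta_k - \eta\,[\v_k - \v_k']_{m_k}\,\e_{m_k}$, where $\e_{m_k}$ is the $m_k$-th standard basis vector, so that $\|\delta_{k+1}\| \leq \|\delta_k\| + \eta\,\big|[\v_k - \v_k']_{m_k}\big|$. Averaging over the uniform coordinate choice gives $\mathbb{E}_{m_k}\big|[\v_k - \v_k']_{m_k}\big| = \tfrac{1}{d}\|\v_k - \v_k'\|_1 \leq \tfrac{1}{\sqrt{d}}\|\v_k - \v_k'\|_2$ by Cauchy--Schwarz, which is precisely where the $1/\sqrt{d}$ saving is created. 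Decomposing $\|\v_k - \v_k'\|_2$ into the same-sample part (bounded by $2M$ through Assumption~\ref{assum_bnded_grad}) and the different-sample part occurring with probability $1/N$ (bounded by $2M/N$), as in \eqref{eqn_nonconvex_step2}, yields the per-step increment $\mathbb{E}[\|\delta_{k+1}\|] \leq \mathbb{E}[\|\delta_k\|] + \tfrac{2\eta M}{\sqrt d} + \tfrac{2\eta M}{N\sqrt d}$. Summing this inductively over all $K$ iterations gives $\mathbb{E}[\|\delta_{K+1}\|] \leq \tfrac{2\eta M K}{\sqrt d} + \tfrac{2\eta M K}{N\sqrt d}$, and the $M$-Lipschitz step delivers the claimed $\big((2\eta M^2 K)/\sqrt d + (2 M^2 K \eta)/(N\sqrt d)\big)$ stability constant.

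For the quadratic $\mu$-strongly convex case I would exploit the exact linearization $\nabla f(\x_{k-\tau(k)}) - \nabla f(\x_{k-\tau(k)}') = A\,\delta_{k-\tau(k)}$ with $A$ the constant averaged Hessian, and, mirroring the strongly convex part of the distributed proof, lift the delayed recursion for the coordinate-averaged $\mathbb{E}[\delta_{k+1}]$ into an augmented state $(\delta_k,\dots,\delta_{k-\tau(k)})$ of the form $\z_{k+1} = \Q\,\z_k + \w_k$. The forcing term $\w_k$ carries only the differing-sample contribution, whose coordinate-averaged norm is at most $\tfrac{2\eta M}{N\sqrt d}$ by the same Cauchy--Schwarz estimate. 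The crux is re-establishing that $\max|\lambda_{\Q}| = 1$, so that the homogeneous dynamics are non-expansive and the $A\,\delta_{k-\tau(k)}$ contraction term does not accumulate; only the forcing term does. This collapses the recursion to $\mathbb{E}[\|\delta_{k+1}\|] \leq \mathbb{E}[\|\delta_k\|] + \tfrac{2\eta M}{N\sqrt d}$, giving $\mathbb{E}[\|\delta_{K+1}\|] \leq \tfrac{2\eta M K}{N\sqrt d}$ and hence the sharper $\big((2\eta M^2 K)/(N\sqrt d)\big)$ bound after applying $M$-Lipschitz continuity.

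The main obstacle is twofold. First, making the $1/\sqrt d$ gain rigorous requires carefully decoupling the fresh per-step coordinate randomness $m_k$ from both the mini-batch randomness and the asynchrony-induced delay $\tau(k)$ embedded in the recursive estimator $\v_k$; the step $\tfrac{1}{d}\|\cdot\|_1 \leq \tfrac{1}{\sqrt d}\|\cdot\|_2$ is elementary, but one must verify it composes across the delayed terms without silently reintroducing a factor of $d$. Second, and more delicate, is re-verifying that the lifted transition matrix $\Q$ retains spectral radius exactly $1$ once the single-coordinate and delay structure is folded into the recursion for $\mathbb{E}[\delta_{k+1}]$; this eigenvalue computation is the linchpin that keeps the homogeneous part marginally stable rather than amplifying, and is exactly what removes the $2\eta M K/\sqrt d$ term to produce the extra $1/N$ factor in the strongly convex bound.
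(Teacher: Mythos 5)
Your proposal follows essentially the same route as the paper's proof: the same coupled-run setup with $\delta_k = \x_k - \x_k'$, the same same-sample/different-sample decomposition with probabilities $1-1/N$ and $1/N$, the same lifted linear system with transition matrix $\Q$ of spectral radius $1$ for the quadratic strongly convex case, and the same inductive accumulation over $K$ iterations followed by the $M$-Lipschitz conversion to the uniform-stability constant. The only difference is that you make the $1/\sqrt{d}$ gain explicit via $\tfrac{1}{d}\|\cdot\|_1 \le \tfrac{1}{\sqrt{d}}\|\cdot\|_2$ over the uniform coordinate choice, whereas the paper simply asserts the bounds $\|\epsilon''\|\le 2\eta M/(N\sqrt{d})$ and $2\eta M/\sqrt{d}$ from Assumption~\ref{assum_bnded_grad} and the uniform selection of $m_k$.
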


\begin{figure*}[t!]
	\centering
	\subfigure[Performance on the CIFAR-10 under distr.-mem. system.]{
		\includegraphics[width=0.21\linewidth]{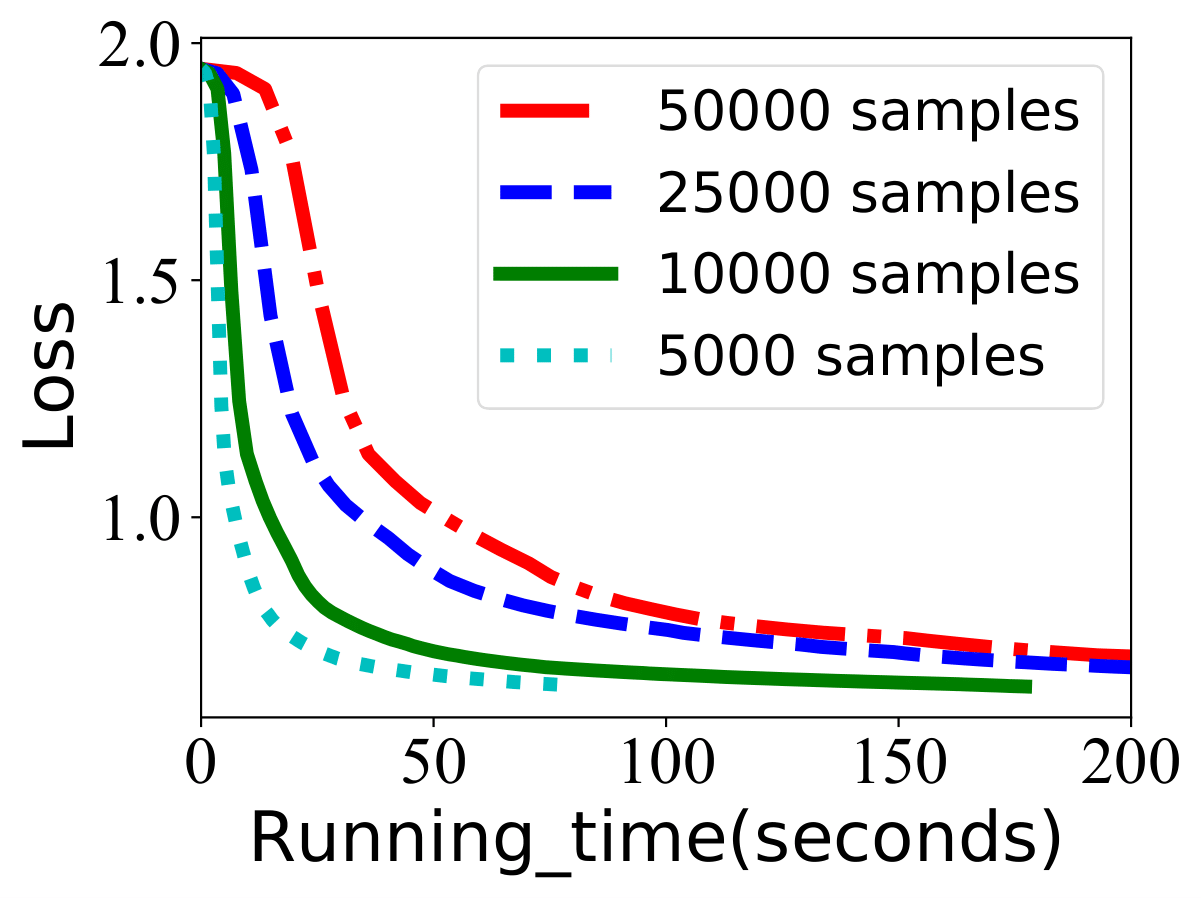} 
		\includegraphics[width=0.21\linewidth]{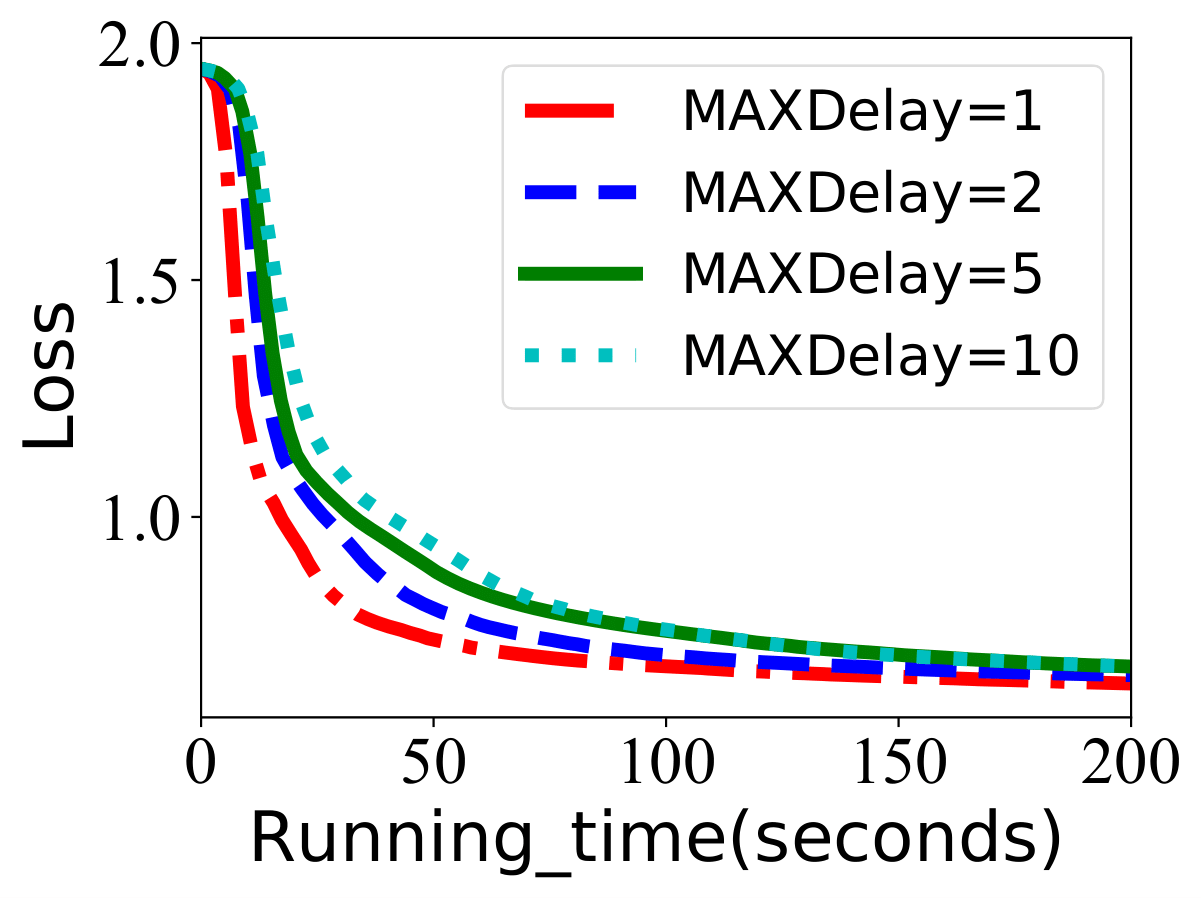}	
	}
	\subfigure[Performance on the MNIST under the shared-mem. system.]{
		\includegraphics[width=0.21\linewidth]{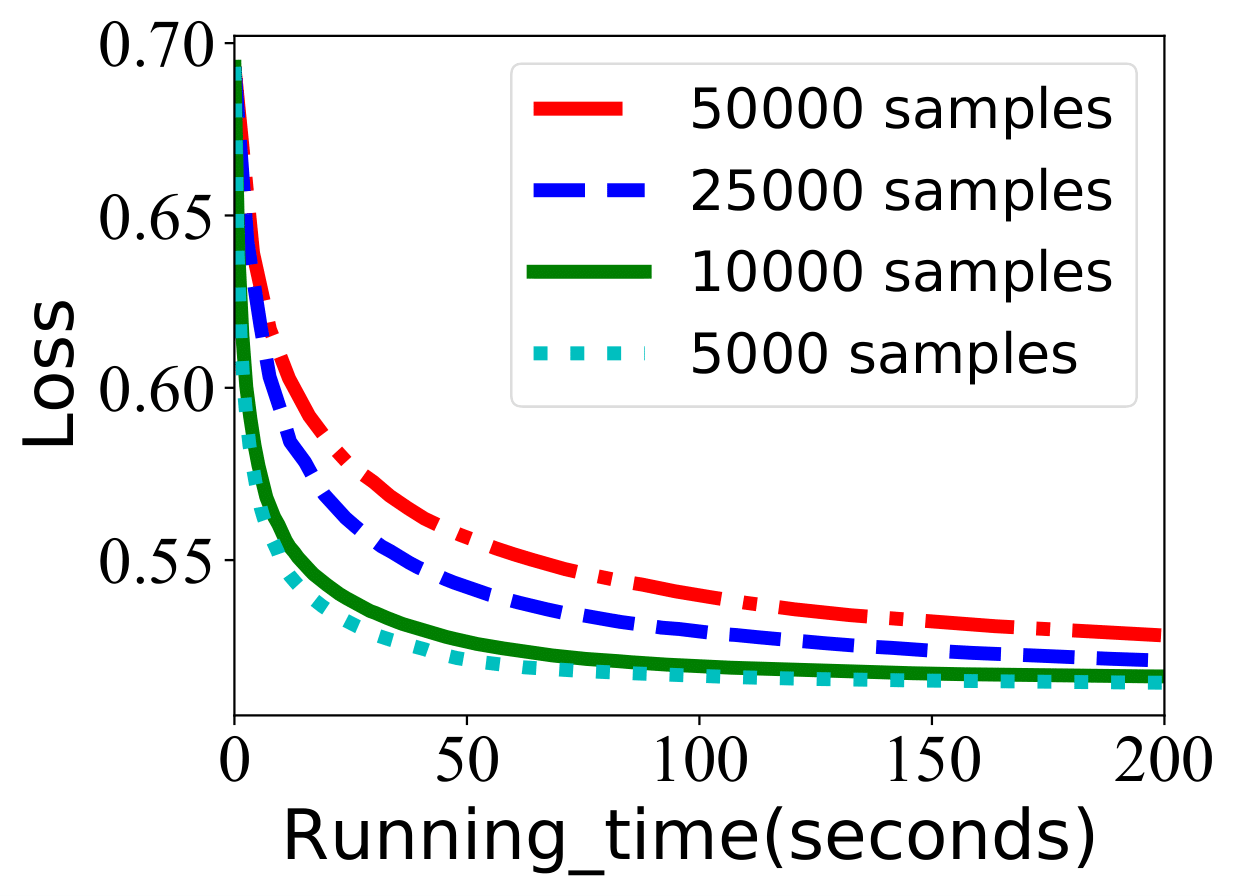} 
		\includegraphics[width=0.21\linewidth]{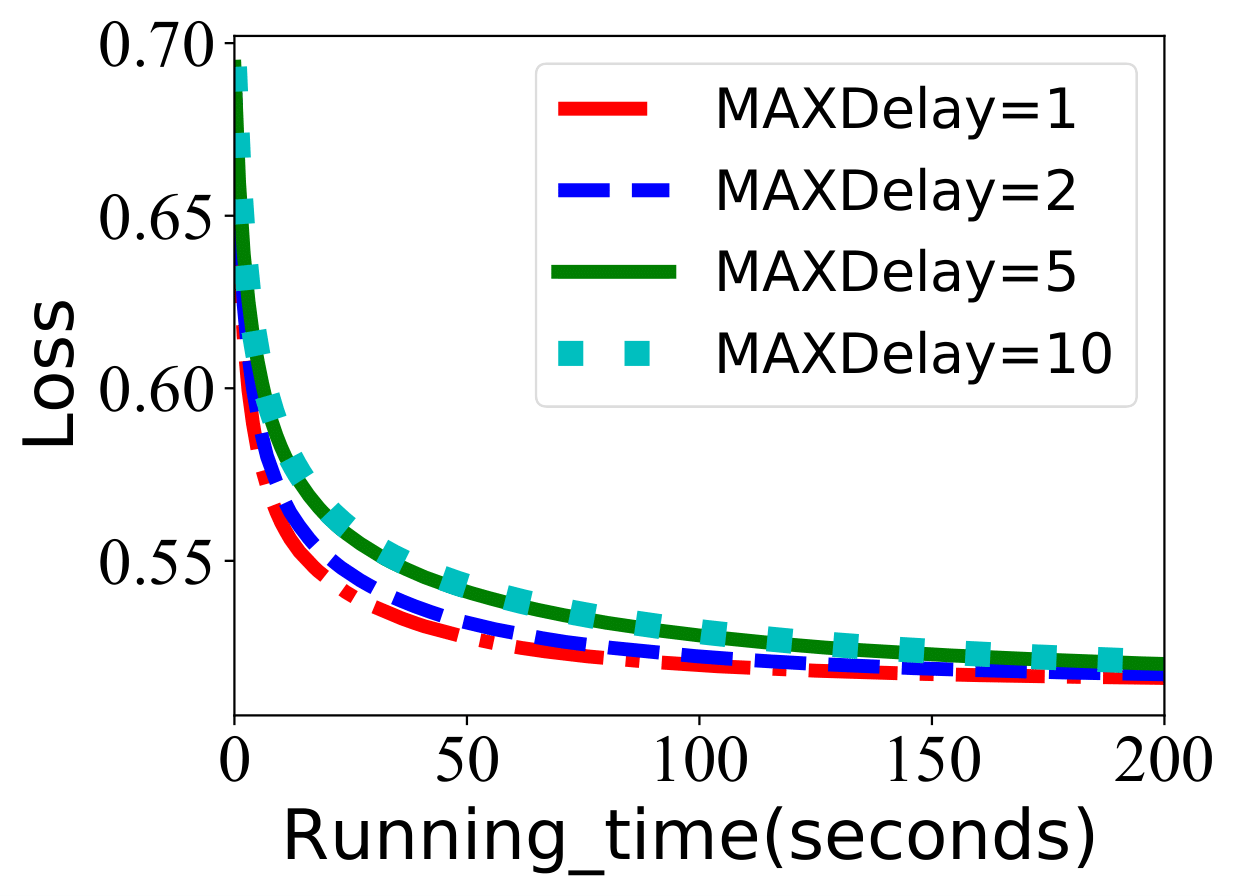}		
	}
	\caption{The convergence performance of \algname. }
	\label{fig:1} 
\end{figure*}

\begin{figure*}[h]
	\centering
	\subfigure[CIFAR-10 under distr.-mem. system.]{
		\includegraphics[width=0.2\linewidth]{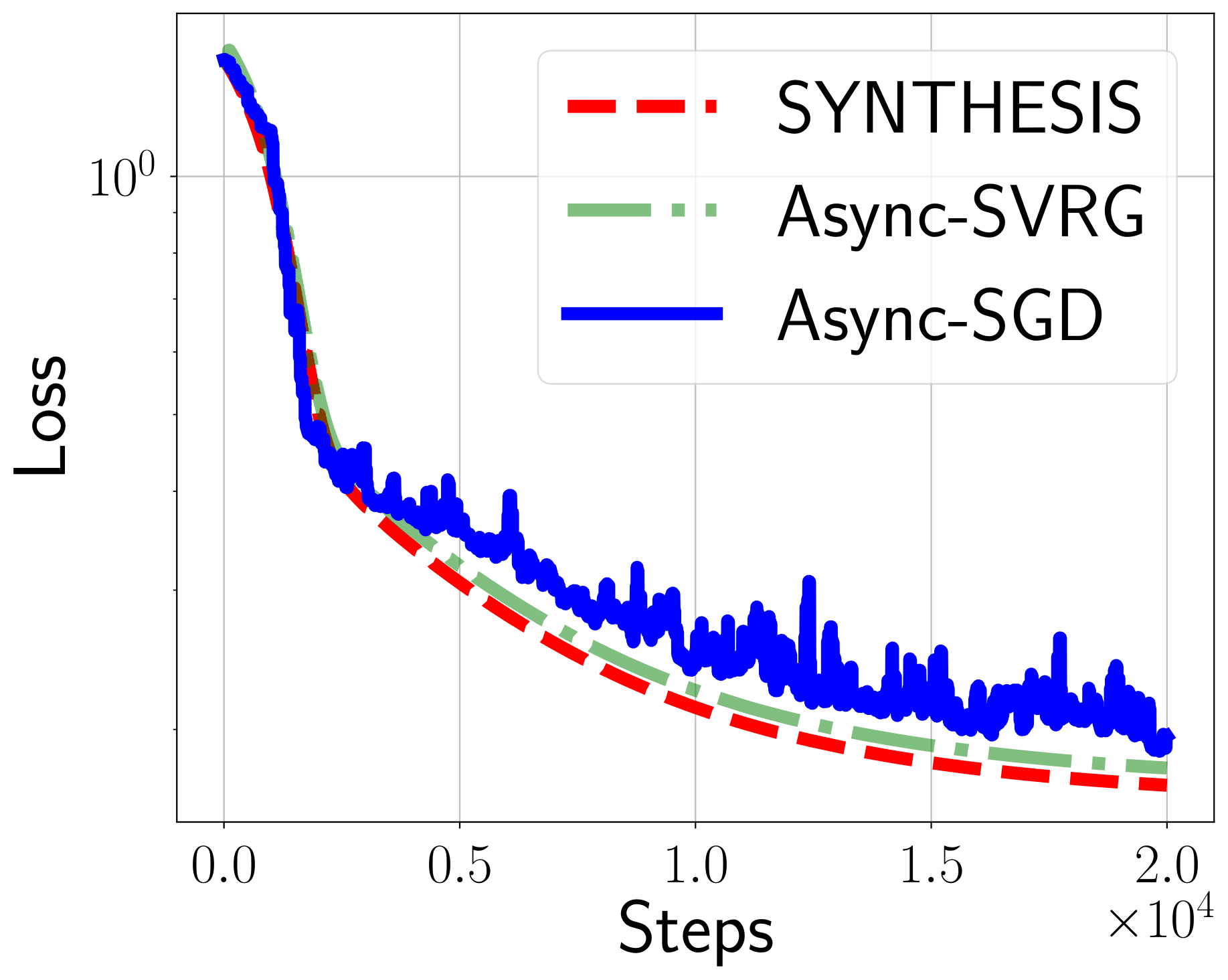} 
	}
	\subfigure[MNIST under distr.-mem. system.]{
		\includegraphics[width=0.2\linewidth]{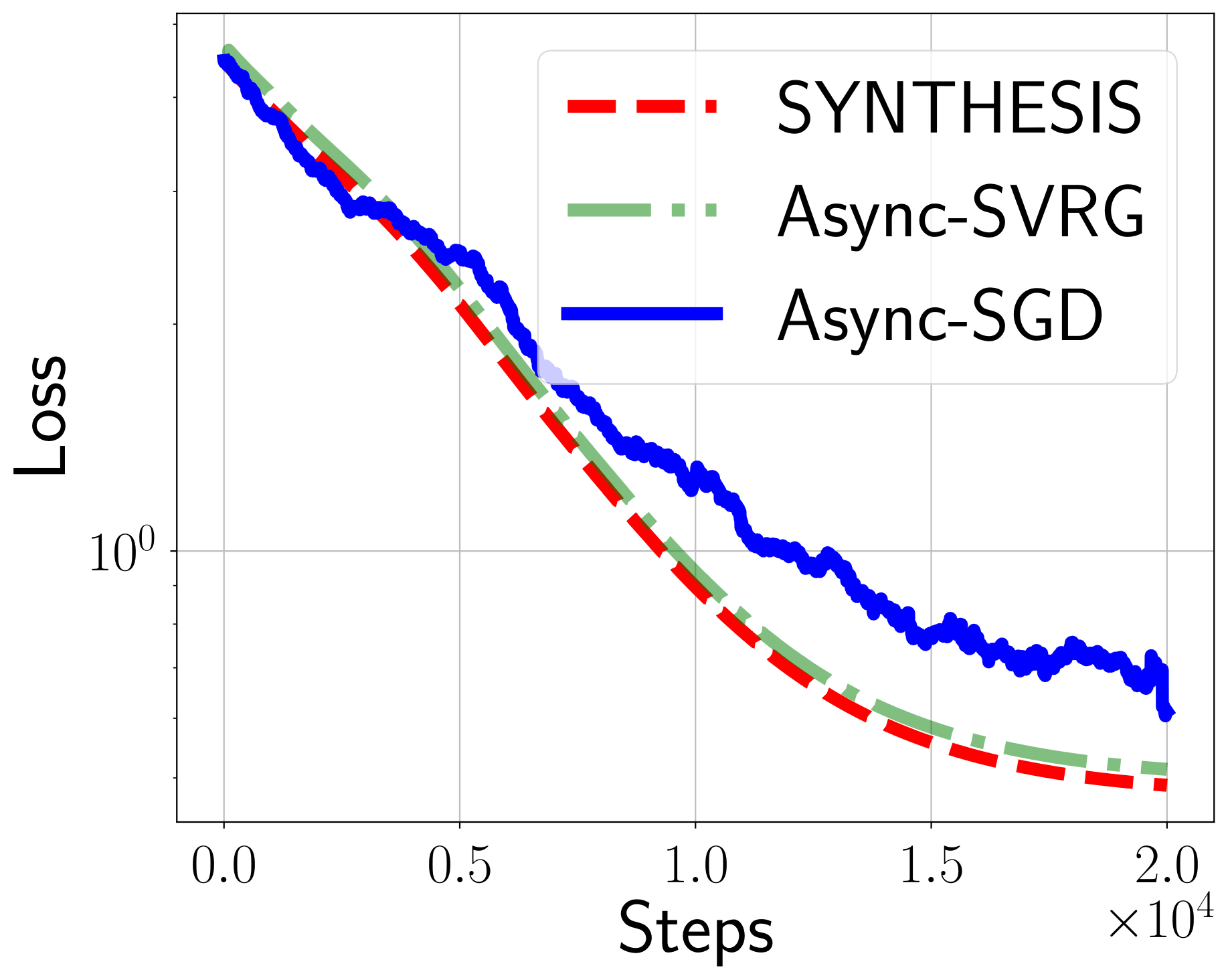}	
	}
	\subfigure[CIFAR-10 under shared-mem. system.]{
		\includegraphics[width=0.22\linewidth]{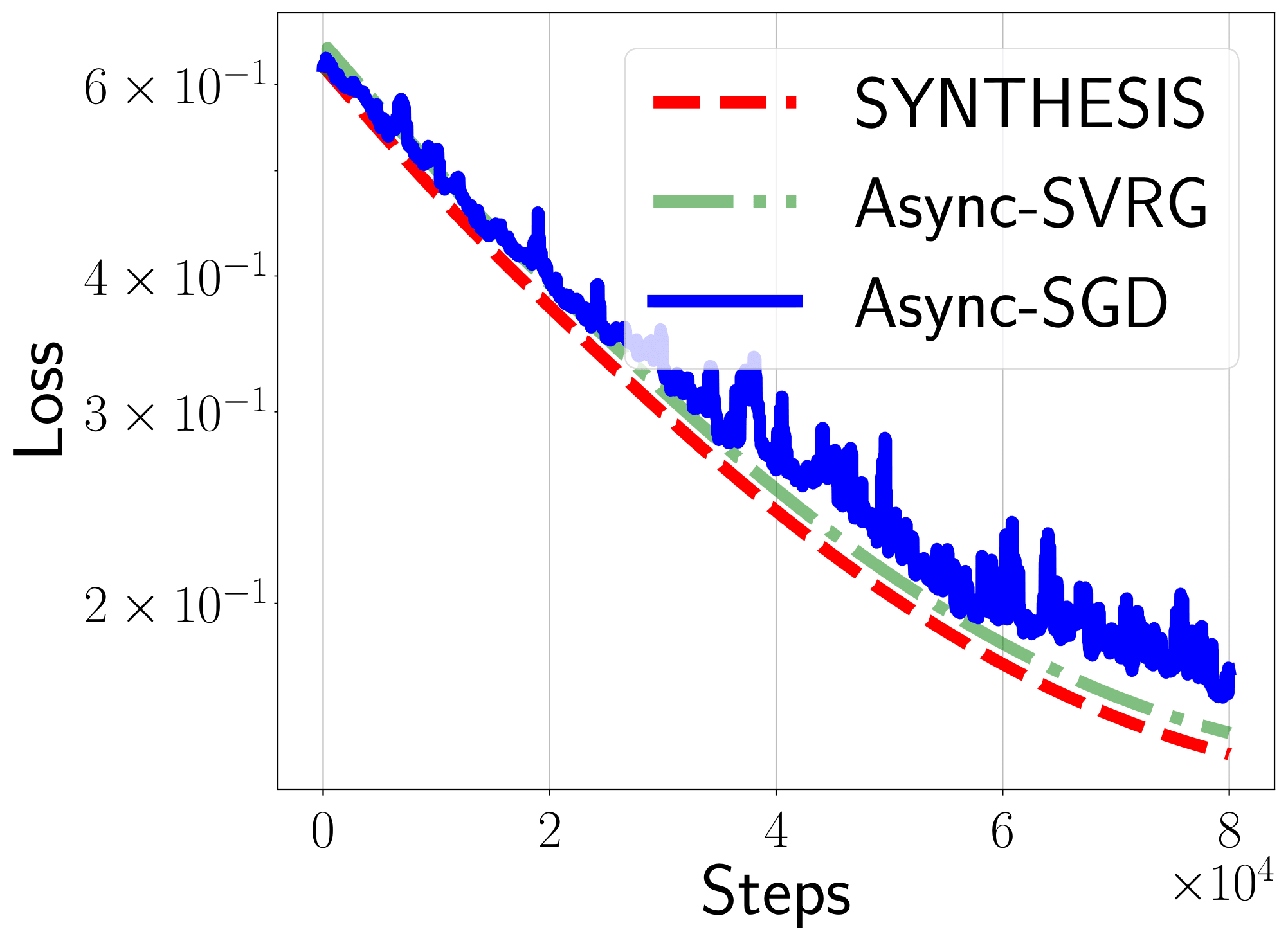} 
	}
	\subfigure[MNIST under shared-mem. system.]{
		\includegraphics[width=0.215\linewidth]{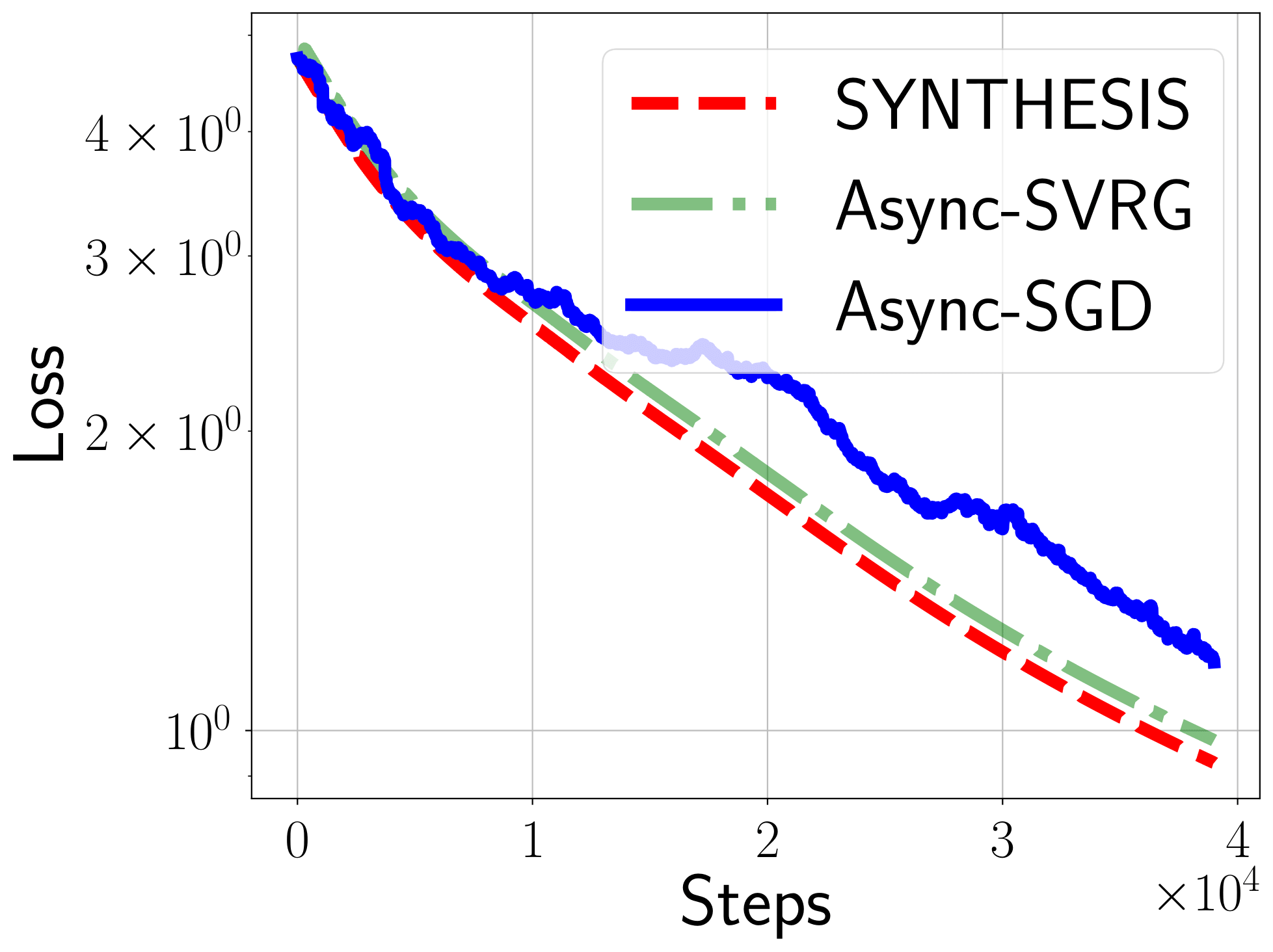}		
	}
	\caption{The convergence performance of different algorithms. }
	\label{fig:2} 
\end{figure*}

\begin{remark}{\em
Theorems~\ref{generalization_shared_Theorem} provides the algorithmic stability results for both quadratic strongly-convex and non-convex settings. 
The proof techniques are similar to those of Theorem~\ref{generalization_distributed}, and we omit the proofs here for brevity. 
We can see that the stability error bounds in Theorems~\ref{generalization_shared_Theorem} have an extra $(1/\sqrt{d})$-factor compared to those in Theorems~  \ref{generalization_distributed}. 
This is because the shared memory architecture only allows reading and writing a single coordinate of $\x$ at a time to avoid race conditions.
Therefore, we have 
$$\mathbb{E} [ \| [\nabla f(\x_{k-\tau(k)}',\xi_i)]_{m_k}  - [\nabla f(\x_{k-\tau(k)}',\xi_i')]_{m_k}  \| ] \leq  \frac{ 2M}{\sqrt{d}},$$ 
hence the existence of an extra $(1/\sqrt{d})$-factor.
This extra $(1/\sqrt{d})$-factor in the denominator implies that, as the larger the dimensionality $d$ increases, the generalization performance of our \algname algorithm becomes better in the shared memory setting.
}
\end{remark}

\section{Numerical Results}\label{sec:numerical}

In this section, we conduct numerical experiments to verify our theoretical findings of the \algname algorithms under distributed memory and shared memory architectures, respectively. 
First, we will test the convergence performance of \algname under the distributed memory architecture and the shared memory architecture. 
Then, we will illustrate the generalization performance via simple logistic regression using the Breast-Cancer-Wisconsin dataset. 
In particular, we evaluate and compare the generalization and convergence performance metrics with two state-of-the-art algorithms in asynchronous first-order methods:

\begin{list}{\labelitemi}{\leftmargin=1em \itemindent=-0.09em \itemsep=.2em}
	\item {\em Async-SGD}~\cite{zhang2018taming}: This algorithm has a single-loop architecture, where each worker randomly picks a mini-batch from its local dataset at each iteration to compute a stochastic gradient and updates the parameter server in an asynchronous fashion.
	Here, the maximum delay due to asynchrony is chosen to be the same $\Delta$ as that in the simulation of our \algname algorithm.
	\item {\em Async-SVRG}~\cite{huo2016asynchronous}: This method has  a double-loop architecture and employs unbiased gradient estimates. 
	At the beginning of each outer loop iteration, a full pass over all $N$ training samples is used to compute a full gradient. 
	This full gradient is then used in its associated inner loop to adjust the stochastic gradients.
\end{list}


%
{\bf 1) Experiment Settings:}
For the distributed memory architecture, we conduct experiments on the Amazon Web Service (AWS) platform. 
There are four workers and one parameter server. 
For the shared memory architecture, we conduct experiments on a single machine with multiple cores.
We leverage {\em OpenMPI} (Message Passing Interface) to facilitate the shared memory parallelism. 
There are four threads in the platform working in this experiment.  

We use a pre-trained convolutional neural network (CNN) model with the CIFAR-10 dataset~\cite{abadi2016tensorflow} and MNIST dataset~\cite{lecun2010mnist}.
We randomly select a subset of dataset with a varying size $N$ as the training dataset and fix the step-size at $\eta=0.1$ and choose the mini-batch size as $\lceil \sqrt{N}  \rceil$. 
In the CIFAR-10 dataset, each sample is of dimension $d = 384$. Thus, we construct a fully connected three-layer $(384\times 100 \times10)$ neural network with the ReLU activation function and the Softmax loss function while training CIFAR-10. 
In the MNIST dataset, each sample is a vector of dimension 784. 
Thus, we build a fully connected three-layer $(784\times 100 \times10)$ neural network with  the ReLU activation and the Softmax loss for MNIST.

\smallskip
{\bf 2) Numerical Results:}
We first examine the convergence of the training loss value of the \algname algorithms with different training dataset size $N$ and different values of maximum delay $\Delta$ on each worker. 
Fig.~\ref{fig:1}(a) illustrates the convergence performance of training loss value with respect to the running time of the distributed-memory version of the \algname algorithm  with different $N$ and $\Delta$ values, while Fig.~\ref {fig:1}(b) illustrates the performance of the shared-memory version of \algnamens. 
It can be seen that as $N$ or $\Delta$ decreases, the convergence rates of both distributed-memory and shared-memory versions of the \algname algorithm increase.
This is consistent with our theoretical algorithm complexity results. 
Note that the above algorithm complexity results imply an $O(1/K)$ convergence rate, which can also be observed in Fig.~\ref{fig:1}.
This behavior makes intuitive sense because a larger number of training samples implies a heavier computation load.  
Also, a larger maximum delay value of $\Delta$ results in more stale stochastic gradient information, and thus inducing a negative impact on the convergence rate performance. 
Fig.~\ref{fig:2} illustrate the algorithms comparison results under both distributed and shared-memory architectures.
We plot and compare the performances of Async-SGD, Async-SVRG, and \algname.
We choose the same starting points for all algorithms, which are generated randomly from a normal distribution.
Specifically, we choose a constant step-sizes $10^{-2}$, training sample size $N=5000$, and mini-batch $S$ with $|S|=\lceil \sqrt{N}  \rceil$. 
Fig.~\ref{fig:2} show that \algname has a faster convergence than both Async-SGD and Async-SVRG.
In the CIFAR-10 dataset case under distributed memory system, we can see that SYNTHESIS needs around $1.5\times 10^4$ steps to achieve the same loss as the Async-SVRG algorithm after running $2\times 10^4$ steps. 
Since we use the same length of training window and batch size in \algname and Async-SVRG, \algname has lower sample complexity than Async-SVRG. Similar results can be concluded in other cases in Fig.~\ref{fig:2}. 

For the algorithms generalization performance, we set datasets that differ only by one sample based on the definition of algorithmic stability. 
To do so, we first choose a subset $S$ of samples from the entire dataset and then construct a perturbed subset $S'$ by replacing the last sample in $S$ with a randomly selected sample from the whole dataset.
Finally, we run our optimization algorithms to compute the normalized Euclidean distance between $\x_k$ and $\x_k'$. 
Table.~\ref{table} illustrates the algorithmic stability results of different algorithms with $10^3$ training iterations.
Table~\ref{table} shows a slightly worse stability performance (an approximate $10^{-4}$ gap with $10^{3}$ iterations) than SGD and a similar stability performance as that of Aysnc-SVRG. 
The results show that Async-SGD has the smallest bound on the algorithmic stability, implying better generalization.
This is consistent with the widely observed generalization robustness of SGD (see, e.g.,~\cite{Hassibi19:ICLR}).
Moreover, the performance of \algname under distributed memory is worse than that of the shared memory architecture, which again confirms our theoretical findings.

\begin{table}[t!]
		\caption{Algorithms stability comparison.}
		\begin{tabular}{ |c|c|c|c| } 
			\hline
			System&Async-SGD& Async-SVRG&\algname\\
			\hline
			Distributed Mem. &$6.0\times10^{-4}$& $7.6\times10^{-4}$ & $8.3\times10^{-4}$\\ 
			Shared Mem. & $5.1\times10^{-4}$	& $6.2\times10^{-4}$& $6.7\times10^{-4}$ \\ 
			\hline
		\end{tabular}
		\label{table}
	\vspace{-.1in}
\end{table}

\section{Conclusion}\label{sec:conclusion}

In this paper, we proposed an algorithm called \algname (\ul{s}emi-as\ul{yn}chronous pa\ul{th}-int\ul{e}grated \ul{s}tochastic grad\ul{i}ent \ul{s}earch) for non-convex distributed learning under distributed memory and shared memory architectures in computing clusters.
We showed that our \algname algorithm achieves $O(\sqrt{N}\epsilon^{-2}\Delta+N)$ and $O(\sqrt{N}\epsilon^{-2}\Delta d+N)$ computational complexities under these two architectures, respectively. 
We also provided the stability error upper bounds for the \algname algorithm under distributed memory and shared memory architectures. 
We showed that smaller step-size and more training samples improve the algorithmic stability, while larger maximum delays in asynchronous updates have a negative impact on the convergence performance. 
Collectively, our results in this work advance the understanding of the convergence and generalization performances of semi-asynchronous distributed first-order variance-reduced methods.


This work has been supported in part by NSF grants CAREER CNS-2110259, CNS-2112471, CNS-2102233, and CCF-2110252.

\bibliographystyle{acm}
\bibliography{./Zhuqing_Async}

\onecolumn
\appendix

\section{Convergence analysis of SA-SpiderBoost for distributed memory } \label{appdx_proof_aaa}
To prove the stated in Theorem~\ref{aaa}, we first prove a useful lemma.
\begin{lemma} \label{lemma1}
	Let all assumptions hold and apply SA-SpiderBoost in Algorithm \ref{alg:sa_sb_dm_server} and Algorithm \ref{alg:sa_sb_dm_worker}, if the parameters $\eta ,q $ and $S$ are chosen such that 
\begin{equation}\label{eq:4}
\begin{aligned}
\beta_1\triangleq  &\frac{\eta}{2}-\frac{L\eta^2}{2}- L^2\eta^3\big(\frac{q(\Delta+1) }{|S|}+\Delta^2 \big) > 0,
\end{aligned}                                                             
\end{equation}
and if for $mod(k,q)=0$,we always have
\begin{equation}\label{eq:5}
\begin{aligned}
\mathbb{E} \big \| \v_k-\nabla f(\x_k) \big\|^2\leq\epsilon_1^2,
\end{aligned}                                                             
\end{equation}
then the output point  $\x_\zeta$ of SA-SpiderBoost satisfies  
\begin{equation}\label{eq:6}
\begin{aligned}
\mathbb{E}\big \| \nabla f(\x_\zeta) \big\|^2 \!\leq\! \big[\frac{2}{\beta_1}(\eta \!+\! {2L^2}(\Delta^2 \!+\! \frac{q(\Delta+1)}{|S|})\eta^3 ) \!+\! 4\big]\epsilon_1^2 \!+\! \big[ \frac{{4L^2}(\Delta^2+\frac{q(\Delta+1)}{|S|})\eta^2}{K\beta_1} \!+\! \frac{2}{K\beta_1} \big]  ({f(\x_0) \!-\! f^*}).\!\!
\end{aligned}                                                             
\end{equation} 
\end{lemma}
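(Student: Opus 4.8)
The plan is to establish the bound through the same two-term split used in the sketch of Theorem~\ref{aaa}, namely
\begin{align*}
\mathbb{E}\big[\| \nabla f(\x_\zeta) \|^2\big] \leq 2\,\mathbb{E}\big[\| \nabla f(\x_\zeta)-\v_\zeta \|^2\big] + 2\,\mathbb{E}\big[\| \v_\zeta \|^2\big],
\end{align*}
which is just $\|a+b\|^2\le 2\|a\|^2+2\|b\|^2$ with $a=\nabla f(\x_\zeta)-\v_\zeta$, $b=\v_\zeta$. This reduces the task to separately controlling the estimator bias and the second moment of the search direction. Because $\zeta$ is uniform on $\{1,\dots,K\}$, each expectation is the trajectory average $\tfrac1K\sum_{k}\mathbb{E}[\,\cdot\,]$, so it suffices to bound the corresponding per-iteration sums.

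First I would control the second moment $\mathbb{E}[\|\v_\zeta\|^2]$. Applying $L$-smoothness to the update $\x_{k+1}=\x_k-\eta\v_k$ yields the descent inequality~(\ref{eqn_step2}). The subtlety is that $\v_k$ is unbiased for $\nabla f(\x_{k-\tau(k)})$ rather than for $\nabla f(\x_k)$, so the bias term $\|\v_k-\nabla f(\x_k)\|^2$ must be decomposed (via $L$-smoothness and the update rule) into a variance part and a staleness part, the latter telescoping the gaps $\x_{k-\tau(k)}-\x_k$ into $\eta^2\sum\|\v_j\|^2$ over at most $\Delta$ steps. Summing the descent inequality across an epoch and then across all $K$ iterations, the coefficient multiplying the accumulated $\sum\|\v_j\|^2$ collapses to exactly $\beta_1$ as defined in~(\ref{eq:4}); positivity of $\beta_1$ then lets me move these terms to the left, telescope $f(\x_0)-f(\x_K)\le f(\x_0)-f^*$, and obtain $\tfrac1K\sum_{k}\mathbb{E}[\|\v_k\|^2]\le (f(\x_0)-f^*)/(K\beta_1)+(\eta/\beta_1)\epsilon_1^2$.

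Next I would bound the estimator bias $\mathbb{E}[\|\v_\zeta-\nabla f(\x_\zeta)\|^2]$ by exploiting the path-integrated recursion~(\ref{eqn_vk}). Writing the error $\v_k-\nabla f(\x_{k-\tau(k)})$ as its one-step predecessor plus a conditionally mean-zero increment, the increments are uncorrelated, so the variance accumulates additively; each increment is controlled by $\tfrac{L^2}{|S|}\mathbb{E}\|\x_{k-\tau(k)}-\x_{\mathrm{old}}\|^2$ through $L$-smoothness and the i.i.d.\ mini-batch, which again becomes $\eta^2\sum\|\v_j\|^2$ weighted by $(\Delta+1)$. Unrolling back to the start of the epoch, where hypothesis~(\ref{eq:5}) supplies the initial bias $\le\epsilon_1^2$, gives the intra-epoch bound stated in Step~1 of the sketch; substituting the trajectory bound on $\sum\|\v_j\|^2$ from the previous paragraph then produces a closed form involving $\Delta^2+q(\Delta+1)/|S|$, $\eta$, and $\epsilon_1^2$.

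Finally I would combine the two estimates in the opening split and collect the $\epsilon_1^2$ and $(f(\x_0)-f^*)$ contributions to match the right-hand side of~(\ref{eq:6}). The main obstacle I anticipate is the circular dependence between the two bounds: the bias estimate needs $\sum\mathbb{E}\|\v_j\|^2$, while the second-moment estimate itself depends on the bias through the descent inequality. This is resolved precisely by the definition of $\beta_1$, which is engineered so that the recursively generated $\|\v\|^2$ terms can be absorbed with a strictly positive net coefficient. Keeping the delay bookkeeping consistent across the nested staleness indices---so that every $\tau(k)\le\Delta$ produces the correct $\Delta^2$ and $(\Delta+1)$ factors---is the most delicate part of the argument.
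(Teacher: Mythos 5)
Your proposal follows essentially the same route as the paper's proof: the same $2\|a\|^2+2\|b\|^2$ split, the same $L$-smoothness descent inequality telescoped over epochs with the recursively generated $\sum\mathbb{E}\|\v_j\|^2$ terms absorbed into $\beta_1$, the same martingale-variance accumulation for the path-integrated estimator bias initialized by hypothesis~(\ref{eq:5}) at each epoch boundary, and the same conversion of epoch-local sums to trajectory averages via the uniform choice of $\zeta$ (which is where the paper's $q/K$ and $\Delta/K$ weighting produces the $q(\Delta+1)/|S|$ and $\Delta^2$ factors). The plan is correct and matches the paper's argument in all essential respects.
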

\subsection{Proof of Lemma \ref{lemma1}} 
\begin{proof}

In Lemma \ref{lemma1}, we aim to bound $\mathbb{E}\big \| \nabla f(\x_\zeta) \big\|^2$, where $\zeta$ is chosen from $\{1,...,K\}$ uniformly at random. 
Following the inequality of arithmetic and geometric means, we have:
\begin{equation}\label{eq:13}
\begin{aligned}
&\mathbb{E}\big \| \nabla f(\x_\zeta) \big\|^2=\mathbb{E}\big \| \nabla f(\x_\zeta)-\v_\zeta+\v_\zeta \big\|^2\leq  2\mathbb{E}\big \| \v_\zeta \big\|^2+2\mathbb{E}\big \| \nabla f(\x_\zeta)-\v_\zeta\big\|^2.
\end{aligned}                                                             
\end{equation}   
Next, we bound the terms $\mathbb{E}\left \| \v_\zeta \right\|^2$ and $\mathbb{E} \left \| \nabla f(\x_\zeta) - \v_\zeta\right\|^2$ on the right-hand-side of (\ref{eq:13}) individually.

To evaluate $\mathbb{E} \left \| \nabla f(\x_\zeta) - \v_\zeta\right\|^2$, we start from the iteration relationship of our SA-SpiderBoost algorithm, for which we have:
\begin{align}\label{eq:3}
\mathbb{E}& \big \| \v_{k}-\frac{1}{N}\sum_{i=1}^{N}\nabla f(\x_{k},\xi_i) \big\|^2\nonumber\\&\overset{(a)}{\leq} 2\mathbb{E} \big \| \v_{k}-\frac{1}{N}\sum_{i=1}^{N}\nabla f(\x_{k-\tau(k)},\xi_i) \big\|^2 +2\mathbb{E} \big \|\frac{1}{N}\sum_{i=1}^{N}\nabla f(\x_{k-\tau(k)},\xi_i)-\frac{1}{N}\sum_{i=1}^{N}\nabla f(\x_{k},\xi_i) \big\|^2 
\nonumber\\
&\overset{(b)}{\leq} 2\mathbb{E} \big \| \v_{k}-\frac{1}{N}\sum_{i=1}^{N}\nabla f(\x_{k-\tau(k)},\xi_i) \big\|^2 +2L^2\mathbb{E} \big \| \x_{k-\tau(k)}-\x_{k} \big\|^2 
\nonumber\\
&= 2\mathbb{E} \big \| \v_{k}-\frac{1}{N}\sum_{i=1}^{N}\nabla f(\x_{k-\tau(k)},\xi_i) \big\|^2 +2L^2\mathbb{E} \big \| \sum_{j=k-\tau(k)}^{k-1} (\x_{j+1}- \x_{j})\big\|^2 
\nonumber\\
&\overset{(c)}{\leq} 2\mathbb{E} \big \| \v_{k}-\frac{1}{N}\sum_{i=1}^{N}\nabla f(\x_{k-\tau(k)},\xi_i) \big\|^2 +2L^2\Delta \sum_{j=k-\tau(k)}^{k-1} \mathbb{E} \big \|  (\x_{j+1}- \x_{j})\big\|^2 
\nonumber\\
&\overset{(d)}{=} 2\mathbb{E} \big \| \v_{k}-\frac{1}{N}\sum_{i=1}^{N}\nabla f(\x_{k-\tau(k)},\xi_i) \big\|^2 +2L^2\eta^2\Delta \sum_{j=k-\tau(k)}^{k-1} \mathbb{E} \big \|  \v_{j}\big\|^2,
\end{align}                                                                           
where $(a)$ follows from the triangle inequality, $(b)$ follows from $L$-smooth property,
$(c)$ is due to the triangle inequality and the maximum delay is $\Delta$, 
and $(d)$ uses the condition of update rule.

Next, we bound the terms $ \mathbb{E} \big \| \v_{k}-\frac{1}{N}\sum_{i=1}^{N}\nabla f(\x_{k-\tau(k)},\xi_i) \big\|^2 $ on the right-hand-side of (\ref{eq:3}), which denotes the distance of the inner loop. Let $n_k=\lceil k/q \rceil$ such that $(n_k-1)q\leq k \leq n_kq -1$, we have

\begin{align}\label{eq:1}
\mathbb{E} &\big \| \v_{k}-\frac{1}{N}\sum_{i=1}^{N}\nabla f(\x_{k-\tau(k)},\xi_i) \big\|^2\nonumber\\
\overset{(a)}{=}  &\mathbb{E} \big\|\frac{1}{|S|}  \sum_{i\in S} \big[\nabla f(\x_{k-\tau(k)},\xi_i)-\nabla f(\x_{k-1-\tau(k)-\tau^{k-1-\tau(k)}},\xi_i)  -\frac{1}{N}\sum_{j=1}^{N}\nabla f(\x_{k-\tau(k)},\xi_j)\nonumber\\&+  \frac{1}{N}\sum_{j=1}^{N}\nabla f(\x_{k-1-\tau(k)-\tau^{k-1-\tau(k)}},\xi_j)  \big]\big \|^2+\mathbb{E}\big \| \frac{1}{|S|} \sum_{i\in S} \big [ \v_{k-1-\tau(k)} -\frac{1}{N}\sum_{j=1}^{N}\nabla f(\x_{k-1-\tau(k)-\tau^{k-1-\tau(k)}},\xi_j)  \big] \big\|^2\nonumber\\
\overset{(b)}{\leq} &\frac{1}{|S|^2} \sum_{i\in S}  \mathbb{E} \big \|\big [\nabla f(\x_{k-\tau(k)},\xi_i)-\nabla f(\x_{k-1-\tau(k)-\tau^{k-1-\tau(k)}},\xi_i)  \big]-\frac{1}{N}\sum_{j=1}^{N}\nabla f(\x_{k-\tau(k)},\xi_j)\nonumber\\&+ \frac{1}{N}\sum_{j=1}^{N}\nabla f(\x_{k-1-\tau(k)-\tau^{k-1-\tau(k)}},\xi_j) \big\|^2+\mathbb{E} \big \|  \v_{k-1-\tau(k)} -\frac{1}{N}\sum_{i=1}^{N}\nabla f(\x_{k-1-\tau(k)-\tau^{k-1-\tau(k)}},\xi_i)  \big\|^2\nonumber\\
\overset{(c)}{\leq} &\frac{1}{|S|^2} \sum_{i\in S} \mathbb{E}\big \|\nabla f(\x_{k-\tau(k)},\xi_i)-\nabla f(\x_{k-1-\tau(k)-\tau^{k-1-\tau(k)}},\xi_i) \big\|^2\nonumber\\&+\mathbb{E}\big \|  \v_{k-1-\tau(k)} -\frac{1}{N}\sum_{i=1}^{N}\nabla f(\x_{k-1-\tau(k)-\tau^{k-1-\tau(k)}},\xi_i)   \big\|^2\nonumber\\
\overset{(d)}{\leq} &\frac{L^2}{|S|^2} \sum_{i\in S}  \mathbb{E} \big \|\x_{k-\tau(k)}-\x_{k-1-\tau(k)-\tau^{k-1-\tau(k)}}\big\|^2+\mathbb{E}\big \|  \v_{k-1-\tau(k)} -\frac{1}{N}\sum_{i=1}^{N}\nabla f(\x_{k-1-\tau(k)-\tau^{k-1-\tau(k)}},\xi_i)   \big\|^2\nonumber\\
\overset{(e)}{=} &\frac{L^2\eta^2}{|S|}\mathbb{E} \big \|  \sum_{i=k-\tau(k)-1-\tau(k-\tau(k)-1)}^{k-\tau(k)-1}  \v_{i}\big\|^2+\mathbb{E}\big \|  \v_{k-1-\tau(k)} -\frac{1}{N}\sum_{i=1}^{N}\nabla f(\x_{k-1-\tau(k)-\tau^{k-1-\tau(k)}},\xi_i)   \big\|^2\nonumber\\
\overset{(f)}{\leq}&\frac{L^2\eta^2}{|S|}(\Delta+1) \sum_{i=k-\tau(k)-1-\tau(k-\tau(k)-1)}^{k-\tau(k)-1} \mathbb{E} \big \|   \v_{i}\big\|^2+\mathbb{E}\big \|  \v_{k-1-\tau(k)} -\frac{1}{N}\sum_{i=1}^{N}\nabla f(\x_{k-1-\tau(k)-\tau^{k-1-\tau(k)}},\xi_i)   \big\|^2,
\end{align}                                                                            
where $(a)$ follows from the gradient update rule $$\v_k=\frac{1}{|S|} \sum_{i\in S} (\nabla f(\x_{k-\tau(k)},\xi_i)-\nabla f(\x_{k-1-\tau(k)-\tau^{k-1-\tau(k)}},\xi_i) +\v_{k-1-\tau(k)}),
$$  and  Lemma 1 in \cite{fang2018spider}, $(b)$ and $(c)$ use in \cite[Appendix~A.3]{fang2018spider},  
$(d)$ follows from the Lipschitz continuity of $\nabla f(\x)$, 
$(e)$ is due to the condition on update rule,
and $(f)$ follows from the maximum delay is $\Delta$.

Since $\v_{k-1-\tau(k)}$ are generated from the previous step. Telescoping $q$ iterations, we obtain that:
\begin{align}\label{eq:2}
\mathbb{E} \big \| \v_{k}-\frac{1}{N}\sum_{i=1}^{N}\nabla f(\x_{k-\tau(k)},\xi_i) \big\|^2\leq &\frac{L^2\eta^2(\Delta+1) }{|S|}\sum_{j=(n_k-1)q}^{k-1-\tau(k)} \mathbb{E}\big \| \v_{j}\big\|^2\nonumber\\&+ \mathbb{E} \big \|\v_{(n_k-1)q}-\frac{1}{N}\sum_{i=1}^{N}\nabla f(\x_{(n_k-1)q },\xi_i)\big\|^2.
\end{align}

By combining (\ref{eq:3})  and (\ref{eq:2}), we arrive at:

\begin{align} \label{eq:step_32}
\mathbb{E}& \big \| \v_{k}-\frac{1}{N}\sum_{i=1}^{N}\nabla f(\x_{k},\xi_i) \big\|^2\nonumber\\&{\leq}\frac{2L^2\eta^2(\Delta+1) }{|S|}\sum_{j=(n_k-1)q}^{k-1-\tau(k)} \mathbb{E}\big \| \v_{j}\big\|^2+2 \mathbb{E} \big \|\v_{(n_k-1)q}-\frac{1}{N}\sum_{i=1}^{N}\nabla f(\x_{(n_k-1)q },\xi_i)\big\|^2\nonumber\\&+2L^2\eta^2\Delta \sum_{j=k-\tau(k)}^{k-1} \mathbb{E} \big \|  \v_{j}\big\|^2.
\end{align}

Next, we continue  to bound the other term $\mathbb{E}\left \| \v_\zeta \right\|^2$ on the right-hand-side of (\ref{eq:13}). 
By Assumption \ref{assum_Lip_grad}, the entire objective function $f$ is $L-$smooth, which further implies 

\begin{align}\label{eq:7}
f(\x_{k+1})&\leq f(\x_k)+\langle\nabla f(\x_k),\x_{k+1}-\x_{k}\rangle+\frac{L}{2}\big \| \x_{k+1}-\x_{k} \big\|^2\nonumber\\
&\overset{(a)}{=}  f(\x_k)-\eta \langle\nabla f(\x_k),\v_k\rangle+\frac{L\eta^2}{2}\big \| \v_k \big\|^2\nonumber\\
&= f(\x_k)-\eta \langle\nabla f(\x_k)-\v_k,\v_k\rangle-\eta  \big \|\v_k \big\|^2+\frac{L\eta^2}{2}\big \| \v_k \big\|^2\nonumber\\
&\overset{(b)}{\leq}  f(\x_k)+\frac{\eta}{2} \big \| \v_k-\nabla f(\x_k) \big\|^2-(\frac{\eta}{2}-\frac{L\eta^2}{2})\big \| \v_k \big\|^2,
\end{align}                                                                            
where $(a)$ follows from the update rule of SA-Spiderboost, $(b)$ uses   uses the inequality that $\langle\x,\y\rangle\leq\frac{\| \x \|^2 + \| \y \|^2}{2}$, for $\x,\y\in \mathbb{R} ^d$. 
Thus, we have
\begin{align}\label{eq:8}
\mathbb{E}&f(\x_{k+1})\leq\mathbb{E} f(\x_k)+\frac{\eta}{2} \mathbb{E}\big \| \v_k-\nabla f(\x_k) \big\|^2-(\frac{\eta}{2}-\frac{L\eta^2}{2})\mathbb{E}\big \| \v_k \big\|^2\nonumber\\
\overset{(a)}{\leq}&\frac{\eta}{2}\big (  \frac{2L^2\eta^2(\Delta+1) }{|S|}\sum_{j=(n_k-1)q}^{k-1-\tau(k)} \mathbb{E}\big \| \v_{j}\big\|^2+2 \mathbb{E} \big \|\v_{(n_k-1)q}-\frac{1}{N}\sum_{i=1}^{N}\nabla f(\x_{(n_k-1)q },\xi_i)\big\|^2\nonumber\\
&+2L^2\eta^2\Delta \sum_{j=k-\tau(k)}^{k-1} \mathbb{E} \big \|  \v_{j}\big\|^2 \big)-	(\frac{\eta}{2}-\frac{L\eta^2}{2})\mathbb{E}\big \| \v_k \big\|^2+\mathbb{E} f(\x_k)\nonumber\\
\overset{(b)}{\leq}&\mathbb{E} f(\x_k)-(\frac{\eta}{2}-\frac{L\eta^2}{2})\mathbb{E}\big \| \v_k \big\|^2+\eta\epsilon_1^2+\frac{L^2\eta^3(\Delta+1) }{|S|}\sum_{j=(n_k-1)q}^{k-1-\tau(k)} \mathbb{E}\big \| \v_{j}\big\|^2+L^2\eta^3\Delta \sum_{j=k-\tau(k)}^{k-1} \mathbb{E} \big \|  \v_{j}\big\|^2\nonumber \\
\leq &\mathbb{E} f(\x_k)-(\frac{\eta}{2}-\frac{L\eta^2}{2})\mathbb{E}\big \| \v_k \big\|^2+\eta\epsilon_1^2+\frac{L^2\eta^3(\Delta+1) }{|S|}\sum_{j=(n_k-1)q}^{k} \mathbb{E}\big \| \v_{j}\big\|^2+L^2\eta^3\Delta \sum_{j=k-\tau(k)}^{k-1} \mathbb{E} \big \|  \v_{j}\big\|^2 ,
\end{align}             
where $(a)$ follows from (\ref{eq:3}), $(b) $ follows from the $\mathbb{E} \big \| \v_{(n_k-1)q}-\nabla f(\x_{(n_k-1)q}) \big\|^2 \leq \epsilon_1^2$.
Next, telescoping (\ref{eq:8}) over $k$ from $(n_k-1)q$ to $k$, where $k\leq n_kq-1 $, we have:
\begin{align}\label{eq:9}
\mathbb{E}f(\x_{k+1})\overset{(a)}{\leq} &\mathbb{E} f(\x_{(n_k-1)q})-(\frac{\eta}{2}-\frac{L\eta^2}{2})\sum_{i=(n_k-1)q}^k\mathbb{E}\big \| \v_i \big\|^2+\sum_{i=(n_k-1)q}^k\eta\epsilon_1^2\nonumber\\
&+\frac{L^2\eta^3(\Delta+1) }{|S|}\sum_{j=(n_k-1)q}^k \sum_{i=(n_k-1)q}^{j} \mathbb{E}\big \| \v_{i}\big\|^2+L^2\eta^3\Delta \sum_{j=(n_k-1)q}^k\sum_{i=j-\tau^j}^{j-1} \mathbb{E} \big \|  \v_{i}\big\|^2\nonumber \\
\overset{(b)}{\leq} &\mathbb{E} f(\x_{(n_k-1)q})-(\frac{\eta}{2}-\frac{L\eta^2}{2})\sum_{i=(n_k-1)q}^k\mathbb{E}\big \| \v_i \big\|^2+\sum_{i=(n_k-1)q}^k\eta\epsilon_1^2\nonumber\\&+\frac{L^2\eta^3(\Delta+1) }{|S|}\sum_{j=(n_k-1)q}^k \sum_{i=(n_k-1)q}^{k} \mathbb{E}\big \| \v_{i}\big\|^2+L^2\eta^3\Delta^2 \sum_{i=(n_k-1)q}^k  \mathbb{E} \big \|  \v_{i}\big\|^2, 
\end{align}                                                             
where $(a)$ follows from (\ref{eq:8}), 
$(b)$ extends the summation of second term from $j$ to $k$.
Further relaxing (\ref{eq:9}) yields:
\begin{align}\label{eq:10}
\mathbb{E}f(\x_{k+1}) \overset{(a)}{\leq} &\mathbb{E} f(\x_{(n_k-1)q})-(\frac{\eta}{2}-\frac{L\eta^2}{2})\sum_{i=(n_k-1)q}^k\mathbb{E}\big \| \v_i \big\|^2+\sum_{i=(n_k-1)q}^k\eta\epsilon_1^2\nonumber\\&+\frac{qL^2\eta^3(\Delta+1) }{|S|}\sum_{i=(n_k-1)q}^k \mathbb{E}\big \| \v_{i}\big\|^2+L^2\eta^3\Delta^2 \sum_{j=(n_k-1)q}^k \mathbb{E} \big \|  \v_{i}\big\|^2\nonumber \\
=&\mathbb{E} f(\x_{(n_k-1)q})+\sum_{i=(n_k-1)q}^k \eta\epsilon_1^2-\big[ \frac{\eta}{2}-\frac{L\eta^2}{2}- L^2\eta^3\big(\frac{q(\Delta+1) }{|S|}+\Delta^2 \big) \big] \sum_{i=(n_k-1)q}^k\mathbb{E} \big \| \v_{i} \big\|^2\nonumber\\
=&\mathbb{E}[f(\x_{(n_k-1)q})]-\sum_{i=(n_k-1)q}^{k}(\beta_1\mathbb{E}\big \| \v_i \big\|^2-\eta\epsilon_1^2),	
\end{align}            
where $(a) $ follows from the fact that $k\leq n_kq-1$. 
Then, by telescoping, we can further derive: 
\begin{align}\label{eq:11}
\mathbb{E}f(\x_K)-\mathbb{E}f(\x_0)=&(\mathbb{E}f(\x_q)-\mathbb{E}f(\x_0))+(\mathbb{E}f(\x_{2q})-\mathbb{E}f(\x_q)+...+(\mathbb{E}f(\x_{K})-\mathbb{E}f(\x_{(n_k-1)q})))\nonumber\\
\overset{(a)}{\leq}&-\sum_{i=0}^{q-1}(\beta_1\mathbb{E}\big \| \v_i \big\|^2-\eta\epsilon_1^2)-\sum_{i=q}^{2q-1}(\beta_1\mathbb{E}\big \| \v_i \big\|^2-\eta{2}\epsilon_1^2)-...\nonumber\\
&-\sum_{(n_K-1)q}^{K-1}(\beta_1\mathbb{E}\big \| \v_i \big\|^2-\frac{\eta}{2}\epsilon_1^2)\nonumber\\
=&-\sum_{i=0}^{K-1}(\beta_1\mathbb{E}\big \| \v_i \big\|^2-\eta\epsilon_1^2)\nonumber\\=
& -\sum_{i=0}^{K-1}(\beta_1\mathbb{E}\big \| \v_i \big\|^2)+K\eta\epsilon_1^2,
\end{align}          
where $(a)$ follows from  (\ref{eq:10}). 
Since $\mathbb{E}f(\x_K)\geq f(x^*)$, then we have:
\begin{equation}\label{eq:12}
\begin{aligned}
&\mathbb{E}f(\x^*)-\mathbb{E}f(\x_0)\leq  -\sum_{i=0}^{K-1}(\beta_1\mathbb{E}\big \| \v_i \big\|^2)+K\eta\epsilon_1^2.
\end{aligned}                                                             
\end{equation}   
By rearranging (\ref{eq:12}), we have:
\begin{equation}\label{eq:14}
\begin{aligned}
\mathbb{E}\big \| \v_\zeta \big\|^2=\frac{1}{K}\sum_{i=0}^{K-1}\mathbb{E}\big \| \v_i\big\|^2\leq \frac{f(\x_0)-f^*}{K\beta_1}+\frac{\eta}{\beta_1}\epsilon_1^2,
\end{aligned}                                                             
\end{equation}
It then follows that:   
\begin{align} \label{eq:15}
\mathbb{E} \big \| \v_\zeta-\nabla f(\x_\zeta) \big\|^2&
\overset{(a)}{\leq}\mathbb{E}\frac{2L^2\eta^2(\Delta+1)}{|S|}\sum_{j=(n_{\zeta}-1)q}^{\zeta-1-\tau^{\zeta}} \mathbb{E}\big \| \v_{j}\big\|^2+2 \mathbb{E} \big \|\v_{(n_\zeta-1)q}-\frac{1}{N}\sum_{i=1}^{N}\nabla f(\x_{(n_\zeta-1)q },\xi_i)\big\|^2 \notag\\
&+2L^2\eta^2\Delta \mathbb{E}\sum_{j=\zeta-\tau^{\zeta}}^{\zeta-1} \mathbb{E} \big \|  \v_{j}\big\|^2 \nonumber\\
&\overset{(b)}{\leq}\frac{2L^2\eta^2(\Delta+1)}{|S|}\mathbb{E}\sum_{j=(n_{\zeta}-1)q}^{\zeta-1-\tau^{\zeta}} \mathbb{E}\big \| \v_{j}\big\|^2+2 \epsilon_1^2+2L^2\eta^2\Delta\mathbb{E} \sum_{i=\zeta-\tau^{\zeta}}^{\zeta-1} \mathbb{E} \big \|  \v_{i}\big\|^2 \nonumber\\
&\leq\frac{2L^2\eta^2(\Delta+1)}{|S|}\mathbb{E}\sum_{j=(n_{\zeta}-1)q}^{\xi} \mathbb{E}\big \| \v_{j}\big\|^2+2 \epsilon_1^2+2L^2\eta^2\Delta \mathbb{E}\sum_{i=\zeta-\tau^{\zeta}}^{\zeta-1} \mathbb{E} \big \|  \v_{i}\big\|^2 \nonumber\\
&\overset{(c)}{\leq}\frac{2L^2\eta^2(\Delta+1)}{|S|}\mathbb{E}\sum_{i=(n_{\zeta}-1)q}^{min\{(n_{\zeta})q-1,K-1\}} \mathbb{E}\big \| \v_{i}\big\|^2+2 \epsilon_1^2+2L^2\eta^2\Delta \mathbb{E}\sum_{i=\zeta-\tau^{\zeta}}^{\zeta-1} \mathbb{E} \big \|  \v_{i}\big\|^2 \nonumber\\
&\overset{(d)}{\leq}\frac{q}{K}\sum_{i=0}^{K-1}\frac{2L^2\eta^2(\Delta+1)}{|S|} \mathbb{E}\big \| \v_{i}\big\|^2+2 \epsilon_1^2+\frac{\Delta}{K}\sum_{i=0}^{K-1}2L^2\eta^2\Delta  \mathbb{E} \big \|  \v_{i}\big\|^2 \nonumber\\
&=(\frac{2L^2\eta^2q(\Delta+1)}{|S|}+2L^2\eta^2\Delta^2)\frac{1}{K} \sum_{i=0}^{K-1}\mathbb{E}\big \| \v_{i}\big\|^2+2 \epsilon_1^2 \nonumber\\
&\overset{(e)}{\leq}\big[\frac{{2L^2}(\Delta^2+\frac{q(\Delta+1)}{|S|})\eta^3}{\beta_1}+2\big]\epsilon_1^2+ {2L^2}(\Delta^2+\frac{q(\Delta+1)}{|S|})\eta^2\big[\frac{f(\x_0)-f^*}{K\beta_1}\big],
\end{align}
where $(a)$ follows from (\ref{eq:step_32}), 
$(b)$ follows from (\ref{eq:5}), 
$(c)$ follows from the definition of $n_{\zeta}$, which implies $\zeta \leq min\{(n_{\zeta})q-1,K-1\}$, $(d)$follows from the fact that the probability that $n_{\zeta}=1, 2,...,n_K$ is less than or equal to $\frac{q}{K}$ and $(e)$ follows from (\ref{eq:14}).

By combining (\ref{eq:14})  and (\ref{eq:15}), we arrive at:
\begin{align}\label{eq:17}
\mathbb{E}& \big \| \nabla f(\x_\zeta) \big\|^2\leq2\mathbb{E}\big \| \nabla f(\x_\zeta)-\v_\zeta\big\|^2+2\mathbb{E}\big \| \v_\zeta \big\|^2\nonumber\\
\leq & 2\big\{\big[\frac{{2L^2}(\Delta^2+\frac{q(\Delta+1)}{|S|})\eta^3}{\beta_1}+2\big]\epsilon_1^2+ {2L^2}(\Delta^2+\frac{q(\Delta+1)}{|S|})\eta^2\big[\frac{f(\x_0)-f^*}{K\beta_1}\big]\big\}\nonumber\\&+ 2[\frac{f(\x_0)-f^*}{K\beta_1}+\frac{\eta}{\beta_1}\epsilon_1^2]\nonumber\\
=&\big[\frac{2}{\beta_1}(\eta+{2L^2}(\Delta^2+\frac{q(\Delta+1)}{|S|})\eta^3 )+4\big]\epsilon_1^2+\big[ \frac{{4L^2}(\Delta^2+\frac{q(\Delta+1)}{|S|})\eta^2}{K\beta_1}+\frac{2}{K\beta_1} \big]  ({f(\x_0)-f^*}).
\end{align}                          
This completes the proof of Lemma~\ref{lemma1}.
\end{proof}

With Lemma~\ref{lemma1}, we are in a position to prove the result in Theorem~ \ref{aaa}.
Setting the parameters
\begin{equation}
\begin{aligned}
q=\sqrt{N}, \quad S=\sqrt{N}, \quad \eta=\frac{1}{4L(\Delta+1)}, 
\end{aligned}                                                             
\end{equation}  
we obtain: 
\begin{equation}
\begin{aligned}
\beta_1&\triangleq  \frac{\eta}{2}-\frac{L\eta^2}{2}- L^2\eta^3\big(\frac{q(\Delta+1)}{|S|}+\Delta^2 \big) 
=\frac{1}{8L(\Delta+1)}\cdot \frac{14\Delta^2+26\Delta+10}{16(\Delta+1)^2}>0.
\end{aligned}                                                             
\end{equation}
For $mod(k,q)=0$ we have $\mathbb{E} \big \| \v_k-\nabla f(\x_k) \big\|^2=0$. 
Then, after $K$ iterations, we have
\begin{equation}\label{eq:18}
\begin{aligned}
\mathbb{E}\big \| \nabla f(\x_\zeta) \big\|^2\leq 16L(\Delta+1)\frac{9\Delta^2+17\Delta+9}{K\cdot(7\Delta^2 +13\Delta+5)}  (f(\x_0)-f^*).
\end{aligned}                                                             
\end{equation}  
To ensure $\mathbb{E}\big \| \nabla f(\x_\zeta) \big\|\leq\epsilon$, it suffices to ensure$\mathbb{E}\big \| \nabla f(\x_\zeta) \big\|^2\leq\epsilon^2$.
Solving for $K$ yields:
$$K =16L(\Delta+1)\frac{9\Delta^2+17\Delta+9}{\epsilon^2\cdot(7\Delta^2 +13\Delta+5)} (f(\x_0)-f^*) = \mathcal{O} \big(  \frac{ f(\x_0)-f^* }{\epsilon^2} \big).$$

Lastly, to show the SFO complexity, we note that the number of SFO calls in the outer loops can be calculated as: $\lceil \frac{K}{q} \rceil N$.
Also, the number of SFO calls in the inner loop can be calculated as $KS$.
Hence, the total SFO complexity can be calculated as:
$\lceil \frac{K}{q} \rceil N + K\cdot S \leq  \frac{K+q}{q}N + K\sqrt{N}= K\sqrt{N}+N+K\sqrt{N}=O(\sqrt{N}\epsilon^{-2}(\Delta+1) +N)$.
This completes the proof.



\section{ Generalization analysis of SA-SpiderBoost for distributed memory }  \label{appdx:sa_sb_gen}
\subsection{Proofs of Theorem~\ref{generalization_distributed} }  \label{appdx:sa_sb_gen_quadr}

Recall that update rule for SA-SpiderBoost for distributed-memory system is given by:
\begin{align}
	\x_{k+1}=\x_{k}-\eta\frac{1}{|S|} \sum_{i\in S} (\nabla f(\x_{k-\tau(k)},\xi_i)-\nabla f(\x_{k-\tau(k)-1-\tau(k-\tau(k)-1)},\xi_i)+\v_{k-1-\tau(k)}) .
\end{align}
$S$ and $S'$ are two data sets such that $S$ and $S'$ differ in at most one example, where $S=(\xi_1,\xi_2,...,\xi_N)$ and $S'=(\xi_1',\xi_2',...,\xi_N')$. Let $\delta_k\triangleq \x_k-\x_k' $. Suppose $\x_0=\x_0'$.

Now, taking expectation of $\delta_{k+1}$ with respect of the algorithm, we get 
\begin{align}
		\mathbb{E} (\delta&_{k+1}) =\mathbb{E}  (\x_{k+1}) -\mathbb{E} (\x_{k+1}') 
		=\mathbb{E} (\delta_k) -\eta [\mathbb{E} (\v_k) -\mathbb{E} (\v_k') ].
\end{align}
Since we $\v_k$ is the unbiased estimated of $\nabla f(\x_{k-\tau(k)})$, we have
\begin{align}
		\mathbb{E} (\delta&_{k+1}) 
		=\mathbb{E} (\delta_k) -\eta [\mathbb{E} (\nabla f(\x_{k-\tau(k)})) -\mathbb{E} (\nabla f(\x_{k-\tau(k)}')) ].
\end{align}

At Step $k$, with probability $1-1/N$, the example is the same in  $S$ and $S'$. With probability $\frac{1}{N}$, the example is different in  $S$ and $S'$. Define $\epsilon'' \triangleq \mathbb{E} \big[\frac{1}{N}\eta\nabla f((\x_{k-\tau(k)}',\xi_i))   -\frac{1}{N}\eta\nabla f((\x_{k-\tau(k)}',\xi_i'))  \big]  $. 
We have:
\begin{align}
		\mathbb{E}  (\delta_{k+1}) 
		\leq& \mathbb{E} \big[(\delta_{k}) -\eta  \frac{1}{N}\sum_{i=1}^N\nabla f((\x_{k-\tau(k)},\xi_i)) +\eta \frac{1}{N}\sum_{i=1}^N\nabla f((\x_{k-\tau(k)}',\xi_i)) \big] \nonumber\\&+\mathbb{E} \big[\frac{1}{N}\eta\nabla f((\x_{k-\tau(k)}',\xi_i))   -\frac{1}{N}\eta\nabla f((\x_{k-\tau(k)}',\xi_i'))  \big] \nonumber \\
		\overset{(a)}{\leq}& \mathbb{E}\big[(\delta_{k}) -\eta  A (\delta_{k-\tau(k)})\big]+\epsilon'',
\end{align}
where  $(a)$ follows from $f$ is a quadratic convex function. We note that $\|\epsilon''\|\leq \frac{2\eta M}{N}$ since we have the bounded gradient Assumption \ref{assum_bnded_grad}.
Then, we have

\begin{align}
	\left[
	\begin{matrix}
		\delta_{k+1}\\
		\delta_{k} \\
		\vdots \\
		\delta_{k-\tau(k)+1} \\
		\delta_{k-\tau(k)} \\
	\end{matrix}
	\right]=
	\left[
	\begin{matrix}
		1 & 0 & \cdots &- \eta A & 0\\
		1&0& \cdots &0&0\\
		\vdots & \vdots & \ddots & \vdots & \vdots \\ 
		\vdots & \vdots & & \ddots & \vdots \\
		0&0& \cdots &1&0\\
	\end{matrix}
	\right]
	\left[
	\begin{matrix}
		\delta_{k}  \\
		\delta_{k-1} \\
		\vdots\\
		\delta_{k-\tau(k)}  \\
		\delta_{k-\tau(k)-1} \\
	\end{matrix}
	\right]+
	\left[
	\begin{matrix}
		\epsilon''\\
		0 \\ \vdots \\0\\0
	\end{matrix}
	\right].
\end{align}
Let Matrix 

\begin{align}
	\Q \triangleq
	\left[
	\begin{matrix}
		1 & 0 & \cdots &- \eta A & 0\\
		1&0& \cdots &0&0\\
		\vdots & \vdots & \ddots & \vdots & \vdots \\ 
		\vdots & \vdots & & \ddots & \vdots \\
		0&0& \cdots &1&0\\
	\end{matrix}
	\right].
\end{align}
Consider the characteristic polynomial 
\begin{align}
	\left[
	\begin{matrix}
		1 & 0 & \cdots &- \eta A & 0\\
		1&0& \cdots &0&0\\
		\vdots & \vdots & \ddots & \vdots & \vdots \\ 
		\vdots & \vdots & & \ddots & \vdots \\
		0&0& \cdots &1&0\\
	\end{matrix}
	\right]
	\left[
	\begin{matrix}
		\v_1\\
		\v_2\\
		\vdots \\ \vdots \\
		\v_{\tau(k)+2}
	\end{matrix}
	\right]
	=\lambda_\Q
	\left[
	\begin{matrix}
		\v_1\\
		\v_2\\
		\vdots \\ \vdots \\
		\v_{\tau(k)+2}
	\end{matrix}
	\right],
\end{align}
which implies $\v_1=\lambda_\Q \v_2$, $\v_2=\lambda_\Q \v_3$, ...,$\v_{\tau^{k}+1}=\lambda_\Q \v_{\tau(k)+2}$. 
Plugging this in to the first row, we have:
\begin{align}
	\big(-\lambda_\Q^{\tau^{k}+2} +\lambda_\Q^{\tau^{k}+1} + \lambda_\Q[-\eta A]  \big) \v_{\tau(k)+2}=0.
\end{align}

Since $\A$ is symmetric, then it has eigenvalue decomposition $\A = \mathbf{U} \boldsymbol{\Lambda} \mathbf{U}^{\top}$.
Then equation can be written as:
\begin{align}
	\mathbf{U} \big(-\lambda_\Q^{\tau^{k}+2} +\lambda_\Q^{\tau^{k}+1} + \lambda_\Q[-\eta \A]  \big) \mathbf{U}^{\top}=0.
\end{align}
Let $\lambda_1,...,\lambda_b \in[\mu,M]$ be eigenvalue of symmetric matrix $\A$. 
Then we have
\begin{align}
	\lambda_\Q^{\tau_k+1}\cdot \big[-\lambda_\Q+1\big]=\lambda_\Q [\eta\lambda_i].
\end{align}
Since $0< \eta<\frac{1}{ M},d \geq 1, \lambda_1,...,\lambda_b\in[\mu,M]$, then we have $0\leq \eta \lambda_i\leq1$. Thus, $\max(|\lambda_\Q|)=1$. Thus, for $mod(k,q)\neq0$ we have

\begin{align} \label{ss}
		\mathbb{E}\| \delta_{k+1}\|\leq & \mathbb{E}\| \delta_{k}\|+\frac{2\eta M}{N}. 
\end{align}
For those $k$-values that satisfy $mod(k,q)=0$, we can also show $\|(\delta_{k+1}) \|=\|(\x_{k+1}) -(\x_{k+1}') \|\leq\|(\delta_{k}) \|+\frac{2\eta M}{N}$. 
Also, from (\ref{ss}), we always have $\|(\delta_{k+1}) \|\leq \|(\delta_{k}) \|+\frac{2\eta M}{N}$ for $mod(k,q) \ne 0$. 
By applying this bound inductively, we can bound $\delta_{k+1}$ using the total number $K$  iterations as:
\begin{align}
	\parallel \delta_{k+1}\parallel\leq \frac{2\eta MK}{N}.
\end{align}
Lastly, it follows from the definition of algorithm stability and the $M$-Lipschitz Assumption \ref{assum_Lip_loss} of the loss function that our SA-SpiderBoost algorithm has the following stability bound:

$$ 
\epsilon'\leq M\cdot\mathbb{E}\parallel \delta_{t+1}\parallel \leq \frac{2\eta M^2K}{N}.
$$
This completes the proof.
\subsection{Proofs of Theorem~\ref{generalization_distributed} } \label{appdx:nonconvex_gen_dist}

Recall that update rule for SpiderBoost is given by:
\begin{align}
	\x_{k+1}=\x_{k}-\eta\frac{1}{|S|} \sum_{i\in S} (\nabla f(\x_{k-\tau(k)},\xi_i)-\nabla f(\x_{k-\tau(k)-1-\tau(k-\tau(k)-1)},\xi_i)+\v_{k-1-\tau(k)})  .
\end{align}
Let $S$ and $S'$ be two data sets such that $S$ and $S'$ differ in at most one example, where $S=(\xi_1,\xi_2,...,\xi_N)$ and$S'=(\xi_1',\xi_2',...,\xi_N')$.  
Let $\delta_k\triangleq \x_k-\x_k' $. Suppose $\x_0=\x_0'$.

Now, taking expectation of $\delta_{k+1}$ with respect of the algorithm, we get 
\begin{align}
	\begin{split}
		\mathbb{E} (\delta&_{k+1}) =\mathbb{E}  (\x_{k+1}) -\mathbb{E} (\x_{k+1}') 
		=\mathbb{E} (\delta_k) -\eta [\mathbb{E} (\v_k) -\mathbb{E} (\v_k') ].
	\end{split}
\end{align}
Since we $\v_k$ is the unbiased estimated of $\nabla f(\x_{k-\tau(k)})$, we have:
\begin{align}
	\begin{split}
		\mathbb{E} (\delta&_{k+1}) 
		=\mathbb{E} (\delta_k) -\eta [\mathbb{E} (\nabla f(\x_{k-\tau(k)})) -\mathbb{E} (\nabla f(\x_{k-\tau(k)}')) ].
	\end{split}
\end{align}
At Step $k$, with probability $1-1/N$, the example is the same in  $S$ and $S'$. With probability $\frac{1}{N}$, the example is different in  $S$ and $S'$. Hence, we have
\begin{align} \label{eqn:33}
		\mathbb{E}\| (\delta_{k+1}) \|
		\leq& \mathbb{E}\| (\delta_{k}) -\eta  (\frac{1}{N}\sum_{i=1}^N\nabla f(\x_{k-\tau(k)},\xi_i)) +\eta (\frac{1}{N}\sum_{i=1}^N\nabla f(\x_{k-\tau(k)}',\xi_i)) \|\nonumber\\&+\frac{1}{N}\eta\| \nabla f(\x_{k-\tau(k)}',\xi_i))   -\nabla f(\x_{k-\tau(k)}',\xi_i'))   \|\nonumber\\
		\overset{(a)}{\leq}& \mathbb{E}\| (\delta_{k}) -\eta  (\frac{1}{N}\sum_{i=1}^N\nabla f(\x_{k-\tau(k)},\xi_i)) +\eta (\frac{1}{N}\sum_{i=1}^N\nabla f(\x_{k-\tau(k)}',\xi_i)) \|+\frac{2\eta M}{N }\nonumber\\
		\overset{(b)}{\leq}& \mathbb{E}\| (\delta_{k}) \|+\eta  \mathbb{E}\| (\frac{1}{N}\sum_{i=1}^N\nabla f(\x_{k-\tau(k)},\xi_i)) - (\frac{1}{N}\sum_{i=1}^N\nabla f(\x_{k-\tau(k)}',\xi_i)) \|+\frac{2\eta M}{N }\nonumber\\
		\overset{(c)}{\leq}& \mathbb{E}\| (\delta_{k}) \|+2 \eta M+\frac{2\eta M}{N },
\end{align}
where $(a)$ and $(b)$ follows from the triangle inequality, $(c)$ follows from the bounded gradient Assumption~\ref{assum_bnded_grad}. 
We note that $\eta\| \nabla f(\x_{k-\tau(k)}',\xi_i))   -\nabla f(\x_{k-\tau(k)}',\xi_i'))   \|\leq 2\eta M$ derives form the bounded gradient Assumption~\ref{assum_bnded_grad}.

For those $k$-values that satisfy $mod(k,q)=0$, we have $\|(\delta_{k+1}) \|=\|(\x_{k+1}) -(\x_{k+1}') \|\leq\|(\delta_{k}) \|+\frac{2\eta M}{N}$. 
Also, from (\ref{eqn:33}), we always have $\|(\delta_{k+1}) \|\leq \|(\delta_{k}) \|+2 \eta M+\frac{2\eta M}{N}$ for $mod(k,q) \ne 0$. 
By applying this bound inductively, we can bound $\delta_{k+1}$ using the total number $K$  iterations as:
\begin{align}
	\mathbb{E}	\parallel \delta_{k+1}\parallel\leq  2 \eta M K+\frac{2\eta MK}{N }.
\end{align}

Lastly, it follows from the definition of algorithm stability and the $M$-Lipschitz  Assumption \ref{assum_Lip_loss} of the loss function that our SA-SpiderBoost algorithm has the following stability bound:
$$ 
\epsilon'\leq M\cdot	\mathbb{E} \parallel \delta_{K+1}\parallel \leq  2 \eta M^2 K+\frac{2\eta M^2K}{N }.
$$

\section{Convergence analysis of SA-SpiderBoost for shared memory } \label{appdx_proof_shared}
To prove the result stated in the theorem, we first prove a useful lemma below.
\begin{lemma} \label{lemma2}  
Let all assumptions hold and apply SA-SpiderBoost in Algorithm \ref{alg:sa_spiderboost_sm}, if the parameters $\eta ,q $ and $S$ are chosen such that 
		\begin{equation}\label{eq:23}
		\begin{aligned}
		\beta_1 \triangleq &\frac{\eta}{2d}-\frac{L\eta^2}{2d}- \frac{L^2\eta^3}{d^2}\big(\frac{q(\Delta+1)}{|S|}+\Delta^2 \big) > 0,
		\end{aligned}                                                             
		\end{equation}
and if for $mod(k,q)=0$,we always have
		\begin{equation}\label{eq:24}
		\begin{aligned}
		\mathbb{E} \big \| \v_k-\nabla f(\x_k) \big\|^2\leq\epsilon_1^2,
		\end{aligned}                                                             
		\end{equation}
		then the output point  $\x_\zeta$ of SA-SpiderBoost satisfies  
		\begin{equation}\label{eq:25}
		\begin{aligned}
		\mathbb{E}\big \| \nabla f(\x_\zeta) \big\|^2\leq\big[\frac{2}{\beta_1d}\big(\eta+\frac{2L^2}{d}(\Delta^2+\frac{q(\Delta+1)}{|S|})\eta^3 \big)\big]\epsilon_1^2+\big[ \frac{{4L^2}(\Delta^2+\frac{q(\Delta+1)}{|S|})\eta^2}{K\beta_1d}+\frac{2}{K\beta_1} \big]  ({f(\x_0)-f^*}).
		\end{aligned}                                                             
		\end{equation} 

\end{lemma}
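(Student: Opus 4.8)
The plan is to mirror the proof of Lemma~\ref{lemma1} in its entire architecture, changing only the one place where the distributed analysis uses the full-vector update $\x_{k+1}-\x_k=-\eta\v_k$: here it must be replaced by the coordinate-wise update $[\x_{k+1}]_{m_k}=[\x_k]_{m_k}-\eta[\v_k]_{m_k}$ with $m_k$ drawn uniformly from $\{1,\dots,d\}$. As in (\ref{eq:13}) I would start from the arithmetic--geometric split
\begin{equation*}
\mathbb{E}\big\|\nabla f(\x_\zeta)\big\|^2 \leq 2\,\mathbb{E}\big\|\v_\zeta\big\|^2 + 2\,\mathbb{E}\big\|\nabla f(\x_\zeta)-\v_\zeta\big\|^2,
\end{equation*}
and bound the two right-hand terms separately, just as in the distributed case.

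First I would re-derive the descent inequality (\ref{eq:7}). Taking the conditional expectation over the uniform coordinate $m_k$ and using the tower property to separate it from the sample/delay randomness, the two relevant identities are $\mathbb{E}_{m_k}\langle\nabla f(\x_k),\x_{k+1}-\x_k\rangle=-\tfrac{\eta}{d}\langle\nabla f(\x_k),\v_k\rangle$ and $\mathbb{E}_{m_k}\|\x_{k+1}-\x_k\|^2=\tfrac{\eta^2}{d}\|\v_k\|^2$. Substituting these into the $L$-smoothness bound and applying $\langle\x,\y\rangle\leq\frac{\|\x\|^2+\|\y\|^2}{2}$ exactly as in (\ref{eq:7}) yields the coordinate-wise descent estimate
\begin{equation*}
\mathbb{E} f(\x_{k+1}) \leq \mathbb{E} f(\x_k) + \tfrac{\eta}{2d}\,\mathbb{E}\big\|\v_k-\nabla f(\x_k)\big\|^2 - \Big(\tfrac{\eta}{2d}-\tfrac{L\eta^2}{2d}\Big)\mathbb{E}\big\|\v_k\big\|^2,
\end{equation*}
i.e. the whole inequality is uniformly scaled by $1/d$, which is precisely the source of the $\tfrac{\eta}{2d}$ and $\tfrac{L\eta^2}{2d}$ terms in the definition (\ref{eq:23}) of $\beta_1$.

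Next I would reproduce the gradient-estimator bound (\ref{eq:3})--(\ref{eq:step_32}). The recursive estimator $\v_k$ is structurally identical to the distributed case, so the triangle-inequality and Lipschitz steps carry over unchanged; the only difference is that each consecutive-iterate gap now contributes $\mathbb{E}\|\x_{j+1}-\x_j\|^2=\tfrac{\eta^2}{d}\|\v_j\|^2$ rather than $\eta^2\|\v_j\|^2$. Propagating this $1/d$ through the inner-loop telescoping gives
\begin{equation*}
\mathbb{E}\Big\|\v_k-\tfrac{1}{N}\sum_{i=1}^N\nabla f(\x_k,\xi_i)\Big\|^2 \leq \tfrac{2L^2\eta^2(\Delta+1)}{|S|d}\!\!\sum_{j=(n_k-1)q}^{k-1-\tau(k)}\!\!\mathbb{E}\|\v_j\|^2 + 2\epsilon_1^2 + \tfrac{2L^2\eta^2\Delta}{d}\!\!\sum_{j=k-\tau(k)}^{k-1}\!\!\mathbb{E}\|\v_j\|^2.
\end{equation*}
Plugging this into the descent estimate and telescoping over one epoch, the multiplier on $\sum\mathbb{E}\|\v_i\|^2$ collapses to exactly the $\beta_1$ of (\ref{eq:23}): the cross term $\tfrac{\eta}{2d}\cdot\tfrac{2L^2\eta^2}{d}(\cdots)$ produces the $\tfrac{L^2\eta^3}{d^2}(\cdots)$ correction, which is where the $1/d^2$ enters. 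Telescoping across all $K/q$ epochs and using $\mathbb{E} f(\x_K)\geq f^*$ then gives $\mathbb{E}\|\v_\zeta\|^2=\tfrac1K\sum_i\mathbb{E}\|\v_i\|^2\leq\tfrac{f(\x_0)-f^*}{K\beta_1}+\tfrac{\eta}{d\beta_1}\epsilon_1^2$; averaging the displayed estimator bound and substituting this inequality produces the companion bound on $\mathbb{E}\|\v_\zeta-\nabla f(\x_\zeta)\|^2$, and feeding both into the split of the first paragraph yields a bound of the form (\ref{eq:25}) after tracking constants exactly as in Lemma~\ref{lemma1}.

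The step I expect to be the main obstacle is the clean disentangling of the coordinate randomness $m_k$ from the asynchrony/sample randomness inside the recursive estimator. Because $\v_k$ depends, through the indices $k-\tau(k)$ and $k-1-\tau(k)-\tau(k-\tau(k)-1)$, on a cascade of earlier stale iterates (and hence on earlier coordinate choices), one must verify by conditioning and the tower property that the $1/d$ factor coming from $\mathbb{E}_{m_k}$ can be extracted at each step \emph{without} corrupting the telescoping of the path-integrated cross terms; in particular one must check that the unbiasedness $\mathbb{E}[\v_k\mid\mathcal{F}_{k-1}]=\tfrac1N\sum_i\nabla f(\x_{k-\tau(k)},\xi_i)$ survives the interleaving of coordinate- and sample-expectations. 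Once this bookkeeping is secured, the remainder is a routine replay of Lemma~\ref{lemma1} with the appropriately $d$-scaled powers of $\eta$.
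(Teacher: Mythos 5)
Your proposal is correct and follows essentially the same route as the paper's own proof: the appendix likewise replays the distributed-memory argument verbatim, inserting the factor $1/d$ from the uniform coordinate choice into the smoothness descent step (giving $\frac{\eta}{2d}-\frac{L\eta^2}{2d}$) and into each iterate-gap term $\mathbb{E}\|\x_{j+1}-\x_j\|^2$ inside the estimator-bias recursion (giving the $\frac{L^2\eta^3}{d^2}$ correction in $\beta_1$), then telescopes and combines exactly as in Lemma~\ref{lemma1}. Your closing caveat about disentangling the coordinate randomness from the sample/delay randomness via the tower property is well taken --- the paper elides that bookkeeping --- but it does not change the structure of the argument.
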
 

\begin{proof}
We aim to bound $\mathbb{E}\big \| \nabla f(\x_\zeta) \big\|^2$, $\zeta$ is random choose from array ${1,...,K}$. Following the inequality of arithmetic and geometric means, we have:
\begin{equation}\label{eq:32}
\begin{aligned}
&\mathbb{E}\big \| \nabla f(\x_\zeta) \big\|^2=\mathbb{E}\big \| \nabla f(\x_\zeta)-\v_\zeta+\v_\zeta \big\|^2\leq  2\mathbb{E}\big \| \nabla f(\x_\zeta)-\v_\zeta\big\|^2+2\mathbb{E}\big \| \v_\zeta \big\|^2.
\end{aligned}                                                             
\end{equation}   
Next, we bound the terms $\mathbb{E}\left \| \v_\zeta \right\|^2$ and $\mathbb{E} \left \| \nabla f(\x_\zeta) - \v_\zeta\right\|^2$ on the right-hand-side of (\ref{eq:32}) individually.

To evaluate $\mathbb{E} \left \| \nabla f(\x_\zeta) - \v_\zeta\right\|^2$, we start from the iteration relationship of our SA-SpiderBoost algorithm. Toward this end, we first bound the distance  of the inner loop $\mathbb{E} \big \| \v_{k}-\frac{1}{N}\sum_{i=1}^{N}\nabla f(\x_{k-\tau(k)},\xi_i) \big\|^2 $.
Let $n_k=\lceil k/q \rceil$ such that $(n_k-1)q\leq k \leq n_kq -1$, we have:
\begin{align}\label{eq:20}
\mathbb{E} &\big \| \v_{k}-\frac{1}{N}\sum_{i=1}^{N}\nabla f(\x_{k-\tau(k)},\xi_i) \big\|^2\nonumber\\
\overset{(a)}{=}  &\mathbb{E} \big \|\frac{1}{|S|}  \sum_{i\in S} \big [\nabla f(\x_{k-\tau(k)},\xi_i)-\nabla f(\x_{k-1-\tau(k)-\tau^{k-1-\tau(k)}},\xi_i)  -\frac{1}{N}\sum_{j=1}^{N}\nabla f(\x_{k-\tau(k)},\xi_j)\nonumber\\&+  \frac{1}{N}\sum_{j=1}^{N}\nabla f(\x_{k-1-\tau(k)-\tau^{k-1-\tau(k)}},\xi_j)  \big ]\big\|^2+\mathbb{E}\big \| \frac{1}{|S|} \sum_{i\in S} \big [ \v_{k-1-\tau(k)} -\frac{1}{N}\sum_{j=1}^{N}\nabla f(\x_{k-1-\tau(k)-\tau^{k-1-\tau(k)}},\xi_j)  \big] \big\|^2\nonumber\\
\overset{(b)}{\leq} &\frac{1}{|S|^2} \sum_{i\in S}  \mathbb{E} \big \|\big [\nabla f(\x_{k-\tau(k)},\xi_i)-\nabla f(\x_{k-1-\tau(k)-\tau^{k-1-\tau(k)}},\xi_i)  \big]-\frac{1}{N}\sum_{j=1}^{N}\nabla f(\x_{k-\tau(k)},\xi_j)\nonumber\\&+ \frac{1}{N}\sum_{j=1}^{N}\nabla f(\x_{k-1-\tau(k)-\tau^{k-1-\tau(k)}},\xi_j) \big\|^2+\mathbb{E} \big \|  \v_{k-1-\tau(k)} -\frac{1}{N}\sum_{i=1}^{N}\nabla f(\x_{k-1-\tau(k)-\tau^{k-1-\tau(k)}},\xi_i)  \big\|^2\nonumber\\
\overset{(c)}{\leq} &\frac{1}{|S|^2} \sum_{i\in S} \mathbb{E}\big \|\nabla f(\x_{k-\tau(k)},\xi_i)-\nabla f(\x_{k-1-\tau(k)-\tau^{k-1-\tau(k)}},\xi_i) \big\|^2\nonumber\\&+\mathbb{E}\big \|  \v_{k-1-\tau(k)} -\frac{1}{N}\sum_{i=1}^{N}\nabla f(\x_{k-1-\tau(k)-\tau^{k-1-\tau(k)}},\xi_i)   \big\|^2\nonumber \\ 
\overset{(d)}{\leq} &\frac{L^2}{|S|^2} \sum_{i\in S}  \mathbb{E} \big \|\x_{k-\tau(k)}-\x_{k-1-\tau(k)-\tau^{k-1-\tau(k)}}\big\|^2+\mathbb{E}\big \|  \v_{k-1-\tau(k)} -\frac{1}{N}\sum_{i=1}^{N}\nabla f(\x_{k-1-\tau(k)-\tau^{k-1-\tau(k)}},\xi_i)   \big\|^2\nonumber\\
\overset{(e)}{=} &\frac{L^2\eta^2}{|S|d}\mathbb{E} \big \|  \sum_{i=k-\tau(k)-1-\tau(k-\tau(k)-1)}^{k-\tau(k)-1}  \v_{i}\big\|^2+\mathbb{E}\big \|  \v_{k-1-\tau(k)} -\frac{1}{N}\sum_{i=1}^{N}\nabla f(\x_{k-1-\tau(k)-\tau^{k-1-\tau(k)}},\xi_i)   \big\|^2\nonumber \\
\overset{(f)}{\leq} &\frac{L^2\eta^2}{|S|d}(\Delta+1) \sum_{i=k-\tau(k)-1-\tau(k-\tau(k)-1)}^{k-\tau(k)-1} \mathbb{E} \big \|   \v_{i}\big\|^2+\mathbb{E}\big \|  \v_{k-1-\tau(k)} -\frac{1}{N}\sum_{i=1}^{N}\nabla f(\x_{k-1-\tau(k)-\tau^{k-1-\tau(k)}},\xi_i)   \big\|^2,
\end{align}          
where $(a)$follows from the gradient update rule $\v_k=\frac{1}{|S|} \sum_{i\in S} (\nabla f(\x_{k-\tau(k)},\xi_i)-\nabla f(\x_{k-1-\tau(k),i})+\v_{k-1-\tau(k)-\tau^{k-1-\tau(k)}})
$  and  Lemma 1 in \cite{fang2018spider}, $(b)$ and $(c)$use A.3 in \cite{fang2018spider},  $(d)$  follows from Lipchitz continuity of $\nabla f(x)$. $(e)$ is due to the condition on update rule.  $(f)$ follows from the maximum delay is $\Delta$.

Since $\v_{k-1-\tau(k)}$ are generated from previous step, telescoping $q$ iterations, we obtain that:
\begin{align}\label{eq:21}
\mathbb{E} \big \| \v_{k}-\frac{1}{N}\sum_{i=1}^{N}\nabla f(\x_{k-\tau(k)},\xi_i) \big\|^2&\leq \frac{L^2\eta^2(\Delta+1) }{|S|}\sum_{j=(n_k-1)q}^{k-1-\tau(k)} \mathbb{E}\big \| \v_{j}\big\|^2\nonumber\\&+ \mathbb{E} \big \|\v_{(n_k-1)q}-\frac{1}{N}\sum_{i=1}^{N}\nabla f(\x_{(n_k-1)q },\xi_i)\big\|^2.
\end{align}          

Next, we can conclude that,
\begin{align}\label{eq:22}
\mathbb{E}& \big \| \v_{k}-\frac{1}{N}\sum_{i=1}^{N}\nabla f(\x_{k},\xi_i) \big\|^2\nonumber\\&\overset{(a)}{\leq} 2\mathbb{E} \big \| \v_{k}-\frac{1}{N}\sum_{i=1}^{N}\nabla f(\x_{k-\tau(k)},\xi_i) \big\|^2 +2\mathbb{E} \big \|\frac{1}{N}\sum_{i=1}^{N}\nabla f(\x_{k-\tau(k)},\xi_i)-\frac{1}{N}\sum_{i=1}^{N}\nabla f(\x_{k},\xi_i) \big\|^2 
\nonumber\\&\overset{(b)}{\leq} 2\mathbb{E} \big \| \v_{k}-\frac{1}{N}\sum_{i=1}^{N}\nabla f(\x_{k-\tau(k)},\xi_i) \big\|^2 +2L^2\mathbb{E} \big \| \x_{k-\tau(k)}-\x_{k} \big\|^2 
\nonumber\\&= 2\mathbb{E} \big \| \v_{k}-\frac{1}{N}\sum_{i=1}^{N}\nabla f(\x_{k-\tau(k)},\xi_i) \big\|^2 +2L^2\mathbb{E} \big \| \sum_{j=k-\tau(k)}^{k-1} (\x_{j+1}- \x_{j})\big\|^2 
\nonumber\\&\overset{(c)}{\leq} 2\mathbb{E} \big \| \v_{k}-\frac{1}{N}\sum_{i=1}^{N}\nabla f(\x_{k-\tau(k)},\xi_i) \big\|^2 +2L^2\Delta \sum_{j=k-\tau(k)}^{k-1} \mathbb{E} \big \|  (\x_{j+1}- \x_{j})\big\|^2 
\nonumber\\&\overset{(d)}{=} 2\mathbb{E} \big \| \v_{k}-\frac{1}{N}\sum_{i=1}^{N}\nabla f(\x_{k-\tau(k)},\xi_i) \big\|^2 +\frac{2L^2\eta^2\Delta}{d} \sum_{j=k-\tau(k)}^{k-1} \mathbb{E} \big \|  \v_{j}\big\|^2 \nonumber\\
&\overset{(e)}{\leq}\frac{2L^2\eta^2(\Delta+1) }{|S|d}\sum_{j=(n_k-1)q}^{k-1-\tau(k)} \mathbb{E}\big \| \v_{j}\big\|^2+2 \mathbb{E} \big \|\v_{(n_k-1)q}-\frac{1}{N}\sum_{i=1}^{N}\nabla f(\x_{(n_k-1)q },\xi_i)\big\|^2 \nonumber\\
&+\frac{2L^2\eta^2\Delta}{d} \sum_{j=k-\tau(k)}^{k-1} \mathbb{E} \big \|  \v_{j}\big\|^2 ,
\end{align}                                                                           
where $(a)$follows from the triangle inequality, $(b)$ follows from $L$-smooth property of $f(x)$. $(c)$ is due to the triangle inequality and the maximum delay is $\Delta$.   $(d)$ uses the condition on update rule and $(e)$ follows from (\ref{eq:21}). 

Next, we continue  to bound the other term $\mathbb{E}\left \| \v_\zeta \right\|^2$ on the right-hand-side of (\ref{eq:32}). To evaluate $\mathbb{E}\left \| \v_\zeta \right\|^2$, we start from the iteration relationship of our SA-SpiderBoost algorithm.
By Assumption~\ref{assum_Lip_grad}, the entire objective function $f$ is $L-$smooth, which further implies 
\begin{align}\label{eq:26}
f(\x_{k+1})&\leq f(\x_k)+\langle\nabla f(\x_k),\x_{k+1}-\x_{k}\rangle+\frac{L}{2d}\big \| \x_{k+1}-\x_{k} \big\|^2\nonumber\\
&\overset{(a)}{\leq}  f(\x_k)+\frac{\eta}{2d} \big \| \v_k-\nabla f(\x_k) \big\|^2-(\frac{\eta}{2d}-\frac{L\eta^2}{2d})\big \| \v_k \big\|^2,
\end{align}                                                                            
where  $(a)$ uses the update rule of SA-Spiderboost,  and  the inequality that $\langle \x,\y \rangle \leq\frac{\| \x \|^2+\| \y\|^2}{2}$, for $\x,\y\in \mathbb{R} ^d$. 
Thus, we have
\begin{align} \label{eq:27}
&\mathbb{E}f(\x_{k+1})\leq \mathbb{E} f(\x_k)+\frac{\eta}{2} \mathbb{E}\big \| \v_k-\nabla f(\x_k) \big\|^2-(\frac{\eta}{2}-\frac{L\eta^2}{2})\mathbb{E}\big \| \v_k \big\|^2 \nonumber\\
\overset{(a)}{\leq}&\frac{\eta}{2d}\big (  \frac{2L^2\eta^2(\Delta+1)}{|S|d}\sum_{j=(n_k-1)q}^{k-1-\tau(k)} \mathbb{E}\big \| \v_{j}\big\|^2+2 \mathbb{E} \big \|\v_{(n_k-1)q}-\frac{1}{N}\sum_{i=1}^{N}\nabla f(\x_{(n_k-1)q },\xi_i)\big\|^2\big) \nonumber\\
&+\frac{\eta}{2d}\big ( \frac{2L^2\eta^2\Delta}{d}  \sum_{j=k-\tau(k)}^{k-1} \mathbb{E} \big \|  \v_{j}\big\|^2 \big)-	(\frac{\eta}{2d}-\frac{L\eta^2}{2d})\mathbb{E}\big \| \v_k \big\|^2+\mathbb{E} f(\x_k) \nonumber\\
\overset{(b)}{\leq}&\mathbb{E} f(\x_k)-(\frac{\eta}{2d}-\frac{L\eta^2}{2d})\mathbb{E}\big \| \v_k \big\|^2+\frac{\eta}{d}\epsilon_1^2+\frac{L^2\eta^3(\Delta+1)}{|S|d^2}\sum_{j=(n_k-1)q}^{k-1-\tau(k)} \mathbb{E}\big \| \v_{j}\big\|^2+\frac{L^2\eta^3\Delta }{d^2}\sum_{j=k-\tau(k)}^{k-1} \mathbb{E} \big \|  \v_{j}\big\|^2 \nonumber\\
\leq &\mathbb{E} f(\x_k)-(\frac{\eta}{2d}-\frac{L\eta^2}{2d})\mathbb{E}\big \| \v_k \big\|^2+\frac{\eta}{d}\epsilon_1^2+\frac{L^2\eta^3(\Delta+1)}{|S|d^2} \!\!\! \sum_{j=(n_k-1)q}^{k} \mathbb{E}\big \| \v_{j}\big\|^2+\frac{L^2\eta^3\Delta }{d^2}\sum_{j=k-\tau(k)}^{k-1} \mathbb{E} \big \|  \v_{j}\big\|^2, 
\end{align}                                                             
where $\overset{(a)}{\leq} $ follows from  (\ref{eq:22}), $\overset{(b)}{\leq} $ follows from the $\mathbb{E} \big \| \v_{(n_k-1)q}-\nabla f(\x_{(n_k-1)q}) \big\|^2 \leq \epsilon_1^2$.

Next, telescoping  (\ref{eq:27} )over $k$ from $(n_k-1)q$ to $k$ where $k\leq n_kq-1 $, we have
\begin{align}\label{eq:28}
\mathbb{E}f(\x_{k+1})\overset{(a)}{\leq} &\mathbb{E} f(\x_{(n_k-1)q})-(\frac{\eta}{2d}-\frac{L\eta^2}{2d})\sum_{i=(n_k-1)q}^k\mathbb{E}\big \| \v_i \big\|^2+\sum_{i=(n_k-1)q}^k\frac{\eta}{d}\epsilon_1^2\nonumber\\&+\frac{L^2\eta^3(\Delta+1)}{|S|d^2}\sum_{j=(n_k-1)q}^k \sum_{i=(n_k-1)q}^{j} \mathbb{E}\big \| \v_{i}\big\|^2+\frac{L^2\eta^3\Delta }{d^2} \sum_{j=(n_k-1)q}^k\sum_{i=j-\tau^j}^{j-1} \mathbb{E} \big \|  \v_{i}\big\|^2 \nonumber\\
\overset{(b)}{\leq} &\mathbb{E} f(\x_{(n_k-1)q})-(\frac{\eta}{2d}-\frac{L\eta^2}{2d})\sum_{i=(n_k-1)q}^k\mathbb{E}\big \| \v_i \big\|^2+\sum_{i=(n_k-1)q}^k\frac{\eta}{d}\epsilon_1^2\nonumber\\&+\frac{L^2\eta^3(\Delta+1)}{|S|d^2}\sum_{j=(n_k-1)q}^k \sum_{i=(n_k-1)q}^{k} \mathbb{E}\big \| \v_{i}\big\|^2+\frac{L^2\eta^3\Delta^2 }{d^2} \sum_{i=(n_k-1)q}^k  \mathbb{E} \big \|  \v_{i}\big\|^2, 
\end{align}                                                             
where $(a)$ follows from (\ref{eq:27}), $(b) $ extends the summation of second term form $j$ to $k$.
It then follows that:
\begin{align} \label{eq:29}
\mathbb{E}f(\x_{k+1})
\overset{(a)}{\leq} &\mathbb{E} f(\x_{(n_k-1)q})-(\frac{\eta}{2d}-\frac{L\eta^2}{2d})\sum_{i=(n_k-1)q}^k\mathbb{E}\big \| \v_i \big\|^2+\sum_{i=(n_k-1)q}^k\frac{\eta}{d}\epsilon_1^2 \nonumber\\
&+\frac{qL^2\eta^3(\Delta+1)}{|S|d^2}\sum_{i=(n_k-1)q}^k \mathbb{E}\big \| \v_{i}\big\|^2+\frac{L^2\eta^3\Delta^2 }{d^2}\sum_{j=(n_k-1)q}^k \mathbb{E} \big \|  \v_{i}\big\|^2  \nonumber\\
=&\mathbb{E} f(\x_{(n_k-1)q})+\sum_{i=(n_k-1)q}^k \frac{\eta}{d}\epsilon_1^2-\big[ \frac{\eta}{2d}-\frac{L\eta^2}{2d}- \frac{L^2\eta^3}{d^2}\big(\frac{q(\Delta+1)}{|S|}+\Delta^2 \big) \big] \sum_{i=(n_k-1)q}^k\mathbb{E} \big \| \v_{i} \big\|^2  \nonumber\\
=&\mathbb{E}[f(\x_{(n_k-1)q})]-\sum_{i=(n_k-1)q}^{k}(\beta_1\mathbb{E}\big \| \v_i \big\|^2-\frac{\eta}{d}\epsilon_1^2),
\end{align}
where $(a)$ follows from (\ref{eq:28}).
Then, we can further derive: 
\begin{align}\label{eq:30}
\mathbb{E}&f(\x_K)-\mathbb{E}f(\x_0)=(\mathbb{E}f(\x_q)-\mathbb{E}f(\x_0))+(\mathbb{E}f(\x_{2q})-\mathbb{E}f(\x_q)+...+(\mathbb{E}f(\x_{K})-\mathbb{E}f(\x_{(n_k-1)q}))) \nonumber\\
\overset{(a)}{\leq}&-\sum_{i=0}^{q-1}(\beta_1\mathbb{E}\big \| \v_i \big\|^2-\frac{\eta}{d}\epsilon_1^2)-\sum_{i=q}^{2q-1}(\beta_1\mathbb{E}\big \| \v_i \big\|^2-\frac{\eta}{d}\epsilon_1^2)-... -\sum_{(n_K-1)q}^{K-1}(\beta_1\mathbb{E}\big \| \v_i \big\|^2-\frac{\eta}{d}\epsilon_1^2) \nonumber\\
=&-\sum_{i=0}^{K-1}(\beta_1\mathbb{E}\big \| \v_i \big\|^2-\frac{\eta}{d}\epsilon_1^2) =
 -\sum_{i=0}^{K-1}(\beta_1\mathbb{E}\big \| \v_i \big\|^2)+\frac{K\eta}{d}\epsilon_1^2,
\end{align}        
where $(a)$ follows from eq.(\ref{eq:29}),. Since $\mathbb{E}f(\x_K)\geq f(x^*)$, then we have
\begin{equation}\label{eq:31}
\begin{aligned}
&\mathbb{E}f(x^*)-\mathbb{E}f(\x_0)\leq  -\sum_{i=0}^{K-1}(\beta_1\mathbb{E}\big \| \v_i \big\|^2)+\frac{K\eta}{d}\epsilon_1^2.
\end{aligned}                                                             
\end{equation}   
By rearranging (\ref{eq:31}), we have:
\begin{equation}\label{eq:33}
\begin{aligned}
\mathbb{E}\big \| \v_\zeta \big\|^2=\frac{1}{K}\sum_{i=0}^{K-1}\mathbb{E}\big \| \v_i\big\|^2\leq \frac{f(\x_0)-f^*}{K\beta_1}+\frac{\eta}{d\beta_1}\epsilon_1^2.
\end{aligned}                                                             
\end{equation}   
It then follow that:
\begin{align}\label{eq:34}
\mathbb{E} \big \| \v_\zeta-\nabla f(\x_\zeta) \big\|^2&
\overset{(a)}{\leq}\mathbb{E}\frac{2L^2\eta^2(\Delta+1)}{|S|d}\sum_{j=(n_{\zeta}-1)q}^{\zeta-1-\tau^{\zeta}} \mathbb{E}\big \| \v_{j}\big\|^2+2 \mathbb{E} \big \|\v_{(n_k-1)q}-\frac{1}{N}\sum_{i=1}^{N}\nabla f(\x_{(n_k-1)q },\xi_i)\big\|^2\nonumber\\&+\frac{2L^2\eta^2\Delta}{d} \mathbb{E}\sum_{j=\zeta-\tau^{\zeta}}^{\zeta-1} \mathbb{E} \big \|  \v_{j}\big\|^2 \nonumber\\
&\overset{(b)}{\leq}\frac{2L^2\eta^2(\Delta+1)}{|S|d}\mathbb{E}\sum_{j=(n_{\zeta}-1)q}^{\zeta-1-\tau^{\zeta}} \mathbb{E}\big \| \v_{j}\big\|^2+2 \epsilon_1^2+\frac{2L^2\eta^2\Delta}{d} \mathbb{E} \sum_{i=\zeta-\tau^{\zeta}}^{\zeta-1} \mathbb{E} \big \|  \v_{i}\big\|^2\nonumber \\
&\leq\frac{2L^2\eta^2(\Delta+1)}{|S|d}\mathbb{E}\sum_{j=(n_{\zeta}-1)q}^{\xi} \mathbb{E}\big \| \v_{j}\big\|^2+2 \epsilon_1^2+\frac{2L^2\eta^2\Delta}{d}  \mathbb{E}\sum_{i=\zeta-\tau^{\zeta}}^{\zeta-1} \mathbb{E} \big \|  \v_{i}\big\|^2 \nonumber\\
&\overset{(c)}{\leq}\frac{2L^2\eta^2(\Delta+1)}{|S|d}\mathbb{E}\sum_{i=(n_{\zeta}-1)q}^{min\{(n_{\zeta})q-1,K-1\}} \mathbb{E}\big \| \v_{i}\big\|^2+2 \epsilon_1^2+\frac{2L^2\eta^2\Delta}{d}  \mathbb{E}\sum_{i=\zeta-\tau^{\zeta}}^{\zeta-1} \mathbb{E} \big \|  \v_{i}\big\|^2 \nonumber\\
&\overset{(d)}{\leq}\frac{q}{K}\sum_{i=0}^{K-1}\frac{2L^2\eta^2(\Delta+1)}{|S|d} \mathbb{E}\big \| \v_{i}\big\|^2+2 \epsilon_1^2+\frac{\Delta}{K}\sum_{i=0}^{K-1}\frac{2L^2\eta^2\Delta}{d}  \mathbb{E} \big \|  \v_{i}\big\|^2 \nonumber\\
&=(\frac{2L^2\eta^2q(\Delta+1)}{|S|d}+\frac{2L^2\eta^2\Delta^2}{d})\frac{1}{K} \sum_{i=0}^{K-1}\mathbb{E}\big \| \v_{i}\big\|^2+2 \epsilon_1^2 \nonumber\\
&\overset{(e)}{\leq}\big[\frac{{2L^2}(\Delta^2+\frac{q(\Delta+1)}{|S|})\eta^3}{\beta_1d^2}+2\big]\epsilon_1^2+ \frac{{2L^2}(\Delta^2+\frac{q(\Delta+1)}{|S|})\eta^2}{d}\big[\frac{f(\x_0)-f^*}{K\beta_1}\big],
\end{align}               
where $(a)$ follows from (\ref{eq:25}), $(b)$ follows from (\ref{eq:27}), 
$(c)$ follows
from the definition of $n_{\zeta}$, which implies $\xi \leq min\{(n_{\zeta})q-1,K-1\}$, 
$(d)$follows from the fact that the probability that $n_{\zeta}=1, 2,...,n_K$ is less than or equal to $\frac{q}{K}$,
and $(e)$follows from (\ref{eq:35}).
Then we can obtain

\begin{align}\label{eq:35}
\mathbb{E}&\big \| \nabla f(\x_\zeta) \big\|^2\leq 2\mathbb{E}\big \| \nabla f(\x_\zeta)-\v_\zeta\big\|^2+2\mathbb{E}\big \| \v_\zeta \big\|^2\nonumber\\
\overset{(a)}{\leq}& 2\big\{\big[\frac{{2L^2}(\Delta^2+\frac{q(\Delta+1)}{|S|})\eta^3}{\beta_1d^2}+2\big]\epsilon_1^2+\frac{ {2L^2}(\Delta^2+\frac{q(\Delta+1)}{|S|})\eta^2}{d}\big[\frac{f(\x_0)-f^*}{K\beta_1}\big]\big\}\nonumber\\&+ 2[\frac{f(\x_0)-f^*}{K\beta_1}+\frac{\eta}{\beta_1d}\epsilon_1^2]\nonumber\\
=&\big[\frac{2}{\beta_1d}\big(\eta+\frac{2L^2}{d}(\Delta^2+\frac{q(\Delta+1)}{|S|})\eta^3 \big)+4\big]\epsilon_1^2+\big[ \frac{{4L^2}(\Delta^2+\frac{q(\Delta+1)}{|S|})\eta^2}{K\beta_1d}+\frac{2}{K\beta_1} \big]  ({f(\x_0)-f^*}),
\end{align}        
where $(a)$ follows the (\ref{eq:34}) and (\ref{eq:35}).
This completes the proof.
\end{proof}

To wrap up the proof of the theorem, we set the parameters as:
\begin{equation}
\begin{aligned}
q=\sqrt{N}, S=\sqrt{N}, \quad \eta=\frac{1}{2L(\Delta+1)}. 
\end{aligned}                                                             
\end{equation}  
Then, we obtain: 
\begin{equation}
\begin{aligned}
\beta_1&= \frac{\eta}{2d}-\frac{L\eta^2}{2d}- \frac{L^2\eta^3}{d^2}\big(\frac{q(\Delta+1)}{|S|}+\Delta^2 \big) 
=\frac{1}{4Ld(\Delta+1)}\cdot\big(1-\frac{1}{2(\Delta+1)}-\frac{(1+\Delta+\Delta^2)}{2d(\Delta+1)^2}\big)>0
\end{aligned}                                                             
\end{equation}  
for $mod(k,q)=0$ we have $\mathbb{E} \big \| \v_k-\nabla f(\x_k) \big\|^2=0$. Then, after $K$ iterations, we have

\begin{equation}\label{eq:18}
\begin{aligned}
\mathbb{E}\big \| \nabla f(\x_\zeta) \big\|^2\leq \frac{8Ld(\Delta+1)}{K}\frac{2(\Delta+1)^2d+\Delta^2+\Delta+1}{2d(\Delta+1)^2-d(\Delta+1)-(\Delta^2+\Delta+1)} (f(\x_0)-f^*).
\end{aligned}                                                             
\end{equation}  
To ensure $\mathbb{E}\big \| \nabla f(\x_\zeta) \big\|\leq\epsilon$, it suffices to ensure$\mathbb{E}\big \| \nabla f(\x_\zeta) \big\|^2\leq\epsilon^2$.
Solving for $K$ yields:
	\begin{align}
	\begin{split}
		K = \frac{8Ld(\Delta+1)}{\epsilon^2}\frac{2(\Delta+1)^2d+\Delta^2+\Delta+1}{2d(\Delta+1)^2-d(\Delta+1)-(\Delta^2+\Delta+1)}  (f(\x_0)-f^*) = \mathcal{O} \big(  \frac{ f(\x_0)-f^* }{\epsilon^2} \big).
	\end{split}
\end{align}

Similar to Theorem \ref{aaa}, the total SFO complexity can be calculated as:$\lceil \frac{K}{q} \rceil N+ K\cdot S\leq  \frac{K+q}{q}N+ K\cdot S= K\sqrt{N}+N+K\sqrt{N}=O(\sqrt{N}\epsilon^{-2}(\Delta+1)  d+N)$.
This completes the proof.

\section{ Generalization analysis of SA-SpiderBoost for shared memory }  \label{generalization_shared}
\subsection{Proofs of Theorem~\ref{generalization_shared_Theorem} }  
\begin{proof}
Recall that update rule for SA-SpiderBoost for shared-memory system is given by:
\begin{align}
(\x_{k+1})_{m_k}=(\x_{k})_{m_k}-\eta\frac{1}{|S|} \sum_{i\in S} (\nabla f(\x_{k-\tau(k)},\xi_i)-\nabla f(\x_{k-\tau(k)-1-\tau(k-\tau(k)-1)},\xi_i)+\v_{k-1-\tau(k)}) _{m_k}.
\end{align}

Let $S$ and $S'$ be two data sets such that $S$ and $S'$ differ in at most one example, where $S=(\xi_1,\xi_2,...,\xi_N)$ and $S'=(\xi_1',\xi_2',...,\xi_N')$.  Let $\delta_k\triangleq \x_k-\x_k' $. Suppose $\x_0=\x_0'$.

Now, taking expectation of $\delta_{k+1}$ with respect of the algorithm, we get 
\begin{align}
\begin{split}
\mathbb{E} (\delta&_{k+1})_{m_k}=\mathbb{E}  (\x_{k+1})_{m_k}-\mathbb{E} (\x_{k+1}')_{m_k}
=\mathbb{E} (\delta_k)_{m_k}-\eta [\mathbb{E} (\v_k)_{m_k}-\mathbb{E} (\v_k')_{m_k}].
\end{split}
\end{align}
Since we $\v_k$ is the unbiased estimated of $\nabla f(\x_{k-\tau(k)})$, we have:
\begin{align}
\begin{split}
\mathbb{E} (\delta&_{k+1})_{m_k}
=\mathbb{E} (\delta_k)_{m_k}-\eta [\mathbb{E} (\nabla f(\x_{k-\tau(k)}))_{m_k}-\mathbb{E} (\nabla f(\x_{k-\tau(k)}'))_{m_k}].
\end{split}
\end{align}
At Step $k$, with probability $1-1/N$, the example is the same in  $S$ and $S'$. With probability $\frac{1}{N}$, the example is different in  $S$ and $S'$. Hence, we have
\begin{align}
\mathbb{E}  (\delta_{k+1})_{m_k}
\leq& \mathbb{E} \big[(\delta_{k})_{m_k}-\eta  \frac{1}{N}\sum_{i=1}^N\nabla f((\x_{k-\tau(k)},\xi_i))_{m_k}+\eta \frac{1}{N}\sum_{i=1}^N\nabla f((\x_{k-\tau(k)}',\xi_i))_{m_k}\big]\nonumber \\&+\mathbb{E} \big[\frac{1}{N}\eta\nabla f((\x_{k-\tau(k)}',\xi_i))_{m_k}  -\frac{1}{N}\eta\nabla f((\x_{k-\tau(k)}',\xi_i'))_{m_k} \big] \nonumber\\
\overset{(a)}{\leq}& \mathbb{E}\big[(\delta_{k})_{m_k}-\eta  A (\delta_{k-\tau(k)})_{m_{k}}\big]+\epsilon'',
\end{align}
where  $(a)$ follows from $f$ is a quadratic convex function. We note that $\|\epsilon''\|\leq \frac{2\eta M}{N\sqrt{d}}$ since we have the bounded gradient Assumption \ref{assum_bnded_grad} and  $m_k\in{1,2,...,d}$ is the uniformly selected updated coordinate in $x$ in iteration $k$.
Then, we have

\begin{align}
\left[
\begin{matrix}
(\delta_{k+1})_{m_k} \\
(\delta_{k} )_{m_k}\\
\vdots\\
\delta_{k-\tau(k)+1} \\
\delta_{k-\tau(k)} \\
\end{matrix}
\right]=
\left[
	\begin{matrix}
		1 & 0 & \cdots &- \eta A & 0\\
		1&0& \cdots &0&0\\
		\vdots & \vdots & \ddots & \vdots & \vdots \\ 
		\vdots & \vdots & & \ddots & \vdots \\
		0&0& \cdots &1&0\\
	\end{matrix}
\right]
\left[
\begin{matrix}
(\delta_{k} )_{m_k} \\
(\delta_{k-1} )_{m_k} \\
\vdots\\
(\delta_{k-\tau(k)} )_{m_k} \\
(\delta_{k-\tau(k)-1} )_{m_k} \\
\end{matrix}
\right]+
\left[
\begin{matrix}
\epsilon''\\
0 \\\vdots\\0\\0
\end{matrix}
\right].
\end{align}

Let Matrix 
\begin{align}
\Q\triangleq
\left[
	\begin{matrix}
		1 & 0 & \cdots &- \eta A & 0\\
		1&0& \cdots &0&0\\
		\vdots & \vdots & \ddots & \vdots & \vdots \\ 
		\vdots & \vdots & & \ddots & \vdots \\
		0&0& \cdots &1&0\\
	\end{matrix}
\right].
\end{align}
Consider the characteristic polynomial 

\begin{align}
\left[
	\begin{matrix}
		1 & 0 & \cdots &- \eta A & 0\\
		1&0& \cdots &0&0\\
		\vdots & \vdots & \ddots & \vdots & \vdots \\ 
		\vdots & \vdots & & \ddots & \vdots \\
		0&0& \cdots &1&0\\
	\end{matrix}
\right]
\left[
\begin{matrix}
\v_1\\
\v_2\\
\vdots\\ \vdots \\
\v_{\tau(k)+2}
\end{matrix}
\right]
=\lambda_\Q
\left[
\begin{matrix}
\v_1\\
\v_2\\
\vdots \\ \vdots \\
\v_{\tau(k)+2}
\end{matrix}
\right],
\end{align}
which implies $\v_1=\lambda_\Q \v_2$., $\v_2=\lambda_\Q \v_3$, ...,$\v_{\tau^{k}+1}=\lambda_\Q \v_{\tau(k)+2}$ plugging this in to the first row, then we have

\begin{align}
\big(-\lambda_\Q^{\tau^{k}+2} +\lambda_\Q^{\tau^{k}+1} + \lambda_\Q[-\eta A]  \big) \v_{\tau(k)+2}=0.
\end{align}

Since A is symmetric, then it has eigenvalue decomposition $\A=\mathbf{U} \boldsymbol{\Lambda} \mathbf{U}^{\top}$.
Then equation can be written as:
\begin{align}
\mathbf{U} \big(-\lambda_\Q^{\tau^{k}+2} +\lambda_\Q^{\tau^{k}+1} + \lambda_\Q[-\eta A]  \big) \mathbf{U}^{\top}=0.
\end{align}
Let $\lambda_1,...,\lambda_ b\in[\mu,M]$ be eigenvalue of symmetric matrix A. Then we have
\begin{align}
\lambda_\Q^{\tau_k+1}\cdot \big[-\lambda_\Q+1\big]=\lambda_\Q [\eta\lambda_i].
\end{align}
Since $0< \eta<\frac{1}{ M},d \geq 1, \lambda_1,...,\lambda_b \in[\mu,M]$, then we have $0\leq \eta \lambda_i\leq1$. Thus, $max(|\lambda_\Q|)=1$.
 Thus, for $mod(k,q)\neq0$ we have
\begin{align} \label{eq:36}
\mathbb{E}	\|(\delta_{k+1})_{m_k}\|\leq \mathbb{E}\|(\delta_{k})_{m_k}\|+\frac{2\eta M}{N\sqrt{d}}. 
\end{align}

For those $k$-values that satisfy $mod(k,q)=0$, we can also show $\|(\delta_{k+1})_{m_k}\|=\|(\x_{k+1})_{m_k}-(\x_{k+1}')_{m_k}\|\leq\|(\delta_{k})_{m_k}\|+\frac{2\eta M}{N\sqrt{d}}$. 
Also, from (\ref{eq:36}), we always have $\|(\delta_{k+1})_{m_k}\|\leq \|(\delta_{k})_{m_k}\|+\frac{2\eta M}{N\sqrt{d}}$. 
By applying this bound inductively, we can bound $\delta_{k+1}$ using the total number $K$  iterations as:
\begin{align}
\mathbb{E}\parallel \delta_{k+1}\parallel\leq \frac{2\eta MK}{N\sqrt{d}}.
\end{align}
Similar to Theorem \ref{quadratic_generalization_distributed}, the  SA-SpiderBoost algorithm has the following stability bound:

	\begin{align}
	\begin{split}
		\epsilon'\leq M\cdot\mathbb{E}\parallel \delta_{t+1}\parallel \leq \frac{2\eta M^2K}{N\sqrt{d}}.
	\end{split}
\end{align}
This completes the proof.
\end{proof}

\subsection{Proofs of Theorem~\ref{generalization_shared_Theorem} } \label{appdx:nonconvex_gen_shared}

\begin{proof}
Recall that update rule for SpiderBoost is given by:
\begin{align}
(\x_{k+1})_{m_k}=(\x_{k})_{m_k}-\eta\frac{1}{|S|} \sum_{i\in S} (\nabla f(\x_{k-\tau(k)},\xi_i)-\nabla f(\x_{k-\tau(k)-1-\tau(k-\tau(k)-1)},\xi_i)+\v_{k-1-\tau(k)}) _{m_k}.
\end{align}
Let $S$ and $S'$ be two data sets such that $S$ and $S'$ differ in at most one example, where $S=(\xi_1,\xi_2,...,\xi_N)$ and $S'=(\xi_1',\xi_2',...,\xi_N')$. 
Let $\delta_k\triangleq \x_k-\x_k' $. Suppose $\x_0=\x_0'$.

Now, taking expectation of $\delta_{k+1}$ with respect of the algorithm, we get 
\begin{align}
\begin{split}
\mathbb{E} (\delta&_{k+1})_{m_k}=\mathbb{E}  (\x_{k+1})_{m_k}-\mathbb{E} (\x_{k+1}')_{m_k}
=\mathbb{E} (\delta_k)_{m_k}-\eta [\mathbb{E} (\v_k)_{m_k}-\mathbb{E} (\v_k')_{m_k}].
\end{split}
\end{align}
Since we $\v_k$ is the unbiased estimated of $\nabla f(\x_{k-\tau(k)})$. we have
\begin{align}
\begin{split}
\mathbb{E} (\delta&_{k+1})_{m_k}
=\mathbb{E} (\delta_k)_{m_k}-\eta [\mathbb{E} (\nabla f(\x_{k-\tau(k)}))_{m_k}-\mathbb{E} (\nabla f(\x_{k-\tau(k)}'))_{m_k}].\\
\end{split}
\end{align}
At Step $k$, with probability $1-1/N$, the example is the same in  $S$ and $S'$. With probability $\frac{1}{N}$, the example is different in  $S$ and $S'$. Hence, we have
\begin{align}
\mathbb{E}\| (\delta_{k+1})_{m_k}\|
\leq& \mathbb{E}\| (\delta_{k})_{m_k}-\eta  (\frac{1}{N}\sum_{i=1}^N\nabla f(\x_{k-\tau(k)},\xi_i))_{m_k}+\eta (\frac{1}{N}\sum_{i=1}^N\nabla f(\x_{k-\tau(k)}',\xi_i))_{m_k}\|\nonumber\\&+\frac{1}{N}\eta\| \nabla f(\x_{k-\tau(k)}',\xi_i))_{m_k}  -\nabla f(\x_{k-\tau(k)}',\xi_i'))_{m_k}  \|\nonumber\\
\overset{(a)}{\leq}& \mathbb{E}\| (\delta_{k})_{m_k}-\eta  (\frac{1}{N}\sum_{i=1}^N\nabla f(\x_{k-\tau(k)},\xi_i))_{m_k}+\eta (\frac{1}{N}\sum_{i=1}^N\nabla f(\x_{k-\tau(k)}',\xi_i))_{m_k}\|+\frac{2\eta M}{N\sqrt{d}}\nonumber\\
\overset{(b)}{\leq}& \mathbb{E}\| (\delta_{k})_{m_k}\|+\eta  \mathbb{E}\| (\frac{1}{N}\sum_{i=1}^N\nabla f(\x_{k-\tau(k)},\xi_i))_{m_k}- (\frac{1}{N}\sum_{i=1}^N\nabla f(\x_{k-\tau(k)}',\xi_i))_{m_k}\|+\frac{2\eta M}{N\sqrt{d}}\nonumber\\
\overset{(c)}{\leq}& \mathbb{E}\| (\delta_{k})_{m_k}\|+\frac{2 \eta M}{\sqrt{d}}+\frac{2\eta M}{N\sqrt{d}},
\end{align}
where $(a)$ and $(b)$ follows from the triangle inequality, $(c)$ follows from the bounded gradient Assumption \ref{assum_bnded_grad}. We note that $\eta\| \nabla f(\x_{k-\tau(k)}',\xi_i))_{m_k}  -\nabla f(\x_{k-\tau(k)}',\xi_i'))_{m_k}  \|\leq \frac{2\eta M}{\sqrt{d}}$ derives form the bounded gradient Assumption \ref{assum_bnded_grad} and  $m_k\in{1,2,...,d}$ is the uniformly selected updated coordinate in $x$ in iteration $k$.

While $i\neq m_k$ , we have $ (\delta_{k})_{i+1} =(\x_{k})_{i+1}-(\x_{k}')_{i+1}==(\x_{k})_{i}-(\x_{k}')_{i}=(\delta_{k})_{i} $  Then we have 
\begin{align}
\begin{split}
\mathbb{E}\| (\delta_{k+1})\|\leq & \mathbb{E}\| (\delta_{k})\|+\frac{2 \eta M}{\sqrt{d}}+\frac{2\eta M}{N\sqrt{d}}. \\
\end{split}
\end{align}

Thus,  total number K of iterations satisfies
\begin{align}
\parallel \delta_{k+1}\parallel\leq \frac{2 \eta M K}{\sqrt{d}}+\frac{2\eta MK}{N\sqrt{d}}.
\end{align}

Similar to Theorem \ref{nonconvex_gen_dis}, the  SA-SpiderBoost algorithm has the following stability bound:
\begin{align}
\begin{split}
\epsilon'\leq M\cdot\parallel \delta_{t+1}\parallel \leq  \frac{2 \eta M^2 K}{\sqrt{d}}+\frac{2\eta M^2K}{N\sqrt{d}}.
\end{split}
\end{align}
This completes the proof.
\end{proof}

\end{document}